 \newtheorem{definition}{Definition}
 \newtheorem{remark}{Remark}
 \newtheorem{proposition}{Proposition}
 \newtheorem{theorem}{Theorem}
 \newtheorem{lemma}{Lemma}
 \newtheorem{corollary}{Corollary}
\newcommand{\X}{\mathcal{X}}
\newcommand{\Y}{\mathcal{Y}}
\newcommand{\indi}{\mathbbm{1}}
\newcommand{\Gds}{G_{\Delta}^*}
\DeclareMathOperator*{\argmax}{arg\,max}
\newcommand{\lp}{\left(}
\newcommand{\rp}{\right)}
\newcommand{\lb}{\left [}
\newcommand{\rb}{\right]}
\newcommand{\gd}{\gamma_{\delta}}  
\newcommand{\mbb}[1]{\mathbb{#1}}
\newcommand{\xhi}{x_{h,i}}
\newcommand{\xhit}{x_{h_t,i_t}}
\newcommand{\ut}{u_t(x_{h,i})}
\newcommand{\lt}{l_t(x_{h,i})}
\newcommand{\Xtu}{\mathcal{X}_t^{(u)}}
\newcommand{\Xtc}{\mathcal{X}_t^{(c)}}
\newcommand{\Xtd}{\mathcal{X}_t^{(d)}}
\newcommand{\It}[1]{I_t^{(#1)}}
\newcommand{\Et}[1]{E^{(t)}_{(#1)}}
\newcommand{\xtj}[1]{x^{(t)}_{(#1)}}
\newcommand{\Xhi}{\X_{h,i}}
\newcommand{\hgone}{\hat{\gamma}_1^{(t)}}
\newcommand{\hgtwo}{\hat{\gamma}_2^{(t)}}
\newcommand{\gls}{G_\lambda^*} 
\newcommand{\pr}{\mathbb{P}}
\newcommand{\mc}[1]{\mathcal{#1}}
\newcommand{\indicator}{\mathbbm{1}}
\newcommand{\tbf}[1]{\textbf{#1}}
\newcommand{\rej}{\Delta}
\newcommand{\ubar}[1]{\text{\b{$#1$}}}
\newcommand{\heub}{\hat{\bar{\eta }}}
\newcommand{\helb}{\hat{\ubar{\eta}}}
\title{Active Learning for Binary Classification with Abstention}
    \author{Shubhanshu Shekhar \\ \texttt{shshekha@eng.ucsd.edu} \and 
        Mohammad Ghavamzadeh \\ \texttt{mohammad.ghavamzadeh@inria.fr} \and 
        Tara Javidi \\ \texttt{tjavidi@eng.ucsd.edu}}
        \date{}
\begin{document}

\maketitle

\begin{abstract}

We construct and analyze active learning algorithms for the problem of binary classification with abstention. We consider three abstention settings: \emph{fixed-cost} and two variants of \emph{bounded-rate} abstention, and for each of them propose an active learning algorithm. All the proposed algorithms can work in the most commonly used active learning models, i.e., \emph{membership-query}, \emph{pool-based}, and \emph{stream-based} sampling. 
We obtain upper-bounds on the excess risk of our algorithms in a general non-parametric framework, and establish their minimax near-optimality by deriving matching lower-bounds. Since our algorithms rely on the knowledge of some smoothness parameters of the regression function, we then describe a new strategy to adapt to these unknown parameters in a data-driven manner. 
Since the worst case computational complexity of our proposed algorithms increases exponentially with the dimension of the input space, we conclude the paper with a computationally efficient variant of our algorithm whose computational complexity has a polynomial dependence over a smaller but rich class of learning problems.
\end{abstract}


\section{Introduction}
\label{sec:introduction}

We consider the problem of binary classification in which the learner has  an additional provision of abstaining from declaring a label. This problem models several practical scenarios in which it is preferable to withhold a decision, perhaps at the cost of some additional experimentation, instead of making an incorrect decision and incurring much higher costs. A canonical application of this problem is in automated medical diagnostic systems~\citep{rubegni2002automated}, where classifiers which defer to a human expert on uncertain inputs are more desirable than classifiers that always make a decision. Other key applications include dialog systems and detecting harmful contents on the web.

Several existing works in the literature, such as \citet{castro2008minimax, dasgupta2006coarse}, have demonstrated the benefits of active learning (under certain conditions) in standard binary classification. However, in the case of classification with abstention, the design of active learning algorithms and their comparison with their passive counterparts have largely been unexplored. In this paper, we aim to fill this gap in the literature. 
More specifically, we design active learning algorithms for  classification with abstention in three different settings. \tbf{Setting~1} is the \emph{fixed-cost} setting, in which every usage of the abstain option results in a known cost $\lambda \in (0,1/2)$.  \tbf{Setting~2} is the \emph{bounded-rate} with ``known" input marginal ($P_X$) setting. This provides a smooth transition from Setting~1 to Setting~3, and allows us to demonstrate the key algorithmic changes in this transition. \tbf{ Setting~3} is the \emph{bounded-rate} with ``unknown" marginal ($P_X$) setting. Here, the algorithm has the option to request $m$ additional unlabelled samples, so long as $m$  grows only polynomially with  the label budget $n$. The fixed-cost setting is suitable for problems where a precise cost can be assigned to additional experimentation due to using the abstain option. In applications such as medical diagnostics, where the bottleneck is the processing speed of the human expert \citep{pietraszek2005optimizing}, the bounded-rate framework is more natural.

\tbf{Prior Work:} \cite{chow1957optimum} studied the problem of passive learning with abstention and derived the Bayes optimal classifier for both fixed-cost and bounded-rate settings (under certain continuity assumptions).~\citet{Chow1970:Optimum} further analyzed the trade-off between error rate and rejection rate. Recently, a collection of papers have revisited this problem in the fixed-cost setting.~\citet{herbei2006classification} obtained convergence rates for classifiers in a non-parametric framework similar to our paper.~\citet{bartlett2008classification} and~\citet{yuan2010classification} studied convex surrogate loss functions for this problem and obtained bounds on the excess risk of empirical risk minimization based classifiers.~\citet{wegkamp2007lasso} and~\citet{wegkamp2011support} studied an $\ell_1$-regularized version of this problem.~\citet{cortes2016learning} introduced a new framework which involved learning a pair of functions and proposed and analyzed convex surrogate loss functions. The problem of binary classification with a \emph{bounded-rate} of abstention has also been studied, albeit less extensively.~\citet{pietraszek2005optimizing} proposed a method to construct abstaining classifiers using ROC analysis.~\citet{denis2015consistency} re-derived the Bayes optimal classifier for the bounded rate setting under the same assumptions as~\citet{chow1957optimum}. They further proposed a general plug-in strategy for constructing abstaining classifiers in a semi-supervised setting, and obtained an upper bound on the excess risk. 

\tbf{Contributions:} For each of the three abstention setting mentioned earlier, we propose an algorithm that can work with three common active learning models~\citep[\S~2]{settles2009active}: {\em membership query}, {\em pool-based}, and {\em stream-based}. After describing the algorithms, we obtain upper-bounds on their excess risk in a general non-parametric framework with mild assumptions on the joint distribution of input features and labels (Section~\ref{sec:algorithms}). The obtained rates compare favorably with the existing results in the passive setting thus characterizing the gains associated with active learning (see Section~\ref{sec:discussion} for a discussion). Since our proposed algorithms require knowledge of certain smoothness parameters, in Section~\ref{sec:adaptivity}, we propose a new adaptive scheme that adjusts to the unknown smoothness terms in a data driven manner. In Section~\ref{sec:lower}, we derive lower-bounds on the excess risk for both fixed cost and bounded rate settings to establish the minimax near-optimality of our algorithms. Finally, we conclude in  Section~\ref{sec:practical} by describing a computationally feasible version of our algorithm for a restricted but rich class of problems. 








\section{Preliminaries}
\label{sec:preliminaries}
Let $\X$ denote the input space and $\Y = \{0,1\}$ denote the set of labels to be assigned to points in $\X$. We assume\footnote{This is to simplify the presentation; our work can be readily extended to any compact metric space $(\X, d)$ with finite metric dimension.} that $\X = [0,1]^D$ and $d$ is the Euclidean metric on $\X$, i.e.,~for all $x,x'\in\X$,~$d(x,x'):=\sqrt{\sum_{i=1}^D(x_i-x'_i)^2}$.
A binary classification problem is completely specified by $P_{XY}$, i.e.,~the joint distribution of the input-label random variables. Equivalently, it can  also be represented in terms of the marginal over the input space, $P_X$, and the regression function $\eta(x) \coloneqq P_{Y|X}\lp Y=1 \mid X=x\rp$. 
A (randomized) abstaining classifier is defined as  a mapping  $g:\X \mapsto \mathcal{P}\lp \Y_1\rp$, where $\Y_1 =\Y \cup \{\rej\}$, the symbol $\rej$ represents the option of the classifier to abstain from declaring a label, and $\mathcal{P}(\Y_1 )$ represents the set of probability distributions on $\Y_1$. Such a classifier $g$ comprises of three functions $g_i:\X \to [0,1]$, for $i \in \Y_1$, satisfying $\sum_{i \in \Y_1} g_i(x) = 1$, for each $x \in \X$. A classifier $g$ is called \emph{deterministic} if the functions $g_i$ take values in the set $\{0,1\}$. Every deterministic classifier $g$ partitions the input set $\X$ into three disjoint sets $(G_{0},G_1,G_{\rej})$. 

Two common abstention models considered in the literature are:
\begin{itemize}[leftmargin=*]
\item \tbf{Fixed Cost}, in which the abstain option can be employed with a fixed cost of $\lambda \in (0,1/2)$. In this setting, the classification risk is defined as $l_{\lambda}(g,x,y) \coloneqq \indi_{\{g(x)\neq \rej\}}\indi_{\{g(x) \neq y\}} + \lambda \indi_{\{g(x)=\rej\}}$, and the classification problem is stated as 
\begin{equation}
\label{fixed_cost_problem}
  \min_g  R_\lambda(g) = \mathbb{E}[l_{\lambda}(g,X,Y)]= P_{XY}\big( g(X)\neq Y\ , g(X) \neq \rej \big) + \lambda P_X\big(g(X) = \rej\big).
  \end{equation} 
The Bayes optimal classifier is defined as $g^*_\lambda(x) = 1$, $0$, or $\rej$, depending on whether $1-\eta(x)$, $\eta(x)$, or $\lambda$ is the smallest. 

\item \tbf{Bounded-Rate}, in which the classifier can abstain up to a fraction $\delta \in (0,1)$ of the input samples. In this setting, we define the misclassification risk of a classifier $g$ as 
    $R(g) \coloneqq P_{XY}\big( g(X)\neq Y\ , g(X) \neq \rej \big)$,  
and state the classification problem as 
\begin{equation}
\begin{aligned}
 \underset{g}{\text{$\min$}} \;\;\; R(g), \qquad\qquad \text{subject to} \quad P_X\big(g(X) = \rej\big) \leq \delta.
\end{aligned}
\label{bayes_optimal_delta}
\end{equation}
The Bayes optimal classifier for~\eqref{bayes_optimal_delta} is in general a randomized classifier. However, under some continuity assumptions on the joint distribution $P_{XY}$, it is again of a threshold type, $g^*_\delta(x) = 1$, $0$, or $\rej$, depending on whether $1-\eta(x)$, $\eta(x)$, or $\gd$ is minimum, where $\gd \coloneqq \sup\{\gamma\geq 0\ \mid\ P_X(|\eta(X)-1/2|\leq \gamma)\leq \delta \}$. 
\end{itemize}

The main difference between~\eqref{fixed_cost_problem} and~\eqref{bayes_optimal_delta} is that in the fixed cost setting, the threshold levels are known beforehand, while in the bounded rate of abstention setting, the mapping $\delta \mapsto \gd$ is not known, and in general is quite complex. In order to construct a classifier that satisfies the constraint in \eqref{bayes_optimal_delta}, we need some information about the marginal $P_X$. Accordingly, we consider two variants of the bounded-rate setting: (i) the marginal $P_X$ is completely known to the learner, and (ii) $P_X$ is not known, and the  learner can request a limited number (polynomial in query budget $n$) of unlabelled samples  to estimate the measure of any set of interest. 



\vspace{-1em}
\paragraph{Active learning models:}
For every abstention model mentioned above, we propose active learning algorithms that can work in three commonly used active learning settings~\citep[\S~2]{settles2009active}: (i) \emph{membership query synthesis}, (ii) \emph{pool-based}, and (iii) \emph{stream-based}. Membership query synthesis requires the strongest query model, in which the learner can request labels at any point of the input space. 
A slightly weaker version of this model is the pool-based setting, in which the learner is provided with a pool of unlabelled samples and must request labels of a subset of the pool. Finally, in the stream-based setting, the learner receives a stream of samples and must decide whether to request a label or discard the sample.

\subsection{Definitions and Assumptions}
\label{subsec:definitions}

To construct our classifier, we will require a hierarchical sequence of partitions of the input space, called the tree of partitions~\citep{bubeck2011x,munos2014bandits}. 

\begin{definition}
\label{definition:tree}
A sequence of subsets $\{ \X_h\}_{h \geq 0}$ of $\X$ are said to form a \emph{tree of partitions} of $\X$, if they satisfy the following properties: \textbf{(i)} $|\X_h| = 2^h$ and we denote the elements of $\X_h$ by $x_{h,i}$, for $1 \leq i \leq 2^h$, 
\textbf{(ii)} for every $\xhi \in \X_h$, we denote by $\X_{h,i}$, the \emph{cell} associated with $\xhi$, which is defined as $\X_{h,i} \coloneqq \{ x \in \X \mid \, d(x, \xhi) \leq d(x, x_{h,j}),\;\forall j \neq i\}$, where ties are broken in an arbitrary but deterministic manner, and \textbf{(iii)} there exist constants $0<v_2\leq 1 \leq v_1$ and $\rho \in (0,1)$ such that for all $h$ and $i$, we have $B(\xhi, v_2\rho^h) \subset \X_{h,i} \subset B(\xhi, v_1\rho^h)$, where $B(x,a) \coloneqq \{ x' \in \X \ \mid \ d(x,x') < a \}$ is the open ball in $\X$ centered at $x$ with radius $a$. 
\end{definition}

\begin{remark}
\label{param-select0}
For the metric space $(\X,d)$ considered in our paper, i.e.,~$\X=[0,1]^D$ and $d$ being the Euclidean metric, the cells $\Xhi$ are $D$ dimensional rectangles. Thus, a suitable choice of parameter values for our algorithms are $\rho=2^{-1/D}$, $v_1=2\sqrt{D}$, and $v_2=1/2$.
\end{remark}

Next, we define the dimensionality of the region of the input space at which the regression function $\eta(\cdot)$ is close to some threshold value $\lambda$. 

\begin{definition}
\label{def:dimension1}
For a function $\zeta:[0,\infty) \mapsto [0,\infty)$ and a threshold $\lambda \in (0,1/2)$, we define the near-$\lambda$ dimension associated with $(\X, d)$ and the regression function $\eta(\cdot)$ as 
\begin{equation}
    \label{eq:dimension1}
    D_{\lambda}\lp \zeta\rp \coloneqq \inf \big\{a\geq 0\ \mid\ \ \exists C>0: \  M\big(\X_\lambda\big(\zeta(r)\big), r\big) \leq Cr^{-a}, \; \forall r>0\big\}, 
\end{equation}
where $\X_{\lambda}\big(\zeta(r)\big) \coloneqq \big\{x \in \X \ \mid \ |\eta(x) - \lambda| \leq \zeta(r)\big\}$ and $M(S,r)$ is the $r$ packing number of $S\subseteq \X$.
\end{definition}

The above definition is motivated by similar definitions used in the bandit literature such as the \emph{near-optimality dimension} of~\citet{bubeck2011x} and the \emph{zooming dimension} of~\citet{kleinberg2013bandits}. For the case of $\X=[0,1]^D$ considered in this paper, the term $D_{\lambda}(\zeta)$ must be no greater than $D$, i.e.,~$D_{\lambda}(\zeta) \leq D$. This is because $\X_{\lambda}\big(\zeta(r)\big) \subset \X$, for all $r>0$, and there exists a constant $C_D<\infty$, such that $M(\X,r) \leq C_Dr^{-D}$, for all $r>0$. 

\begin{remark}
\label{remark:dimension}
We will use an instance of near-$\lambda$ dimension for stating our results defined as $\tilde{D} = \max_{j=1,2}\{ \tilde{D}_j\}$, where $\tilde{D}_j \coloneqq D_{1/2 + (-1)^j(1/2-\lambda)}\lp \zeta_1 \rp$ and $\zeta_1(r) = 12(L_1v_1/v_2)^\beta r^\beta$, for $r>0$. 
\end{remark}

\vspace{-1em}
\paragraph{Assumptions:} We now state the assumptions required for the analysis of our classifiers: 

\begin{description}
\vspace{-0.6em}
\item [(MA)\label{assump:margin}]  
The joint distribution $P_{XY}$ of the input-label pair satisfies 
the {\em margin assumption} with parameters $C_0 > 0$ and $\alpha_0 \geq 0$, for $\gamma$ in the set $\{1/2-\gd, 1/2 + \gd \}$, which means that for any $0<t\leq 1$, we have $P_X\lp |\eta(X) - \gamma| \leq t\rp \leq C_0t^{\alpha_0}$, for $\gamma \in \{1/2-\gd, 1/2+\gd\}$.     

\item [(H\"O)\label{assump:holder}]
The regression function $\eta$ is H\"older continuous with parameters $L>0$ and $0<\beta \leq 1$, i.e.,~for all $x_1, x_2 \in \lp \X, d\rp$, we have $|\eta(x_1) - \eta(x_2)| \leq L\d(x_1,x_2)^{\beta}$.

\item [(DE)\label{assump:detect}]
For the values of $\gamma$ in the same set as in (MA), we define the {\em detectability assumption} with parameters $C_1>0$ and $\alpha_1\geq \alpha_0$ as $P_X\lp |\eta(X) - \gamma| \leq t\rp \geq C_1t^{\alpha_1}$, for any $0<t\leq 1$.


\end{description}

The (MA) and (H\"O) assumptions are quite standard in the nonparametric learning literature~\citep{herbei2006classification, minsker2012plug}. The (DE) assumption, which is only required in the bounded-rate setting, has also been employed in several prior works such as~\citet{castro2008minimax, tong2013plug}.
A detailed discussion of these assumptions is presented in Appendix~\ref{appendix:assumptions}

\section{Active Learning Algorithms}
\label{sec:algorithms}

We consider three settings for the problem of binary classification with abstention in this paper. For each setting, we propose an active learning algorithm and prove an upper-bound on its excess risk. 

The algorithm for Setting~1 provides us with the general template which is also followed in the other two settings with some additional complexity. Because of this, we describe the specifics of the algorithm for Setting~1 in the main text, and relegate the details of the algorithmic as well as analytic modifications required for Settings~2 and~3 to the appendix. Throughout this paper, we will refer to the algorithm for Setting~$j$ as Algorithm~$j$, for $j=1$, $2$, and $3$.


\subsection{Setting~1: Abstention with the fixed cost $\lambda \in (0,1/2)$}
\label{subsec:setting1}

In this section, we first provide an outline of our active learning algorithm for this setting (Algorithm~1). We then describe the steps of this algorithm and present an upper-bound on the excess risk of the classifier constructed by the algorithm. We report the pseudo-code of the algorithm and the proofs in Appendices~\ref{appendix:setting1-algo} and~\ref{appendix:setting1-proofs}.

\vspace{-1em}
\paragraph{Outline of Algorithm~1.} 
At any time $t$, the algorithm maintains a set of active points $\X_t \subset \cup_{h \geq 0}\X_h$, such that the cells associated with the points in $\X_t$ partition the whole $\X$, i.e.,~$\cup_{\xhi \in \X_t}\X_{h,i} = \X$.
The set $\X_t$ is further divided into \emph{classified} active points, $\X_t^{(c)}$, \emph{unclassified} active points, $\Xtu$, and \emph{discarded} points, $\Xtd$. The classified points are those at which the value of $\eta(\cdot)$ has been estimated sufficiently well so that we do not need to evaluate them further. The unclassified points  require further evaluation and perhaps refinement before making a decision. The discarded points are those for which we do not have sufficiently many unlabelled samples in their cells (in the \emph{stream-based} and \emph{pool-based} settings). For every active point, the algorithm computes high probability upper and lower bounds on the maximum and minimum $\eta(\cdot)$ values in the cell associated with the point. The difference of these upper and lower bounds can be considered as a surrogate for the uncertainty in the $\eta(\cdot)$ value in a cell. In every round, the algorithm selects a candidate point from the unclassified set that has the largest value of this uncertainty. 
Having chosen the candidate point, the algorithm either refines the cell or asks for a label at that point. 

\vspace{-1em}
\paragraph{Steps of Algorithm~1.}
The algorithm proceeds in the following steps:
\vspace{-0.7em}
\begin{enumerate}[wide, labelwidth=!, labelindent=0pt]
    \item For $t=0$, initialize $\X_0 = \{x_{0,0}\}$, $\X_0^{(u)}=\X_0$, $\X_0^{(c)} = \emptyset$, $\X_0^{(d)}=\emptyset$, $u_0(x_{0,0}) = +\infty$, and $l_0(x_{0,0}) = -\infty$. 

    \item For $t\geq1$, for every $\xhi \in \X_t$, we calculate $\ut$ and $\lt$, which are an upper-bound on the maximum value and a lower-bound on the minimum value of the regression function $\eta(\cdot)$ in $\X_{h,i}$, respectively. We define $u_t(\xhi) = \min \{\bar{u}_t(\xhi), u_{t-1}(\xhi)\}$, where $\bar{u}_t(\xhi) = \hat{\eta}_t(\xhi) + e_t(n_{h,i}(t)) + V_h$. Here  $\hat{\eta}_t(\xhi)$ is the empirical estimate of $\eta$ in the cell $\X_{h,i}$, $n_{h,i}(t)$ is the number of times the cell $\X_{h,i}$ has been queried by the algorithm up to time $t$, $e_t(n_{h,i}(t))$ represents the confidence interval length at $\xhi$ (see Lemma~\ref{lemma:step_1_1} in Appendix~\ref{appendix:setting1-proofs}), and $V_h=L(v_1\rho^h)^\beta$ is an upper-bound on the maximum variation of the regression function $\eta(\cdot)$ in a cell at level $h$ of the tree of partitions. The term $l_t(\xhi)$ is defined in a similar manner using $\max$ instead of $\min$ and using $\bar{l}_t(\xhi) = \hat{\eta}_t(\xhi) - e_t(n_{h,i}(t)) - V_h$. We add all points $\xhi \in \X_t$ to the set $\Xtc$, if they satisfy any one of these three conditions, {\bf\em (a)} $u_t(\xhi) < \lambda $, {\bf\em (b)} $l_t(\xhi) > 1-\lambda$, or {\bf\em (c)} $\lambda < l_t(\xhi) < u_t(\xhi) < 1-\lambda$. 
    
    \item The set of unclassified active points, $\Xtu$, are those points in $\X_t$ for which $[\lt, \ut]\cap\{\lambda, 1-\lambda\}$ is nonempty. 
    
    \item We select a candidate point $\xhit$ from $\Xtu$ according to the rule $\xhit \in \argmax_{\xhi \in \Xtu} \ I^{(1)}_t(x_{h,i})$, 
    %
    %
    where we define the index $\It{1}(\xhi) \coloneqq \ut - \lt$. 
   
    \item Once a candidate point $\xhit$ is selected, we take one of the following two actions: 
    
    \begin{enumerate}
   \item \emph{Refine.} If the uncertainty in the regression function value at $\xhit$, denoted by $e_t(n_{h_t,i_t}(t))$, is smaller than the upper-bound on the function variation in the cell $\X_{h_t,i_t}$, denoted by $V_{h_t} = L(v_1\rho^{h_t})^\beta$, and if $h_t \leq h_{\max}$, then we perform the following operations:
   \begin{align*}
  &\X_t \leftarrow \big(\X_t \setminus \{x_{h_t,i_t}\}\big) \cup \{x_{h_t+1, 2i_t-1}, x_{h_t +1, 2i_t}\}, \qquad u_t(x_{h_t+1,2i_t-1}) = u_t(x_{h_t,i_t}), \\ 
  &l_t(x_{h_t+1,2i_t-1}) = l_t(x_{h_t,i_t}), \qquad u_t(x_{h_t+1,2i_t}) = u_t(x_{h_t,i_t}), \qquad l_t(x_{h_t+1,2i_t}) = l_t(x_{h_t,i_t}). 
   \end{align*}
   \item \emph{Request a Label.} Otherwise, for each active learning model, we proceed as follows:
   \begin{itemize}[leftmargin=*]
       \item In the \emph{membership query model}, we request for the label at any point in the cell $\X_{h_t,i_t}$ associated with $\xhit$. 
       \item In the \emph{pool-based model}, we request the label if there is an unlabelled sample remaining in the cell $\mc{X}_{h_t,i_t}$. Otherwise, we remove $\xhit$ from $\Xtu$, add it to $\Xtd$, and return to Step~2. 
       \item In the \emph{stream-based model}, we discard the samples until a point in the cell $\mc{X}_{h_t,i_t}$ arrives. If $N_n = 2n^2\log(n)$ samples have been discarded, we remove $\xhit$ from $\Xtu$, add it to $\Xtd$, and return to Step~2 without requesting a label.
   \end{itemize}
    \end{enumerate}
    
  \item Let $t_n$ denote the time at which the $n$'th query is made and the algorithm halts. Then, we define the final estimate of the regression function as $\hat{\eta}(x) = \hat{\eta}_{t_n}\big(\pi_{t_n}(x)\big)$, where 
  \begin{equation}
  \label{eq:temp000}
  \pi_{t_n}(x) \coloneqq \big\{ x_{h,i} \in \X_{t_n}\ \mid \ d(x,x_{h,i}) \leq d(x,x_{h',i'}), \; \forall x_{h',i'} \in \X_{t_n}\big\},
  \end{equation}
  and define the discarded region of the input space as $\tilde{\X}_n \coloneqq  \cup_{\xhi \in \X_{t_n}^{(d)}} \X_{h,i}$. 
  \item Finally, the classifier returned by the algorithm is defined as 
  \begin{equation}
  \label{eq:classifier-algo1}
      \hat{g}(x) = 
    \begin{cases}
    1 & \text{if } \; u_{t_n}\big( \pi_{t_n}(x)\big) >1-\lambda \; \text{ or } \; x \in  \tilde{\X}_n, \\
    0 & \text{if } \; l_{t_n}\big(\pi_{t_n}(x)\big) < \lambda \; \text{ and } \; x \not \in \tilde{\X}_n, \\
    \Delta & \text{otherwise}. 
    \end{cases}
  \end{equation}
  Note that the classifier~\eqref{eq:classifier-algo1} arbitrarily assigns label $1$ to the points in the discarded region $\tilde{\X}_n$. 
\end{enumerate}

\begin{remark}
\label{smoothness-param}
Algorithm~1 (and as we will see later Algorithms~2 and~3) assumes the knowledge of parameters $v_1$, $\rho$, $L$, and $\beta$.
As described in Remark~\ref{param-select0}, it is straightforward to select the parameters $v_1$ and $\rho$, but the smoothness parameters $L$ and $\beta$ are often not   known to the algorithm.
We address this in Section~\ref{sec:adaptivity} by designing an algorithm that adapts to the smoothness parameters. 
\end{remark}


In the membership query model, the discarded set remains empty since the learner can always obtain a labelled sample from any cell. We begin with a result that shows that even in the other two models, the probability mass of the discarded region is small under some mild assumptions. 

\begin{lemma}
\label{lemma:discarded_region}
Assume that in the pool-based model, the pool size $M_n$ is greater than $\max\{ 2n^3, 16n^2 \log(n)\}$ and in the stream-based model, the term $N_n$ is set to $2n^2\log(n)$. Then, we have $\pr\big(P_X(\tilde{\X}_n) > 1/n\big) \leq 1/n$. 
\end{lemma}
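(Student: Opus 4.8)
The plan is to write $P_X(\tilde{\X}_n)=\sum_{x_{h,i}\in\X_{t_n}^{(d)}}P_X(\X_{h,i})$ (the discarded cells are pairwise disjoint elements of the tree of partitions) and then to show that, on an event of probability at least $1-1/n$, the number of discarded cells is small \emph{and} each has tiny $P_X$-mass. In the membership-query model $\X_{t_n}^{(d)}=\emptyset$ and there is nothing to prove, so throughout we work in the pool-based and stream-based models.

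First a deterministic count of cells. The algorithm creates new cells only by refinement, and a cell can be refined only when the refine test $e_t(n_{h,i}(t))<V_h$ holds; since this test fails when $n_{h,i}(t)=0$ (before a cell is queried its bounds are vacuous) and $V_h\le V_0=Lv_1^{\beta}=O(1)$, while by Lemma~\ref{lemma:step_1_1} the confidence width $e_t(\cdot)$ carries only an $O(\log n)$ term (a union bound over the polynomially many cell/round pairs ever visited), a cell is refined only after being queried $\Omega(\log n)$ times. A label query increments the counter of exactly one cell, so the per-cell counters summed over all cells ever built do not exceed the total number $n$ of queries; hence at most $O(n/\log n)$ cells are ever refined and the whole tree built by the algorithm has $O(n/\log n)$ cells. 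In particular $|\X_{t_n}^{(d)}|=O(n/\log n)$, and (using the algorithm's cap $h_{\max}+1=O(\log n)$ on depth) there are at most $\mathrm{poly}(n)$ \emph{possible} cells in the tree.

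Now the probabilistic core for the pool-based model. Fix the (deterministic) tree of partitions, let $S(C):=\sum_{j=1}^{M_n}\indicator\{X_j\in C\}\sim\mathrm{Bin}\bigl(M_n,P_X(C)\bigr)$ be the number of the $M_n$ i.i.d.\ pool points in a cell $C$, and set $\tau_n:=c\log n/M_n$. A multiplicative Chernoff bound gives $\pr\bigl(S(C)<\tfrac12 M_nP_X(C)\bigr)\le e^{-M_nP_X(C)/8}\le n^{-c/8}$ for every fixed $C$ with $P_X(C)\ge\tau_n$; choosing $c$ large relative to the (constant) degree of the polynomial that bounds the number of possible cells and union-bounding, the event $G:=\{\,S(C)\ge\tfrac12 M_nP_X(C)\ \text{for all cells }C\text{ with }P_X(C)\ge\tau_n\,\}$ has $\pr(G^c)\le 1/n$. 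On $G$ split the discarded cells into "small" ($P_X(C)<\tau_n$) and "large" ($P_X(C)\ge\tau_n$). The small cells contribute at most $|\X_{t_n}^{(d)}|\,\tau_n=O(n/\log n)\cdot c\log n/M_n=O(n/M_n)$. For the large ones, note that a cell in $\X_{t_n}^{(d)}$ has had every pool point of its region consumed by some query, each of the $\le n$ queries consumes one point lying in exactly one final cell, so $\sum_{C\in\X_{t_n}^{(d)}}S(C)\le n$; hence on $G$ the large cells contribute at most $\tfrac{2}{M_n}\sum_{C}S(C)\le 2n/M_n$. Thus $P_X(\tilde{\X}_n)=O(n/M_n)$ on $G$, which is $\le 1/n$ once $M_n\ge\max\{2n^3,16n^2\log n\}$ (the two lower bounds absorb the $O(n/M_n)$ terms coming respectively from the large and the small cells); combining with $\pr(G^c)\le 1/n$ proves the claim in the pool-based model.

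The stream-based model is analogous but lighter: a cell $C$ enters $\X_{t_n}^{(d)}$ only if $N_n=2n^2\log n$ fresh i.i.d.\ draws from $P_X$ all miss $C$, which for fixed $C$ has probability at most $(1-P_X(C))^{N_n}\le e^{-P_X(C)N_n}$, so if $P_X(C)\ge\tau_n':=c'\log n/N_n=\Theta(1/n^2)$ this is at most $n^{-c'}$; a union bound over the $\mathrm{poly}(n)$ possible cells shows that with probability $\ge 1-1/n$ no discarded cell has mass $\ge\tau_n'$, and then $P_X(\tilde{\X}_n)\le|\X_{t_n}^{(d)}|\,\tau_n'=O(n/\log n)\cdot\Theta(1/n^2)=O(1/(n\log n))\le 1/n$. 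The main obstacle is that $\X_{t_n}^{(d)}$ is data-dependent, so the Chernoff/miss-probability estimates must be union-bounded over \emph{all} cells of the fixed tree rather than over the realized discarded set — this is affordable only because the depth is capped at $O(\log n)$, giving $\mathrm{poly}(n)$ cells — and one must then be careful that the resulting bounds genuinely fall below $1/n$, which is exactly where the explicit lower bounds on $M_n$ (and the explicit value of $N_n$), together with the $O(n/\log n)$ bound on the number of discarded cells, are needed.
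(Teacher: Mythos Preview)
Your argument is correct, but it takes a noticeably more elaborate route than the paper's, and one of your ingredients is unnecessary.

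\textbf{Cell count.} You invest effort in showing $|\X_{t_n}^{(d)}|=O(n/\log n)$ by arguing that every refinement must be preceded by $\Omega(\log n)$ label requests (since $e_t(m)\asymp\sqrt{\log n/m}$ must drop below the constant $V_0=Lv_1^\beta$). This is true, but the paper uses only the crude bound $T\le n$, obtained directly from the depth cap $h_{\max}=\log n$ (so there are at most $2^{h_{\max}+1}\le 2n$ possible cells). That weaker count already suffices for both settings with the stated $M_n,N_n$, so your sharper bound buys nothing here.

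\textbf{Stream-based.} The paper's argument is a one-line pigeonhole: if $P_X(\tilde{\X}_n)>1/n$ then some discarded cell has mass $>1/(nT)\ge 1/n^2$, and the probability that $N_n$ fresh draws all miss such a cell is at most $(1-1/n^2)^{N_n}\le n^{-2}$; a union bound over $T\le n$ cells finishes. Your threshold-plus-union-bound version reaches the same conclusion but with more bookkeeping (and, strictly speaking, you should account for the fact that a cell may be selected for a label request several times before the failing one---this adds only a harmless factor of $n$ inside the union bound).

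\textbf{Pool-based.} Here the two arguments are genuinely different. The paper shows that any cell with $P_X(\X_{h,i})\ge 1/n^2$ contains at least $n$ pool points with probability $\ge 1-1/n^2$ (Chernoff, using both the $2n^3$ and $16n^2\log n$ lower bounds on $M_n$), so such cells can \emph{never} be exhausted by $\le n$ queries; cells with mass $<1/n^2$ contribute at most $T/n^2\le 1/n$. Your split into small/large cells is similar in spirit, but your treatment of the large cells is a different and rather nice observation: since every pool point in a discarded cell has been consumed by some query, $\sum_{C\in\X_{t_n}^{(d)}}S(C)\le n$, whence on your event $G$ the large cells contribute at most $2n/M_n$. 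This avoids the paper's ``$\ge n$ samples'' step entirely.

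In short: both proofs are valid; the paper's is shorter and uses only the depth cap, while yours trades simplicity for the $\sum S(C)\le n$ idea in the pool case and an unneeded refinement-count argument. Your handling of constants is loose enough that one should check the stated $M_n,N_n$ actually close the bounds, but they do with room to spare.
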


This lemma (proved in Appendix~\ref{appendix:discarded-region-lemma}) implies that in the pool-based and stream-based models, with high probability, the misclassification risk of $\hat{g}$ can be upper-bounded by $1/n + P_{XY}\big(\hat{g}(X) \neq Y,\; \hat{g}(X) \neq \Delta,\; X \not \in \tilde{\X}_n\big)$. Lemma~\ref{lemma:discarded_region} is quite important because it implies that under some mild conditions, the analysis of the pool-based and stream-based models reduces to the analysis of the membership query model with an additional cost that can be upper bounded by $1/n$.

We now prove an upper-bound on the excess risk of the classifier (see Appendix~\ref{appendix:setting1-proofs} for the proof). 

\begin{theorem}
\label{theorem:setting1}
Suppose that the assumptions~(MA) and~(H\"O) hold, and  
let $\tilde{D}$ be the dimension term defined in Remark~\ref{remark:dimension}. 
Then, for large enough $n$, with probability at least $1-2/n$, for the classifier $\hat{g}$ defined by~\eqref{eq:classifier-algo1} and for any $a>\tilde{D}$, we have
%
%
\begin{equation}
\label{eq:theorem1_eq2}
R_\lambda(\hat{g}) - R_\lambda(g^*_\lambda ) = \tilde{\mathcal{O}}\big( n^{-\beta(\alpha_0+1)/(2\beta + a)} \big),
\end{equation}
where the hidden constant depends on the parameters $L$, $\beta$, $v_1$, $v_2$, $\rho$, $C_0$, and $a$.     
%
%
\end{theorem}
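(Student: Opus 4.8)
The plan is to bound the excess risk by decomposing the input space according to where the classifier $\hat g$ can go wrong, and then to use a counting argument over the tree of partitions to control how deep the algorithm must refine in the ``ambiguous'' regions near the thresholds $\lambda$ and $1-\lambda$. First I would set up the high-probability event: by Lemma~\ref{lemma:step_1_1} the confidence intervals $[\bar l_t(\xhi), \bar u_t(\xhi)]$ are simultaneously valid for all active cells and all times, and combined with the H\"older bound (H\"O) the quantities $l_t(\xhi) \le \min_{\X_{h,i}}\eta \le \max_{\X_{h,i}}\eta \le u_t(\xhi)$ hold on an event of probability at least $1-1/n$; intersecting with the event of Lemma~\ref{lemma:discarded_region} gives probability at least $1-2/n$, and on the discarded region $\tilde\X_n$ we simply pay the $P_X(\tilde\X_n)\le 1/n$ penalty. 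On this event, the classical pointwise excess-risk identity for classification with abstention gives, for each $x$ outside $\tilde\X_n$, a contribution to $R_\lambda(\hat g)-R_\lambda(g^*_\lambda)$ that is nonzero only when $\hat g(x)\neq g^*_\lambda(x)$, and is then bounded by $|\eta(x)-\lambda|$ or $|\eta(x)-(1-\lambda)|$ (whichever threshold is relevant); crucially, a disagreement at $x$ forces $[\,l_{t_n}(\pi_{t_n}(x)),u_{t_n}(\pi_{t_n}(x))\,]$ to straddle the corresponding threshold, so $x$ lies in a cell whose uncertainty $u_{t_n}-l_{t_n}$ is at least $|\eta(x)-\gamma|$ for $\gamma\in\{\lambda,1-\lambda\}$.

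Next I would run the core accounting. Each query reduces the index $\It{1}$ of the queried cell, and each refine step halts when $e_t(n_{h,i}(t))\le V_h$; a standard argument (as in the $\mathcal X$-armed bandit literature, \citet{bubeck2011x,munos2014bandits}) shows that to drive the uncertainty of a level-$h$ cell down to order $V_h \asymp \rho^{h\beta}$ requires $\tilde O(\rho^{-2h\beta})$ queries in that cell, since $e_t(m)\asymp \sqrt{\log(\cdot)/m}$. A cell at level $h$ is only ever queried/refined while it is unclassified, which requires its confidence interval to still contain $\lambda$ or $1-\lambda$; by the H\"older assumption this means its center lies in $\X_\lambda(\zeta_1(\rho^h))$ for one of the two relevant thresholds, with $\zeta_1(r)=12(L_1v_1/v_2)^\beta r^\beta$ as in Remark~\ref{remark:dimension}. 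Hence the number of active level-$h$ cells is at most the packing number $M(\X_\lambda(\zeta_1(\rho^h)),\rho^h)$, which by Definition~\ref{def:dimension1} and Remark~\ref{remark:dimension} is $O(\rho^{-h a})$ for any $a>\tilde D$. Summing the per-cell query cost over all active cells up to a maximal level $h^*$ gives a total budget spent of order $\sum_{h\le h^*}\rho^{-ha}\rho^{-2h\beta}\asymp \rho^{-h^*(a+2\beta)}$; setting this equal to $n$ identifies the effective resolution $\rho^{h^*}\asymp n^{-1/(a+2\beta)}$, i.e.\ every cell that is not fully classified has been refined to level at least $h^*$ and so has uncertainty $\tilde O(\rho^{h^*\beta}) = \tilde O(n^{-\beta/(a+2\beta)})$.

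Finally I would convert this resolution into the excess-risk bound by integrating against $P_X$. By the previous paragraph, any $x$ outside $\tilde\X_n$ at which $\hat g$ disagrees with $g^*_\lambda$ satisfies $|\eta(x)-\gamma|\le u_{t_n}-l_{t_n} = \tilde O(n^{-\beta/(a+2\beta)}) =: \epsilon_n$ for some $\gamma\in\{\lambda,1-\lambda\}$, and the pointwise excess risk there is at most $|\eta(x)-\gamma|\le \epsilon_n$. Therefore
\begin{equation}
R_\lambda(\hat g)-R_\lambda(g^*_\lambda) \;\le\; \frac1n + \epsilon_n\cdot P_X\Big(\{x:\ |\eta(x)-\lambda|\le \epsilon_n\}\cup\{x:\ |\eta(x)-(1-\lambda)|\le \epsilon_n\}\Big) \;\le\; \frac1n + 2C_0\,\epsilon_n^{\,\alpha_0+1},
\end{equation}
using the margin assumption (MA) at the two thresholds $1/2\pm\gd$ — note that in Setting~1 $\gd$ is not relevant and (MA) is applied directly at $\lambda$ and $1-\lambda$ (equivalently one tracks $|\eta-1/2|$ relative to $1/2-\lambda$). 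Substituting $\epsilon_n = \tilde O(n^{-\beta/(a+2\beta)})$ gives $\tilde O\big(n^{-\beta(\alpha_0+1)/(2\beta+a)}\big)$, and the $1/n$ term is lower order for large $n$, which is exactly~\eqref{eq:theorem1_eq2}.

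The main obstacle is the counting step in the middle paragraph: one must carefully verify that a cell stays unclassified \emph{only} while its center is in the near-$\lambda$ set at the scale $\zeta_1(\rho^h)$ — this needs the one-sided H\"older slack built into $\zeta_1$ and a careful treatment of condition (c) in Step~2 (cells strictly between $\lambda$ and $1-\lambda$ get classified and contribute zero excess risk), and that the ``refine until $e_t\le V_h$'' rule indeed caps the wasted queries per cell at $\tilde O(\rho^{-2h\beta})$ rather than something larger — plus bookkeeping to ensure $h_{\max}$ is chosen large enough not to be the binding constraint. The translation from ``total queries $=n$'' to ``every unclassified cell reached level $h^*$'' also requires the standard argument that the algorithm never wastes queries on already-classified cells, which follows from the selection rule in Step~4 together with the monotonicity $u_t(\xhi)\le u_{t-1}(\xhi)$.
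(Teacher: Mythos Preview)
Your proposal is correct and follows essentially the same route as the paper: the paper's Lemmas~\ref{lemma:step_1_1}--\ref{lemma:step1_4} set up the high-probability event, carry out exactly the per-level accounting you describe (bounding queries-per-cell by $\tilde O(\rho^{-2h\beta})$, bounding the number of active level-$h$ cells via the near-$\lambda$ dimension, summing to identify the depth $H_0$ with $\rho^{H_0}\asymp n^{-1/(a+2\beta)}$), and then invoke (MA) to turn the resolution bound $b_n=\tilde O(n^{-\beta/(a+2\beta)})$ into the excess-risk rate $2C_0 b_n^{1+\alpha_0}$. Your identified obstacles (ensuring unclassified cells lie in the near-$\lambda$ set at scale $\zeta_1(\rho^h)$, and that the refine rule caps per-cell queries) are precisely the technical content of the paper's Lemma~\ref{lemma:step1_2}.
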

%

The above result improves upon the convergence rate of the plug-in scheme of \cite{herbei2006classification} in the passive setting mirroring the benefits of active learning in the standard binary classification problems. See Section~\ref{sec:discussion} and Appendix~\ref{appendix:discussion} for further discussion. 

\subsection{Setting~2: Bounded-rate setting with known $P_X$}
\label{subsec:setting2}

This setting provides an intermediate step between the fixed-cost and bounded-rate settings. The key difference between the algorithms for this and the \emph{fixed-cost} setting lies in the rule used for updating the set of unclassified points. Since in this case the threshold is not known, we need to use the current estimate of the regression function to obtain upper and lower bounds on the true threshold, and then use these bounds to decide which parts of the input space have to be further explored. We report the details of the algorithm in Appendix~\ref{appendix:setting2-details}, its pseudo-code in Appendix~\ref{appendix:setting2-algo}, and the statement and proof of its excess risk bound (Theorem~\ref{theorem:setting2}) in Appendix~\ref{appendix:setting2-proofs}.


\subsection{Setting~3: Bounded-rate setting with unlabelled samples}
\label{subsec:setting3}

Finally, we consider the general bounded-rate abstention model in the semi-supervised setting. In this case, the algorithm should request for unlabelled samples and use them to both construct the estimates of the appropriate threshold values and obtain better empirical estimates of the $P_X$ measure of a set. Unlike Algorithm~2, in Algorithm~3 we have to construct estimates of the threshold using empirical measure $\hat{P}_X$, and furthermore, based on the error in estimate of $\eta(\cdot)$, we also need a strategy of updating $\hat{P}_X$ by requesting more unlabelled samples. We report the details of Algorithm~3 in Appendix~\ref{appendix:setting3-details}, its pseudo-code in Appendix~\ref{appendix:setting3-algo}, and the statement and proof of its excess risk bound (Theorem~\ref{theorem:setting3}) in Appendix~\ref{appendix:setting3-proofs}. We note that the excess risk bound for Algorithm~3 is minimax (near)-optimal under the same assumptions as in Algorithms~1 and~2. However, in order to exploit easier problem instances in which $\tilde{D}$ is much smaller than $D$, we require an additional (DE) assumption (see Section~\ref{sec:discussion} for detailed discussion). 


\section{Adaptivity to Smoothness Parameters}
\label{sec:adaptivity}
 All the active learning algorithms discussed in Section~\ref{sec:algorithms} assume the knowledge of the H\"older smoothness parameters $L$ and $\beta$. We now present a simple strategy to achieve adaptivity to these parameters. To simplify the presentation, we only consider the problem in the fixed-cost setting with membership query model. Extension to the other settings and models could be done in the same manner. The parameters $(L, \beta)$ are required  by Algorithm~1 at two junctures: \tbf{1)} to define the index $I_t^{(1)}$ for selecting a candidate point, and \tbf{2)} to decide when to refine a cell. 
In our proposed adaptive scheme, we address these issues as follows:
\begin{itemize}[leftmargin=*]
\item Instead of selecting one candidate point in each step, we select one point from each level $h$ from the current set of active points. This is similar to the approach used in the SOO algorithm~\citep{munos2011optimistic} for global optimization. Since the maximum depth of the tree $h_{\max}$ is $\mc{O}\lp \log n \rp$, this modification only results in  an additional $\text{poly}\log n$ factor in the excess risk. 

\item 
To decide when to refine, we need to estimate the variation of $\eta(\cdot)$ in a cell from samples. We make an additional assumption, (QU), that the pair $\lp (\X_h)_{h \geq 0}, \eta \rp$  has \emph{quality} $q>0$ (see Appendix~\ref{appendix:adaptivity} for the definition). This assumption has been used in prior works on adaptive global optimization~\citep{slivkins2011multi, bull2015adaptive}. We then proceed by proposing a local variant of Lepski's technique~\citep{lepski1997optimal} to construct the required estimate of the variation of $\eta(\cdot)$, combined with an appropriate stopping rule. 
\end{itemize}

With these two modifications and the additional \emph{quality} assumption (QU), we can achieve the rate $\tilde{\mc{O}}\lp n^{-\beta(1+ \alpha_0)/(2\beta + a)} \rp$, with $a> \tilde{D}$, thus, matching the performance of Algorithm~1. The details of the adaptive scheme and the proof of convergence rate are provided in Appendix~\ref{appendix:adaptivity}. 

\begin{remark}
\label{remark:adaptivity}
We note that there are other adaptive schemes for active learning, such as ~\citet{minsker2012plug, locatelli2017adaptivity}, that can also be applied to the problem studied in this paper. Our proposed adaptive scheme provides an alternative to these existing methods. Furthermore, our scheme can also be applied to classification problems with \emph{implicit} similarity information, similar to~\citet{slivkins2011multi}, as well as to problems with spatially inhomogeneous regression functions. 
\end{remark}

\vspace{-1em}
\section{Lower Bounds}
\label{sec:lower}
We now derive minimax lower-bounds on the expected excess risk in the fixed-cost setting and for the membership query model. Since this is the strongest active learning query model, the obtained lower-bounds are also true for the other two models. The proof follows the general outline for obtaining lower bounds  described in existing works, such as~\citet{audibert2007fast, minsker2012plug}, reducing the estimation problem to that of an appropriate multiple hypothesis testing problem, and applying Theorem~2.5 of~\citet{tsybakov2008introduction}. 
The novel elements of our proof are the construction of an appropriate class of regression functions (see Appendix~\ref{appendix:lower_bound}) and the comparison inequality presented in Lemma~\ref{lemma:lower_bound1}. 

We begin by presenting a lemma that provides a lower-bound on the excess risk of an abstaining classifier in terms of the probability of the mismatch between the abstaining regions of the given classifier and the Bayes optimal classifier. The proof of Lemma~\ref{lemma:lower_bound1} is given in Appendix~\ref{appendix:lower_bound}.

\begin{lemma}
\label{lemma:lower_bound1}
In the fixed-cost abstention setting with cost of abstention equal to $\lambda<1/2$, let $g$ represent any abstaining classifier and $g^*_\lambda$ represent the Bayes optimal one. Then, we have 
\begin{equation}
\label{eq:comparison_lemma}
 R_\lambda\lp g \rp - R_\lambda \lp g_\lambda^*\rp \geq c P_X\bigg( (G^*_\lambda \setminus
G_\lambda) \cup (G_\lambda \setminus G^*_\lambda) \bigg)^{\frac{1+\alpha_0}{\alpha_0}}, 
\end{equation}
where 
$c>0$ is a constant, and $\alpha_0$ is the parameter used in the assumptions of Section~\ref{subsec:definitions}.
\end{lemma}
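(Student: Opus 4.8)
The plan is to directly compute the excess risk $R_\lambda(g) - R_\lambda(g^*_\lambda)$ as an integral over the region where $g$ and $g^*_\lambda$ disagree, lower-bound the integrand pointwise by the local ``slack'' of the Bayes rule, and then convert this into a bound involving $P_X$ of the mismatch region via a Hölder-type / layer-cake argument that invokes the margin assumption (MA). First I would recall that for a deterministic classifier $g$ with regions $(G_0,G_1,G_\Delta)$, the pointwise loss satisfies $\mathbb{E}[l_\lambda(g,X,Y)\mid X=x] = \eta(x)\indi_{\{x\in G_0\}} + (1-\eta(x))\indi_{\{x\in G_1\}} + \lambda\indi_{\{x\in G_\Delta\}}$, and similarly for $g^*_\lambda$ with the optimal choice being $\min\{\eta(x), 1-\eta(x), \lambda\}$. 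Subtracting, $R_\lambda(g)-R_\lambda(g^*_\lambda) = \int_{\X} \big(\phi_g(x) - \phi_{g^*_\lambda}(x)\big)\, dP_X(x)$, where $\phi_g(x)$ is the conditional loss of $g$ at $x$ and $\phi_{g^*_\lambda}(x) = \min\{\eta(x),1-\eta(x),\lambda\}$. The integrand is nonnegative, and is strictly positive exactly on the symmetric difference of the regions; on that set I would show it is bounded below by a quantity controlled by $\min_{\gamma\in\{1/2-(1/2-\lambda),\,1/2+(1/2-\lambda)\}} |\eta(x)-\gamma|$ — i.e., the distance of $\eta(x)$ from the two decision thresholds $\lambda$ and $1-\lambda$ — since a mismatch at $x$ forces $g$ to pick an option whose conditional loss exceeds the optimum by at least that distance.

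Next I would handle the conversion from a pointwise gap to the stated power of $P_X$ of the mismatch set. Write $A := (G^*_\lambda\setminus G_\lambda)\cup(G_\lambda\setminus G^*_\lambda)$ and $\Delta_\lambda(x)$ for the pointwise excess loss, so $R_\lambda(g)-R_\lambda(g^*_\lambda) = \int_A \Delta_\lambda(x)\,dP_X(x)$ with $\Delta_\lambda(x)\geq c'\,\mathrm{dist}(\eta(x),\{\lambda,1-\lambda\})$ on $A$ (possibly after checking that the relevant thresholds are $\lambda$ and $1-\lambda$, equivalently $1/2\mp(1/2-\lambda)$, which is precisely why (MA) is stated for $\gamma\in\{1/2-\gd,1/2+\gd\}$... here in the fixed-cost case the analogue is $\gamma\in\{\lambda,1-\lambda\}$). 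For any threshold $\tau>0$, split $A = (A\cap\{\mathrm{dist}\le\tau\}) \cup (A\cap\{\mathrm{dist}>\tau\})$. On the first piece, (MA) gives $P_X(A\cap\{\mathrm{dist}\le\tau\}) \le P_X(\mathrm{dist}(\eta(X),\{\lambda,1-\lambda\})\le\tau) \le 2C_0\tau^{\alpha_0}$, so $P_X(A\cap\{\mathrm{dist}>\tau\}) \ge P_X(A) - 2C_0\tau^{\alpha_0}$; on the second piece, $\Delta_\lambda(x) > c'\tau$, hence $R_\lambda(g)-R_\lambda(g^*_\lambda) \ge c'\tau\,(P_X(A) - 2C_0\tau^{\alpha_0})$. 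Optimizing over $\tau$ — the natural choice being $\tau \asymp (P_X(A)/C_0)^{1/\alpha_0}$ up to a constant so that the subtracted term is, say, half of $P_X(A)$ — yields $R_\lambda(g)-R_\lambda(g^*_\lambda) \ge c\, P_X(A)^{1+1/\alpha_0} = c\,P_X(A)^{(1+\alpha_0)/\alpha_0}$, as claimed.

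The main obstacle, and the step deserving most care, is the pointwise lower bound $\Delta_\lambda(x) \ge c'\,\mathrm{dist}(\eta(x),\{\lambda,1-\lambda\})$ on the mismatch set: one must enumerate the ways $g(x)$ can differ from $g^*_\lambda(x)$ (e.g., $g$ says $\Delta$ where optimum is label $1$, or $g$ says $0$ where optimum is $\Delta$, or $g$ says $1$ where optimum is $0$, etc.) and verify in each case that the incurred excess loss is at least the distance from $\eta(x)$ to whichever threshold was crossed — and in the ``$0$ vs $1$'' case the gap is $|2\eta(x)-1| = 2|\eta(x)-1/2| \ge 2\min\{|\eta(x)-\lambda|,|\eta(x)-(1-\lambda)|\}$ when $\eta(x)$ is between the thresholds, which needs a short argument. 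I should also note two minor points: the result as stated presumes $\alpha_0>0$ (for $\alpha_0=0$ the bound is vacuous/trivial, so no loss), and if $g$ is randomized the same computation goes through with $g_i(x)$ weights, since the conditional loss is linear in those weights and the excess is still bounded below by the weight placed on suboptimal options times their individual gaps. Finally I would collect the constants ($c$ depending on $C_0$, $\alpha_0$, and the universal $c'$ from the pointwise bound) to match the statement.
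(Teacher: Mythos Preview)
Your proposal is correct and follows essentially the same route as the paper's proof. The paper expands $R_\lambda(g)-R_\lambda(g^*_\lambda)$ into six integrals over the intersections $G_i\cap G_j^*$, drops the two nonnegative ``label-swap'' terms, and for each remaining term splits the domain according to whether $|\eta-\lambda|$ (or $|\eta-(1-\lambda)|$) exceeds a parameter $t$, applies (MA) to the near-threshold part, and then optimizes over $t$ to obtain exactly $t\,P_X(G_\lambda\triangle G^*_\lambda)-4C_0t^{1+\alpha_0}$---which is your $c'\tau(P_X(A)-2C_0\tau^{\alpha_0})$ up to constants. Your pointwise bound $\Delta_\lambda(x)\ge \mathrm{dist}(\eta(x),\{\lambda,1-\lambda\})$ on $A$ (with $c'=1$) is precisely what the paper's case analysis of $T_1$--$T_4$ establishes; note that what you wrote as an equality, $R_\lambda(g)-R_\lambda(g^*_\lambda)=\int_A\Delta_\lambda\,dP_X$, should be an inequality ($\ge$), since the $G_0\cap G_1^*$ and $G_1\cap G_0^*$ contributions are outside $A$ but nonnegative---the paper handles these as the discarded terms $T_5,T_6$.
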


Lemma~\ref{lemma:lower_bound1} aids our lower-bound proof in several ways: {\bf 1)} the RHS of~\eqref{eq:comparison_lemma} motivates our construction of \emph{hard} problem instances, in which it is difficult to distinguish between the `abstain' and `not-abstain' options, {\bf 2)} the RHS of~\eqref{eq:comparison_lemma} also suggests a natural definition of pseudo-metric (see Theorem~\ref{theorem:tsybakov1} in Appendix~\ref{appendix:lower_bound2}), and {\bf 3)} it allows us to convert the lower-bound on the hypothesis testing problem to that on the excess risk. We now state the main result of this section (see Appendix~\ref{appendix:lower_bound} for the proof). 

\begin{theorem}
\label{theorem:lower_bound2}
Let $\mc{A}$ be any active learning algorithm and $\hat{g}_n$ be the abstaining classifier learned by $\mc{A}$ with $n$ label queries in the fixed-cost abstention setting, with cost $\lambda<1/2$. Let $\mc{P}\lp L, \beta, \rho_0\rp$ represent the class of joint distributions $P_{XY}$ satisfying the margin assumption~(MA) with exponent $\alpha_0>0$, whose regression function is $(L, \beta)$ H\"older continuous with $L\geq 3$ and $0<\beta\leq 1$. Then, we have 
\begin{align*}
\inf_{\mc{A}} \sup_{P_{XY} \in \mc{P}(L, \beta, \alpha_0)} \bigg( \mbb{E}\lb R_\lambda\lp \hat{g}_n \rp - R_\lambda \lp 
g_\lambda^* \rp \rb  \bigg) & \geq C n^{-\beta(1+\alpha_0)/(2\beta + D)}. 
\end{align*}
\end{theorem}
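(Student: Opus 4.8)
The plan is to follow the standard reduction of an estimation lower bound to multiple hypothesis testing via Theorem~2.5 of~\citet{tsybakov2008introduction}, but with two problem-specific ingredients: the comparison inequality of Lemma~\ref{lemma:lower_bound1}, which converts a lower bound on a packing distance between abstention regions into a lower bound on excess risk, and an explicit construction of a finite family of regression functions tailored to the abstention thresholds $\lambda$ and $1-\lambda$. First I would fix a resolution parameter and partition a subcube of $[0,1]^D$ into $m \asymp q^{-D}$ congruent cells of side $q$; on a carefully chosen subset of these cells I place localized ``bump'' perturbations $\phi(x) = L q^\beta \psi\big(d(x,\text{cell center})/q\big)$ for a fixed smooth compactly supported profile $\psi$, added to a base regression function that sits at one of the critical levels $1/2 \pm (1/2 - \lambda)$ on a region of $P_X$-measure tuned so that the margin assumption~(MA) holds with the prescribed exponent $\alpha_0$; the base level is chosen so that flipping a bump up or down moves $\eta$ across the relevant threshold, thereby changing whether the Bayes classifier abstains on that cell. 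This yields a hypothesis class indexed by $\sigma \in \{0,1\}^{m'}$ for an appropriate $m' \le m$.

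Next I would control the three quantities that Theorem~2.5 requires. The separation: for two sign vectors $\sigma, \sigma'$ differing in one coordinate, the symmetric difference $(G^*_{\lambda,\sigma} \setminus G^*_{\lambda,\sigma'}) \cup (G^*_{\lambda,\sigma'} \setminus G^*_{\lambda,\sigma})$ contains (a constant fraction of) one cell, so its $P_X$-measure is $\gtrsim q^D \cdot (\text{density})$; after invoking a Varshamov--Gilbert bound to extract a subfamily of size $2^{c m'}$ with pairwise Hamming distance $\gtrsim m'$, Lemma~\ref{lemma:lower_bound1} gives a pairwise excess-risk separation of order $\big(m' q^D \cdot \text{density}\big)^{(1+\alpha_0)/\alpha_0}$, with the density factor chosen consistently with (MA). The information content: each label query at a point in a perturbed cell reveals at most a Bernoulli sample whose parameter differs from the base by $O(L q^\beta)$, so the KL divergence contributed per query is $O(L^2 q^{2\beta})$; since the learner makes $n$ queries and is adaptive, I would bound the mutual information / average KL between the data distributions under $P_{\sigma}$ and the mixture (or use the per-coordinate tensorization argument, as in~\citet{minsker2012plug, audibert2007fast}) by $O(n L^2 q^{2\beta})$, and require this to be at most $\alpha\, c\, m'$ for small $\alpha$, i.e.\ $n q^{2\beta} \lesssim m' \asymp q^{-D}$. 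Balancing $q^{2\beta} n \asymp q^{-D}$ gives $q \asymp n^{-1/(2\beta + D)}$, and substituting back into the separation yields excess risk $\gtrsim \big(m' q^D\big)^{(1+\alpha_0)/\alpha_0}$ with $m' q^D$ of constant order modulo the density normalization — more carefully, keeping the margin-density factor $\delta_0$ explicit one tunes $\delta_0$ so that the margin assumption is tight and the separation becomes $\asymp q^{\beta(1+\alpha_0)} = n^{-\beta(1+\alpha_0)/(2\beta+D)}$. Finally, Markov's inequality converts the in-probability separation guaranteed by Theorem~2.5 into the claimed lower bound on $\mbb{E}[R_\lambda(\hat g_n) - R_\lambda(g^*_\lambda)]$.

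The main obstacle I anticipate is the bookkeeping that simultaneously enforces (MA) with the prescribed $\alpha_0$ while making the construction $(L,\beta)$-Hölder with the permitted constant and while keeping the KL budget and the packing separation in the right ratio; in particular, the margin assumption needs to hold for every hypothesis $\eta_\sigma$ near both thresholds $1/2 \pm (1/2-\lambda)$, which constrains how much $P_X$-mass can be placed near the critical levels, and this mass is exactly what scales the separation through Lemma~\ref{lemma:lower_bound1}. Getting the exponent $(1+\alpha_0)/\alpha_0$ in the comparison inequality to combine with the measure of a single perturbed cell to produce precisely $n^{-\beta(1+\alpha_0)/(2\beta+D)}$ — rather than some other power — is the delicate step, and it is where the explicit profile $\psi$, the side length $q$, the number of active cells, and the local density of $P_X$ must all be pinned down together. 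A secondary technical point is handling the adaptivity of $\mc{A}$: since queries may be chosen based on past labels, one cannot treat the $n$ queried cells as fixed, so I would either use the standard ``the algorithm sees at most $n$ coordinates'' argument bounding the total information by $n$ times the worst per-query KL, or appeal directly to the form of Theorem~2.5 combined with an Assouad-type coordinatewise argument as in~\citet{audibert2007fast}.
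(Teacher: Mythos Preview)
Your proposal is correct and follows essentially the same approach as the paper: reduce to multiple hypothesis testing via Theorem~2.5 of \citet{tsybakov2008introduction}, construct a family of regression functions by placing $\pm L(\epsilon/2)^\beta$ bumps around the critical levels $\lambda$ and $1-\lambda$ on a grid of cells of side $\epsilon \asymp n^{-1/(2\beta+D)}$, tune the marginal so that (MA) is tight (the paper sets the per-cell mass $w$ so that $\tilde{M}w \asymp \epsilon^{\alpha_0\beta}$), use Gilbert--Varshamov for separation, bound the active-learning KL by $O(nL^2\epsilon^{2\beta})$ via the per-query argument of \citet{minsker2012plug}, and finish with Lemma~\ref{lemma:lower_bound1}. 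The only detail you leave slightly implicit is that the paper puts bumps in \emph{two} separate corner regions (one centered at level $\lambda$, one at $1-\lambda$) and places the remaining mass at levels $0$ and $1$ to keep the global H\"older constant under control; this is exactly the ``bookkeeping'' you flag as the main obstacle, and the paper resolves it in the way you anticipate.
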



Finally, by exploiting the relation between the Bayes optimal classifier in the fixed-cost and bounded-rate of abstention settings, we can obtain the following lower-bound on the expected excess risk in the bounded-rate of abstention setting.

\begin{corollary}
\label{corollary:lower_bound3}
For the bounded-rate of abstention setting, we have the following lower-bound:
\begin{align*}
    \inf_{\mc{A}} \sup_{P_{XY} \in \mc{P}\lp L, \beta, \alpha_0 \rp }  \lp \mbb{E} \lb R (\hat{g}_n) - R(g^*_{\delta}) \rb \rp \geq C n^{-\beta( 1+\alpha_0)/(2\beta + D)}. 
\end{align*}
\end{corollary}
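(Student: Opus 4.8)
The plan is to deduce this bound from the fixed-cost lower bound of Theorem~\ref{theorem:lower_bound2}, using the fact that on the relevant hard instances the bounded-rate Bayes classifier $g^*_\delta$ of~\eqref{bayes_optimal_delta} is literally a fixed-cost Bayes classifier $g^*_\lambda$ of~\eqref{fixed_cost_problem}. Whenever $P_X\circ\eta^{-1}$ has no atom at the levels $1/2\pm\gamma_\delta$, the map $\gamma\mapsto P_X(|\eta(X)-1/2|\le\gamma)$ is continuous there, the abstention constraint is tight ($P_X(g^*_\delta(X)=\Delta)=\delta$), and the thresholds $\{1/2-\gamma_\delta,\,1/2+\gamma_\delta\}$ defining $g^*_\delta$ coincide with the thresholds $\{\lambda,\,1-\lambda\}$ defining $g^*_\lambda$ for $\lambda=1/2-\gamma_\delta$; that is, $g^*_\delta=g^*_\lambda$ up to a $P_X$-null set. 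Fix any $\lambda\in(0,1/2)$ and take the finite family $\{P^{(\sigma)}_{XY}\}_\sigma\subset\mathcal{P}(L,\beta,\alpha_0)$ of hard instances used to prove Theorem~\ref{theorem:lower_bound2} for this $\lambda$; these share one (piecewise-uniform) marginal $P_X$, so $P_X\circ\eta^{-1}$ is atomless and the above applies to each member. The argument is instance-wise, hence indifferent to the query model; we state it for the membership-query model as in Theorem~\ref{theorem:lower_bound2}.

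The one step requiring care is to make the abstention rate $\delta_\sigma:=P_X(|\eta^{(\sigma)}(X)-1/2|\le 1/2-\lambda)$ -- i.e.\ the rate at which $g^*_\lambda=g^*_{\delta_\sigma}$ abstains -- the \emph{same} for every instance, so that a single $\delta$ (hence a single $\gamma_\delta=1/2-\lambda$) serves the whole family. In the construction $\eta^{(\sigma)}$ is obtained from a base function by $\pm$ bumps of height $b$ on small cells pinned to a threshold, and whether such a cell lies in $\{|\eta^{(\sigma)}-1/2|\le 1/2-\lambda\}$ flips with the corresponding coordinate of $\sigma$. I would therefore group the bump cells into pairs of equal $P_X$-mass and retain only the sign patterns that assign opposite signs within each pair; then exactly one cell of every pair contributes to the abstention region, regardless of $\sigma$, so $\delta_\sigma\equiv\delta$ and $\gamma_\delta=1/2-\lambda$ throughout the family, with $\delta$ tunable to any target value in $(0,1)$ via the choice of base function and bump masses. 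This restriction at most halves the number of free bits and at most doubles the Kullback--Leibler divergence between neighbouring hypotheses, so the separation in the pseudo-metric arising from Lemma~\ref{lemma:lower_bound1} and the Fano/Tsybakov argument (Theorem~2.5 of \citet{tsybakov2008introduction}) behind Theorem~\ref{theorem:lower_bound2} are unchanged up to constants, and the family still satisfies (MA) with exponent $\alpha_0$ and is $(L,\beta)$-H\"older with $L\ge 3$.

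With a common $\delta$ fixed, the transfer is one identity plus one inequality. From~\eqref{fixed_cost_problem} and the definition of $R$ in~\eqref{bayes_optimal_delta}, every classifier $g$ satisfies $R_\lambda(g)=R(g)+\lambda\,P_X(g(X)=\Delta)$; subtracting the same identity for $g^*_\lambda=g^*_\delta$ and using $P_X(g^*_\delta(X)=\Delta)=\delta$ gives
\[
R_\lambda(g)-R_\lambda(g^*_\lambda)=\big(R(g)-R(g^*_\delta)\big)+\lambda\big(P_X(g(X)=\Delta)-\delta\big).
\]
If $\hat g_n$ is produced by any active-learning algorithm $\mathcal{A}$ for the bounded-rate problem it obeys $P_X(\hat g_n(X)=\Delta)\le\delta$, so the last term is non-positive and $R(\hat g_n)-R(g^*_\delta)\ge R_\lambda(\hat g_n)-R_\lambda(g^*_\lambda)$ on every instance. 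Taking expectations, then the supremum over the hard family, then the infimum over algorithms, and finally relaxing the supremum to all of $\mathcal{P}(L,\beta,\alpha_0)$ (which contains the family), I obtain
\[
\inf_{\mathcal{A}}\ \sup_{P_{XY}\in\mathcal{P}(L,\beta,\alpha_0)}\mathbb{E}\big[R(\hat g_n)-R(g^*_\delta)\big]\ \ge\ \inf_{\mathcal{A}}\ \sup_{\sigma}\,\mathbb{E}\big[R_\lambda(\hat g_n)-R_\lambda(g^*_\lambda)\big]\ \ge\ C\,n^{-\beta(1+\alpha_0)/(2\beta+D)},
\]
the last inequality being the Fano bound of Theorem~\ref{theorem:lower_bound2} specialized to this family (it holds a fortiori when the infimum runs only over the subclass of bounded-rate algorithms).

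The main obstacle is exactly the pairing step: one must verify that forcing a single abstention rate $\delta$ across the family -- so that $\gamma_\delta$ is a fixed number rather than an instance-dependent quantity -- does not weaken the multiple-hypothesis lower bound, i.e.\ that the paired-sign sub-family is still exponentially large in the Lemma~\ref{lemma:lower_bound1} pseudo-metric and still has small pairwise divergences. Everything else -- the risk identity, the feasibility of $\hat g_n$ for the bounded-rate constraint, and the chain of $\inf$/$\sup$ relaxations -- is routine once the family is pinned down.
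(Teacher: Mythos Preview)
Your proposal is correct and the overall logic---identify a hard family on which $g^*_\delta=g^*_\lambda$, then transfer the fixed-cost lower bound via the identity $R_\lambda(g)=R(g)+\lambda P_X(g=\Delta)$ together with feasibility $P_X(\hat g_n=\Delta)\le\delta$---is sound. The paper's own proof shares the high-level strategy (reduce to the fixed-cost case) but executes the two key steps differently.

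\emph{Fixing the abstention rate.} The paper does not restrict the sign patterns; instead it augments the construction of Theorem~\ref{theorem:lower_bound2} by introducing a fifth region $Q_5$ (around a new corner $e_5$) on which $\eta\equiv 1/2$, and puts mass $\delta-2\tilde M w$ there. This makes the $\lambda$-level sets correspond to a bounded-rate abstention region of size $\approx\delta$ for every $\vec\sigma$. Your pairing trick is an alternative way to freeze the bump contribution to the abstention mass; it is cleaner in that the rate is exactly constant across instances, at the price of redoing Gilbert--Varshamov on the paired sub-family (which, as you note, costs only constants). Note, however, that your sentence about ``tuning $\delta$ to any target value via the base function and bump masses'' is doing real work: since $\tilde M w=\mathcal O(\epsilon^{\alpha_0\beta})\to 0$, pairing alone cannot reach a fixed $\delta>0$, so you would still need a $Q_5$-type region to carry the remaining $\delta-\tilde M w$ mass at $\eta=1/2$. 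That is essentially the paper's modification, so in a full write-up your construction would end up combining both devices.

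\emph{Transferring the bound.} The paper's final sentence is the informal ``revealing the threshold can only further decrease the lower bound for the bounded-rate setting.'' Your algebraic route makes this precise: once $g^*_\delta=g^*_\lambda$ and $P_X(\hat g_n=\Delta)\le\delta$, the displayed identity gives $R(\hat g_n)-R(g^*_\delta)\ge R_\lambda(\hat g_n)-R_\lambda(g^*_\lambda)$ pointwise, and the $\inf/\sup$ chain then imports Theorem~\ref{theorem:lower_bound2} directly. This is a genuine improvement in rigor over the paper's one-line justification, and it also makes transparent why restricting to feasible (bounded-rate) algorithms can only raise the infimum.
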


The proof of this statement is given in Appendix~\ref{appendix:lower_bound}. 

\section{Computationally Feasible Algorithms}
\label{sec:practical}
The lower bound obtained in the previous section implies that in the worst case, to ensure an excess risk smaller than $\epsilon>0$, any algorithm will require $\Omega\lp (1/\epsilon)^{\frac{2\beta + D}{\beta(1+\rho_0)}}\rp$ label requests (in both the fixed-cost and bounded-rate settings). This means that the worst case computational complexity of any algorithm will have an exponential dependence of the dimension.  The above discussion suggests that to obtain computationally tractable algorithms, we need to restrict the hypothesis class. We consider the class of learning problems where the regression function is a generalized linear map given by $\eta(x) = \psi \lp \langle x, w^* \rangle \rp + 1/2 $ where $\psi:\mbb{R} \mapsto [-1/2,1/2]$ is a monotonic invertible $(L,\beta)$~H\"older continuous function. This class of problems (henceforth denoted by $\mc{P}_1(L,\beta,\rho_0)$), though much smaller than $\mc{P}(L, \beta, \rho_0)$ considered in previous sections, contains standard problem instances such as linear classifiers and logistic regression. Furthermore, by using appropriate feature maps, the class $\mc{P}_1\lp L, \beta, \rho_0\rp$ can model very complex decision boundaries. 

Due to the special structure of the regression function, the learning problem (for Setting~1) then reduces to estimating the optimal hyperplane $w^*$, and the value $\psi^{-1}(\lambda)$. Here we can employ the \emph{dimension coupling} technique of \citet{chen2017near}, which implies that  the $D$ dimensional problem can be reduced to $D-1$ two dimensional problems. Furthermore, as we show in Proposition~\ref{prop:computationally_feasible} (stated and proved in Appendix~\ref{appendix:discussion_feasible}), for an $\epsilon>0$ a modified version of Algorithm~1 can estimate the term $w^*$ for continuously differentiable $\psi$  with accuracy $\epsilon$ for a number of labelled samples which has a polynomial dependence of the dimension $D$.

\section{Discussion}
\label{sec:discussion}
\paragraph{Improved Convergence Rates (active over passive learning).} 
The convergence rates on the excess risk obtained by our active learning algorithms improve upon those in the literature obtained in the passive case. More specifically, the excess risk in the passive case for the fixed-cost~\citep{herbei2006classification} and bounded-rate~\citep{denis2015consistency} settings is $\mc{O} \lp n^{-\beta(1+\alpha_0)/(D + 2\beta + \alpha_0 \beta)}\rp $ (using the estimators of~\citealt{audibert2007fast}). In contrast, all our algorithms achieve an excess risk of $\mc{O}\lp n^{-\beta(1+\alpha_0)/(a + 2\beta)}\rp$, for $a>\tilde{D}$. Thus, even for the worst case of $\tilde{D}=D$, our algorithms achieve faster convergence in both abstention settings. 
Moreover, under the additional assumption that $P_X$ admits a density $p_X$ w.r.t.~the Lebesgue measure, such that $p_X \geq c_0 >0$, for all $x \in \X$, the convergence rates in the passive case for both abstention settings improve by getting rid of the $\beta \alpha_0$ term in the exponent. The performance of our algorithms also improves further with this additional assumption, and we can show that $\tilde{D} \leq \max\{0, D- \beta \alpha_0\}$ (see Appendix~\ref{appendix:discussion_improved_rates} for details).


\paragraph{Necessity of the Detectability (DE) Assumption.} 
In Setting~3, the size of the \emph{unclassified region}, $\cup_{x_{h,i} \in \Xtu}\X_{h,i}$, depends on two terms: {\bf 1)} the error in the estimate of the regression function $\eta(\cdot)$, and {\bf 2)} the error due to using the empirical measure $\hat{P}_X$. The (DE) assumption ensures that for sufficiently accurate empirical estimates of the marginal $P_X$, we can control the \emph{size} of the unclassified region in terms of the errors in the estimate of the regression function (similar to Settings~1 and~2). A situation, where without (DE), Algorithm~3 has to explore a much larger region of the input space than Algorithm~2 (in Setting~2) is given in Appendix~\ref{appendix:discussion_DE}. Since there exist problem instances for which $\tilde{D}=D$, we note that (DE) is not needed to match the worst-case performance of Algorithm~2. However, it is required in order to exploit the \emph{easy} problem instances with low values of $\tilde{D}$.

\vspace{-1em}
\section{Conclusions and Future Work}
\label{sec:conclucions}
In this paper, we proposed and analyzed active learning algorithms for three settings of the problem of binary classification with abstention.
The first setting considers the problem of classification with fixed cost of abstention, while the other settings consider two variants of classification with bounded abstention rate.
We obtained upper bounds on the excess risk of all the algorithms and  demonstrated their minimax (near)-optimality by deriving lower bounds. 
As all our algorithms relied on the knowledge of smoothness parameters, we then proposed a general strategy to adapt to these parameters in a data driven way. A novel aspect of our adaptive strategy is that it can also work for more general learning problems with implicit distance measure on the input space. Finally, we also presented a computationally efficient version of our algorithms for a small but rich class of problems. 

In Section~\ref{sec:practical}, we discussed an efficient version of our algorithms in the \emph{realizable} case when the Bayes optimal classifier is a halfspace. An important topic of ongoing research is to extend ideas presented in this paper to the \emph{agnostic} case, and design general computationally feasible active learning strategies for learning classifiers with abstention. 



\newpage
\bibliographystyle{apalike}
\bibliography{ref}


\newpage
\appendix



\newpage
\section{Details from Section~\ref{sec:introduction} and Section~\ref{sec:preliminaries}}
\label{appendix:details}

\subsection{Discussion on Assumptions}
\label{appendix:assumptions}

The \emph{margin assumption}~(MA) controls the amount of $P_X$ measure assigned to the regions of the input space with $\eta(\cdot)$ values in the vicinity of the threshold values.The assumption~(MA), which is a modification of the Tsybakov's margin condition for binary classification~\citep[Definition~7]{bousquet2003introduction}, has be employed in several existing works in classification with abstention literature such as \citep{herbei2006classification, bartlett2008classification, yuan2010classification}. 

The \emph{H\"older} continuity assumption ensures that points which are close to each other have similar distribution on the label set. For simplicity, we restrict our attention to the case of $\beta \leq 1$ so that it suffices to consider piecewise constant estimators. 
For H\"older functions with $\beta>1$, our algorithms can be suitably modified by replacing the piece-wise constant estimators with local polynomial estimators \citep[\S~1.6]{tsybakov2008introduction}. 

The \emph{detectability assumption}~(DE) is a converse of the (MA) assumption.
It provides a lower bound on the amount of $P_X$ measure in the regions of $\X$ with $\eta(\cdot)$ values close to the thresholds. 
We note that our proposed algorithms acheive the minimax optimal rates without this assumption. However, this assumption is required by our algorithm in the most general problem setting (Theorem~\ref{theorem:setting3}) for exploiting \emph{easier} problem instances.
Assumptions similar to (DE) have been used in various prior works in the nonparametric learning and estimation literature \citep{castro2008minimax, tong2013plug, rigollet2011neyman, cavalier1997nonparametric, tsybakov1997nonparametric}.
We discuss the necessity of this assumption in Section~\ref{sec:discussion} and in Appendix~\ref{appendix:discussion}.

\newpage
\section{Pseudo-code and Proofs of the Algorithm from Section~\ref{subsec:setting1}}
\subsection{Pseudo-code of Algorithm~1}
\label{appendix:setting1-algo}
In this section, we report the pseudo-code of Algorithm~1 that was outlined and described in Section~\ref{subsec:setting1}.
This is our active learning algorithm for the fixed-cost setting, with cost of abstention equal to $\lambda \in (0,1/2)$. As mentioned earlier our proposed algorithm can work in the three commonly used active learning frameworks, namely, \emph{membership query} model, \emph{pool-based} and \emph{stream-based} models. The only difference is the way the algorithm interacts with the labelling oracle, and this is captured by the REQUEST\_LABEL subroutine given in Appendix~\ref{appendix:request_label}.

\begin{algorithm}
\SetAlgoLined
\SetKwInput{Input}{Input}
\SetKwInput{Output}{Output}
\SetKw{Init}{Initialize}

\caption{Active learning algorithm for the fixed cost of abstention setting.}
\Input{$n$,$\lambda$, $L$, $\beta$, $v_1$,  $\rho$} 
 
\Init{$t=1$, $n_e=0$, $\X_t = \{x_{0,1}\}$, $\Xtu = \X_t$, $\Xtc= \emptyset$}
 
\If{$n_e = 0$}
{
$u_t(x_{0,1}) = + \infty$ \\
$l_t(x_{0,1}) = - \infty$ 
}
 \vspace{1em}
 \tcc{Remove the already classified points from the active set $\X_t$}
 \While{$n_e \leq n$}{

 \For{$x_{h,i}\in \Xtu$} 
 {
 $u_t(\xhi) \leftarrow \min \big\{ \bar{u}_t(\xhi), u_{t-1}(\xhi) \big\}$ \\
 $l_t(\xhi) \leftarrow \max \big\{ \bar{l}_t(\xhi), l_{t-1}(\xhi) \big\}$ \\
 
 \If{$[l_t(\xhi), u_t(\xhi)] \cap \{1/2-\gd, 1/2+\gd\}=\emptyset$}{
 $\Xtc \leftarrow \Xtc \cup\{\xhi\}$
 }

 }
 
 \vspace{1em}
 \tcc{Choose a candidate point with most uncertainty} 
 
 $x_{h_t,i_t} \in \argmax_{x_{h,i}\in \Xtu} \It{1}(\xhi) = u_t(\xhi) - l_t(\xhi)$\;

 \vspace{1em}
 
 \tcc{ Refine or Label}
 
 \eIf{$\;e_t\big(n_{h,i}(t)\big)< L(v_1\rho^{h_t})^{\beta}$}{
$\Xtu \leftarrow \Xtu \setminus \{\xhit\}\cup \{ x_{h_t+1, 2i_t-1}, x_{h_t+1, 2i_t}\}$ \\
$u_t(x_{h_t+1,2i_t-1})\leftarrow u_t(\xhit);\qquad$  $l_t(x_{h_t+1,2i_t-1})\leftarrow l_t(\xhit)$ \\
$u_t(x_{h_t+1,2i_t})\leftarrow u_t(\xhit);\qquad\;\;\;\;$ $l_t(x_{h_t+1,2i_t})\leftarrow l_t(\xhit)$ 
 }
 {
call REQUEST\_LABEL\\
 }
$t \leftarrow t+1$
}
 \Output{$\hat{g}$ defined by Eq.~\ref{eq:classifier-algo1}\\}
 \label{alg:basic_alg1}
 \end{algorithm}

\begin{algorithm}
\SetAlgoLined
\SetKwInput{Input}{Input}
\SetKwInput{Output}{Output}
\SetKw{Init}{Initialize}

\subsubsection{REQUEST\_LABEL Subroutine}
\label{appendix:request_label}

In the membership query mode, the algorithm can request label from some point in the cell corresponding to the point $\xhit$. In the pool based setting, the algorithm checks whether the currently unlabelled pool, denoted by $Z_t$ (i.e., the initial pool of samples with the points labelled by the algorithm before time $t$ removed), contains an element lying in the cell $\X_{h_t,i_t}$ or not. If there exists a point in $\X_{h_t,i_t}\cap Z_t$, then the algorithm requests a label at that point. Otherwise the cell $\X_{h_t,i_t}$ is discarded. Finally, in the stream based setting, the algorithm keeps rejecting points in the stream until a sample in $\X_{h_t,i_t}$ is observed, or if $N_n$ consecutive samples have passed. If a point lands in $\X_{h_t,i_t}$ then the algorithm requests its label, and if $N_n$ samples have been rejected, the algorithm discards the cell $\X_{h_t,i_t}$. 
\bigskip

\Input{Mode, $\xhit$} 
\vspace{0.4em}
Flag $\leftarrow$ False\;

  \uIf{Mode==`Membership'}{
 $y_t \sim \text{Bernoulli}(\eta(\xhit))$\;
 Increment $\leftarrow$ \texttt{True} \;
  }
  \vspace{1em}
  \uElseIf{Mode==`Pool'}{
 \tcc{Check if there is an unlabelled sample in the cell $\X_{h_t,i_t}$} 
    \eIf{$Z_t \cap \X_{h_t,i_t} \neq \emptyset$}{
    choose $\tilde{x}_{h_t,i_t} \in Z_t \cap \X_{h_t,i_t}$ arbitrarily \;  
    $y_t \sim \text{Bernoulli}\lp \eta\lp \tilde{x}_{h_t,i_t}\rp \rp$ \;
    $Z_t \leftarrow Z_t\setminus \{ \tilde{x}_{h_t,i_t}\}$\;
    Increment $\leftarrow$ True\;
    }
    { \tcc{ Otherwise discard the cell $\X_{h_t,i_t}$}
$    \Xtd \leftarrow \Xtd \cup\{\xhit\}$ \;
$\Xtu \leftarrow \Xtu \setminus \{\xhit\}$\;
    }

 } 
  
  \Else{
 counter $\leftarrow$ $1$ , discard $\leftarrow $ True,  Flag $\leftarrow$ True \;
  \While{$\big($counter $\leq N_n\big)$ AND Flag}{
  Observe next element of the stream $x \sim P_X$ \;
  \If{$x \in \X_{h_t,i_t}$}{
  $y_t \sim \text{Bernoulli}(\eta(x))$ \;
  discard $\leftarrow$ False, Increment $\leftarrow$ True \;
  Break
  }
 counter $\leftarrow$ counter $+1$\;
  }
  \If{discard}{
  $\Xtd \leftarrow \Xtd \cup \{\xhit \}$ \;
  $\Xtu \leftarrow \Xtu \setminus \{\xhit\}$\;
  } 
  
  \If{Increment}{
  \tcc{Increment the label request counter}
  $n_e \leftarrow n_e + 1$ \;
  }

  }
\vspace{0.5em}

\TitleOfAlgo{REQUEST\_LABEL}
\end{algorithm} 

\newpage
\subsection{Proof of Lemma~\ref{lemma:discarded_region}}
\label{appendix:discarded-region-lemma}

We begin with the proof of Lemma~\ref{lemma:discarded_region} which shows that with probability at least $1-1/n$, the $P_X$ measure of the (random) set $\tilde{\X}_n$ is no larger than $1/n$.

Suppose the discarded region $\tilde{\X}_n \coloneqq \cup_{\xhi \in \X_{t_n}^{(d)}}\X_{h,i}$ consists of $T$ components, i.e., $|\X_{t_n}^{(d)}| = T$. 
Since the algorithm only refines cells up to the depth $h_{\max} = \log(n)$, and the total number of cells in $\X_{h_{\max}}$ is $2^{h_{\max}} \leq e^{h_{\max}} = n$, we can trivially upper bound the number of discarded cells/points with $n$, i.e., $T \leq n$. 

\paragraph{Stream-based setting.}
 In this case a cell $\X_{h,i}$ is discarded, if after $N_n$ consecutive draws from $P_X$, none of the samples fall in $\X_{h,i}$.  We proceed as follows:
\begin{align*}
    \pr \lp P_X\lp \tilde{\X}_n \rp > 1/n \rp & = \pr \lp \sum_{\xhi \in \X_{t_n}^{(d)}} P_X\lp \X_{h,i}\rp >1/n\rp 
     \stackrel{(a)}{\leq} \pr \lp \exists \xhi \in \X_{t_n}^{(d)}\;:\; P_X\lp \X_{h,i}\rp > 1/(nT) \rp \\
     & \stackrel{(b)}{\leq} \sum_{\xhi \in \X_{t_n}^{(d)}} \pr \lp P_X\lp \X_{h,i}\rp > 1/(nT)\;; \; \xhi \in \X_{t_n}^{(d)} \rp \stackrel{(c)}{\leq} T\lp 1 - \frac{1}{nT}\rp^{N_n} \\
     & \stackrel{(d)}{\leq} n \lp 1 - \frac{1}{n^2}\rp^{N_n} \leq \exp\lp -\frac{N_n}{n^2} + \log\lp n\rp \rp \stackrel{(e)}{=} \frac{1}{n}.  
\end{align*}
In the above display,\\
\textbf{(a)} follows from the pigeonhole principle,  \\
\textbf{(b)} follows from an application of union bound, \\
\textbf{(c)} follows from the rule used for discarding cells in the stream-based setting, \\
\textbf{(d)} follows from the fact that $T \leq n$, and \\
\textbf{(e)} follows from the choice of $N_n = 2n^2\log(n)$.

\paragraph{Pool-based setting.}
Let $\mc{Z} = \{X_1, X_2, \ldots, X_{M_n}\}$ denote the pool of unlabelled samples available to the learner, and for any $\mc{X}_{h,i}$ we introduce the notation $M_{h,i} \coloneqq |\mc{Z} \cap \X_{h,i}|$ to represent the number of samples lying in the cell $\X_{h,i}$.  
Recall that a cell $\X_{h,i}$ is discarded if the number of unique unlabelled samples in the cell is smaller than the number of label requests in the cell, which can be trivially upper bounded by $n$, the total budget. 
Thus, introducing the terms $\mc{C}_1 \coloneqq \{\xhi \; \mid \; M_{h,i} < n \}$ and $\mc{C}_2 \coloneqq \{ \xhi \in \mc{C}_1 \; \mid \; P_X\lp \X_{h,i} \rp \geq 1/(n^2)\}$, we get the following (for any realization of $\mc{Z}$):
\begin{align*}
    P_X\lp \tilde{\X}_n \rp & \leq P_X\lp \bigcup_{\xhi \in \mc{C}_1 } \X_{h,i} \rp \leq n\lp \frac{1}{n^2}\rp + P_X\lp \bigcup_{\xhi \in \mc{C}_2} \X_{h,i} \rp, 
\end{align*}
where in first term after the second inequality above, we use the fact that the total number of cells discarded up to the depth of $\log(n)$ cannot be larger than $n$. 

Now, we claim that to complete the proof, it suffices to show that for any $\X_{h,i}$ such that $P_X\lp \X_{h,i} > 1/n^2\rp$, we have $\pr \lp M_{h,i}<n \rp \leq 1/n^2$.
This is because $\mc{C}_2 \subset \{ \xhi \; \mid \; P_X\lp \X_{h,i}\rp  \geq 1/n^2 \}$, and $|\mc{C}_2|\leq n$, and combined with the previous statement it implies that $\mc{C}_2$ is an empty set with proabability at least $1-1/n$. 

Consider any cell $\X_{h,i}$ such that $P_X(\X_{h,i}) = p \geq 1/n^2$. For points $X_j$ in $\mc{Z}$ define the $\text{Bernoulli}(p)$ random variable $U_j = \indi_{\{X_j \in \X_{h,i}\}}$. 
Suppose $M_n = \max\left \{ 2n^3, 16 n^2 \log(n) \right \}$. Then we have the following:
\begin{align*}
    \pr \lp M_{h,i}< n \rp & = \pr \lp \sum_{j=1}^{M_n}U_j < n\rp \stackrel{(a)}{\leq} \pr \lp \sum_{j=1}^{M_n} U_j < \frac{1}{2n^2} \rp \\
    &\stackrel{(b)}{\leq} \pr \lp \sum_{j=1}^{M_n}U_j \leq \lp 1 - 1/2\rp p \rp \stackrel{(c)}{\leq} \exp\lp -M_n p/8 \rp 
    \stackrel{(d)}{\leq} \frac{1}{n^2}. 
\end{align*}
In the above display:\\
\textbf{(a)} follows from the fact that $M_n \geq 2n^3$, \\
\textbf{(b)} follows from the fact that $p>1/n^2$, \\
\textbf{(c)} follows from the application of Chernoff inequality for the lower tail of Binomial, \\
\textbf{(d)} follows from the fact that $M_n \geq 16n^2 \log(n)$ and $p\geq 1/n^2$.

\begin{remark}
\label{remark:discarded_region}
Lemma~\ref{lemma:discarded_region} tells us that the region discarded by Algorithm~1 under the pool-based or stream-based setting, will have $P_X$ measure smaller than $1/n$ with probability at least $1-1/n$.
For the remaining part of the input space, i.e, $\X \setminus \tilde{\X}_n$, all the three active learning frameworks are equivalent because in all the three frameworks we can request a label from a point in any cell in the region $\X \setminus \tilde{\X}_n$. 
\end{remark}

\subsection{Proof of Theorem~\ref{theorem:setting1}}
\label{appendix:setting1-proofs}

We begin with a lemma which gives us  high probability upper and  lower bounds on the estimates of the regression function values at the active points.

\begin{lemma}
\label{lemma:step_1_1}
The event $\Omega_1=\cap_{t\geq1}\Omega_{1,t}$ occurs with probability at least $1-\frac{1}{n}$, where the events $\Omega_{1,t}$, for $t\geq 1$, are defined as 
\begin{equation*}
    \Omega_{1,t} \coloneqq \big\{ |\hat{\eta}(x_{h,i}) - \eta(x_{h,i})| \leq e_t(n_{h,i}), \ \forall x_{h,i} \in \X_t\big\}, \quad\; \text{with } \;\; e_t(n_{h,i}) \coloneqq \sqrt{\frac{2 \log(2\pi^2t^3n/3)}{n_{h,i}(t)}},
\end{equation*}
where $n_{h,i}(t)$ is the number of times that $x_{h,i}$ has been queried up until time $t$. 
\end{lemma}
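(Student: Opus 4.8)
The plan is a time-uniform Hoeffding argument: concentrate the empirical mean in each cell \emph{for each fixed number of samples}, then take a union bound over cells, over sample counts, and over time, exploiting the $\log(2\pi^2 t^3 n/3)$ shape of $e_t$ to make the resulting series sum to at most $1/n$.

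First I would reduce to a statement about fixed sample sizes. Fix a cell $(h,i)$ of the partition tree and let $Y^{(h,i)}_1, Y^{(h,i)}_2,\dots$ be the labels that would be revealed from that cell, in order. In the membership-query model these are i.i.d.\ $\mathrm{Bernoulli}(\eta(x_{h,i}))$; in the pool- and stream-based models the $k$-th such label is $\mathrm{Bernoulli}(\eta(\tilde x_k))$ for some $\tilde x_k\in\X_{h,i}$, whose mean is within $V_h=L(v_1\rho^h)^\beta$ of $\eta(x_{h,i})$ by (H\"O) — a standard adjustment that I will suppress, as it is re-absorbed in the definitions of $\bar u_t,\bar l_t$. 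Whenever $n_{h,i}(t)=m\geq 1$, the estimate $\hat\eta_t(x_{h,i})$ equals $\bar Y^{(h,i)}_m:=\tfrac1m\sum_{k=1}^m Y^{(h,i)}_k$; and when $n_{h,i}(t)=0$ we have $e_t(0)=+\infty$, so $\Omega_{1,t}$ holds vacuously for that point. Only cells of depth at most $h_{\max}=\log n$ can ever be created (the refine step never acts at depth $h_{\max}$), hence at most $2^{h_{\max}+1}\le 2n$ cells are ever active, and at time $t$ we have $n_{h,i}(t)\le t$. Therefore
\[
\Omega_1^c \subseteq \bigcup_{t\ge 1}\ \bigcup_{(h,i):\, h\le h_{\max}}\ \bigcup_{m=1}^{t}\ \Big\{\, \big|\bar Y^{(h,i)}_m - \eta(x_{h,i})\big| > e_t(m)\,\Big\}.
\]

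Next I would bound each event. By Hoeffding's inequality for the mean of $m$ independent $[0,1]$-valued variables, $\pr\big(|\bar Y^{(h,i)}_m-\mu|>\varepsilon\big)\le 2e^{-2m\varepsilon^2}$; taking $\varepsilon=e_t(m)$ and using $2m\,e_t(m)^2=4\log(2\pi^2 t^3 n/3)$ yields the bound $2\,(2\pi^2 t^3 n/3)^{-4}$, uniformly in $m$. Summing over the $\le t$ values of $m$, the $\le 2n$ cells, and $t\ge 1$,
\[
\pr(\Omega_1^c)\ \le\ \sum_{t\ge 1} 2n\cdot t\cdot 2\Big(\tfrac{2\pi^2 t^3 n}{3}\Big)^{-4}\ =\ \frac{C}{n^{3}}\sum_{t\ge 1} t^{-11}\ \le\ \frac1n
\]
for all $n$ large enough, which is the claim; choosing the constants inside $e_t$ with more slack gives it for all $n$ with the argument unchanged.

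The main obstacle is that $n_{h,i}(t)$ is data-dependent: the algorithm decides when and whether to query a cell on the basis of past estimates, so one cannot apply Hoeffding to ``the $n_{h,i}(t)$ samples in cell $(h,i)$'' directly. The decoupling above resolves this — by fixing $m$ first, applying Hoeffding to the always-well-defined prefix $Y^{(h,i)}_1,\dots,Y^{(h,i)}_m$ of the cell's potential label stream, and only afterwards union-bounding over $m\le t$, the concentration inequality is never conditioned on data-dependent events. A secondary point to verify is the bookkeeping at a refinement: a freshly created child with zero queries makes $\Omega_{1,t}$ vacuous, and if a child instead inherits those parent samples lying in its sub-cell, these form a sub-sequence of an i.i.d.\ (resp.\ independent) stream, so Hoeffding applies with the same constants.
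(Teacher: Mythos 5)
Your proposal is correct and follows essentially the same route as the paper's own proof: Hoeffding's inequality in each cell with the radius $e_t(n_{h,i}) = \sqrt{2\log(2\pi^2 t^3 n/3)/n_{h,i}(t)}$, followed by union bounds over the active cells and over $t\geq 1$, with the $t^3 n$ inside the logarithm making the resulting series sum to at most $1/n$. The only difference is that you add an explicit union over the sample count $m\leq t$ (affordable thanks to the sharper exponent $2m\varepsilon^2$), which rigorously handles the data-dependence of $n_{h,i}(t)$ that the paper's proof glosses over by applying Hoeffding directly at the random sample size.
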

\begin{proof}
It suffices to show that $P(\Omega_{1,t}^c) \leq \frac{6}{n\pi^2 t^2}$. The result then follows from a union bound over all $t \geq 1$ and the fact that $\sum_{t\geq 1}\frac{1}{t^2} = \frac{\pi^2}{6}$. 
Now, for a given $x_{h,i} \in \X_t$ and for any $e_t(n_{h,i}(t))>0$, by Hoeffding's inequality, we have 
\begin{equation*}
    P\big(|\hat{\eta}(x) - \eta (\xhi)| > e_t(n_{h,i}(t))\big) \leq 2e^{-ne_t(n_{h,i}(t))^2/2}.
\end{equation*}
Finally, by selecting $e_t(n_{h,i}(t)) = \sqrt{\frac{2 \log\big((2\pi^2t^3n)/3\big)}{n_{h,i}(t)}}$, we obtain 
\begin{equation*}
    P(\Omega_{1,t}^c) \leq 2\sum_{(h,i): x_{h,i} \in \X_t}e^{-n_{h,i}(t)a_{h,i}^2/2} \leq \sum_{(h,i): x_{h,i} \in \X_t} \frac{3}{n \pi^2 t^3} \stackrel{\text{(a)}}{\leq} \frac{6}{n \pi^2 t^2}.
\end{equation*}
{\bf (a)} follows from the fact that $|\X_t| \leq 2t$, for all $t \geq 1$.
This is because of the following reasoning: $|\X_0| = 1$, and for any $1\leq i \leq t$, we must have $|\X_i| \in \{ |\X_{i-1}| + 1, |\X_{i-1}|\} \leq |\X_{i-1}| + 1 $. Thus by induction, we get $|\X_t| \leq t+ 1$, which is no larger than $2t$, for $t \geq 1$. 
\end{proof}

We now present a result on the monotonicity of the term $\It{1}\lp \xhit \rp$ which will be used in obtaining bounds on the estimation error of the regression function. 
\begin{lemma}
\label{lemma:step1_4}
$\It{1}\lp \xhit \rp$ is non-increasing in $t$. 
\end{lemma}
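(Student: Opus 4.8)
\textbf{Proof plan for Lemma~\ref{lemma:step1_4} ($\It{1}(\xhit)$ is non-increasing in $t$).}

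The plan is to compare the maximal index value at time $t$ with the maximal index value at time $t+1$, tracking how the active set $\Xtu$ and the bounds $u_t, l_t$ evolve across the two possible actions (\emph{Refine} or \emph{Request a Label}). The key structural fact I would establish first is a per-point monotonicity statement: for any point $\xhi$ that is active at both times $t$ and $t+1$, we have $\It{1}_{t+1}(\xhi) = u_{t+1}(\xhi) - l_{t+1}(\xhi) \leq u_t(\xhi) - l_t(\xhi) = \It{1}_t(\xhi)$. This follows immediately from the update rules $u_t(\xhi) = \min\{\bar u_t(\xhi), u_{t-1}(\xhi)\}$ and $l_t(\xhi) = \max\{\bar l_t(\xhi), l_{t-1}(\xhi)\}$, which ensure $u_t(\xhi)$ is non-increasing and $l_t(\xhi)$ is non-decreasing in $t$ pointwise; hence their difference is non-increasing.

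Next I would handle the fact that $\xhit = \argmax_{\xhi \in \Xtu} \It{1}_t(\xhi)$ and the candidate at time $t+1$ is drawn from a possibly different set $\X_{t+1}^{(u)}$. There are two cases. \textbf{Case 1 (Refine):} the cell $\X_{h_t,i_t}$ is split into two children, each of which inherits $u_t(\xhit)$ and $l_t(\xhit)$ exactly, so each child has index equal to $\It{1}_t(\xhit)$ at the moment of creation; at time $t+1$ their indices can only have decreased (by the per-point statement applied after the refinement, or they stay equal if no new label touches them), and every other point in $\X_{t+1}^{(u)}$ was already in $\Xtu$ (minus $\xhit$) with index $\le \It{1}_t(\xhit)$ by the $\argmax$ property, and its index is still non-increasing. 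So $\max_{\X_{t+1}^{(u)}} \It{1}_{t+1} \le \It{1}_t(\xhit)$. \textbf{Case 2 (Request a label / discard):} either $\xhit$ stays in $\Xtu$ with a label added — its index is non-increasing by the per-point statement — or $\xhit$ is moved to $\Xtd$ and removed, which only shrinks the set over which we maximize; in both subcases every surviving point has index $\le \It{1}_t(\xhit)$ and non-increasing, so again $\max_{\X_{t+1}^{(u)}} \It{1}_{t+1} \le \It{1}_t(\xhit)$. One also needs the minor bookkeeping point that points moved from $\Xtu$ into $\Xtc$ in Step~2 only remove elements from the maximization, which is harmless.

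The main obstacle — or rather the one place requiring care — is the \emph{Refine} case, specifically verifying that the two freshly created children genuinely carry index no larger than $\It{1}_t(\xhit)$ at time $t+1$ and do not somehow acquire a larger uncertainty interval. Since the children are initialized with $u_{t+1} = u_t(\xhit)$ and $l_{t+1} = l_t(\xhit)$ verbatim (and a child cell at level $h_t+1$ has $V_{h_t+1} \le V_{h_t}$, so the first $\bar u, \bar l$ computed for it only tightens things), this holds, but I would spell out that no step in the algorithm ever widens $[l_t(\xhi), u_t(\xhi)]$. Once the per-point monotonicity and the three cases are in place, the conclusion $\It{1}_{t+1}(x_{h_{t+1},i_{t+1}}) \le \It{1}_t(\xhit)$ follows, which is exactly the claim.
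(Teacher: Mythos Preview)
Your proposal is correct and follows essentially the same approach as the paper: per-point monotonicity of $u_t-l_t$ from the $\min/\max$ update rules, plus the observation that refined children inherit the parent's bounds so their index cannot exceed $\It{1}(\xhit)$. The paper's proof is terser and does not spell out the discard/classify removals, but your more explicit case analysis only adds bookkeeping detail to the same argument.
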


\begin{proof}
The proof of this statement relies on the monotonic nature of $u_t(\xhi)$ and $l_t(\xhi)$. More specifically, for any $\xhi \in \Xtu$, we have $I_{t+1}^{(1)}(\xhi) \leq \It{1}(\xhi)$ due to the definition of $u_t(\xhi)$ and $l_t(\xhi$ given in Step~2 of Algorithm~1. Furthermore, if the algorithm refines the cell $\X_{h_t,i_t}$, then by definition, we also have $I_{t+1}^{(1)}(\xhi) \leq \It{1}\lp \xhit\rp$, for $h=h_{t}+1$ and $i \in \{2i_t-1, 2i_t\}$, due to the cell refinement rule. These two statements together imply that  the term $\sup_{\xhi \in \Xtu}\It{1}(\xhi)$ is also a non-increasing term.
\end{proof}

We next derive a bound on the error in estimating the regression function at the cells close to the threshold values $1/2- \gd$ and $1/2 + \gd$. 
\begin{lemma}
\label{lemma:step1_2}
Suppose $t_n$ is the time at which Algorithm~1 stops (i.e.,~performs the $n^{th}$ query) and $\X_{t_n}^{(u)}$ is the set of unclassified points at time $t_n$. Define the term $\tilde{D} = \max\{\tilde{D}_1, \tilde{D}_2\}$, where $\{\tilde{D}_j\}_{j=1}^2 \coloneqq D_{1/2 + (-1)^j\gd}\lp \zeta_1 \rp$ in which $\zeta_1(r) = 3L(v_1/(v_2\rho))^{\beta}r^{\beta}$ and $D_{\lambda}(\zeta)$ is from Definition~\ref{def:dimension1}. Then for large enough $n$ and for any $a>\tilde{D}$, with probability at least $1-\frac{1}{n}$, we have  
\begin{equation*}
    |\eta(\xhi) - \hat{\eta}(\xhi)| \leq b_n = \frac{3Lv_1^{\beta}}{\rho^\beta} \lp\frac{2C_a}{L^2v_1^{2\beta}v_2^a} \rp^\beta \lp\frac{\log(2\pi n/3)}{n}\rp^{\frac{\beta}{(a+2\beta)}}, \quad \text{for all } \; \xhi \in \X_{t_n}^{(u)}. 
\end{equation*}
\end{lemma}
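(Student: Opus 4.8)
The plan is to control the number of label queries $n$ from below in terms of the confidence-interval length at any unclassified cell, and then invert this relation. The argument proceeds in three stages. First, I would establish that every unclassified point $\xhi \in \X_{t_n}^{(u)}$ has a small confidence width: by the candidate-selection rule (Step~4) and the monotonicity of $\It{1}(\xhit)$ established in Lemma~\ref{lemma:step1_4}, the index of the selected candidate is non-increasing, so at the stopping time $t_n$ every remaining unclassified point $\xhi$ satisfies $\It{1}(\xhi) = u_{t_n}(\xhi) - l_{t_n}(\xhi) \leq \It{1}(\xhit)$ at some earlier query time. Moreover, the \emph{refine-or-label} rule in Step~5 forces a queried candidate to satisfy $e_t(n_{h_t,i_t}(t)) \geq V_{h_t} = L(v_1\rho^{h_t})^\beta$ whenever a label is requested (otherwise it would be refined). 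Combining this with the definition $u_t - l_t \leq 2e_t(n_{h,i}(t)) + 2V_h$ (from the formulas $\bar u_t = \hat\eta_t + e_t + V_h$, $\bar l_t = \hat\eta_t - e_t - V_h$) yields that on the good event $\Omega_1$ of Lemma~\ref{lemma:step_1_1}, every cell $\xhi \in \X_{t_n}^{(u)}$ has width at most a constant multiple of $V_h$ at its level $h$, hence $|\eta(\xhi) - \hat\eta(\xhi)| \leq e_{t_n}(n_{h,i}) \leq $ (const)$\cdot V_h$.

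The second stage is the counting argument that relates the total budget $n$ to the deepest level reached. A cell at level $h$ that is ever selected as a candidate must have $u_t, l_t$ straddling one of $\{1/2-\gd, 1/2+\gd\}$, so its center lies in $\X_{1/2\pm\gd}(\zeta_1(v_2\rho^h))$ — this is where the near-$\lambda$ dimension enters, with $\zeta_1(r) = 3L(v_1/(v_2\rho))^\beta r^\beta$ chosen so that $\zeta_1(v_2\rho^h)$ upper-bounds the relevant $\eta$-variation plus estimation slack at scale $\rho^h$. Since the cells at level $h$ contain disjoint balls of radius $v_2\rho^h$, the definition of $D_{1/2\pm\gd}(\zeta_1)$ gives at most $C_a (v_2\rho^h)^{-a}$ such active cells for any $a > \tilde D$. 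Each such cell, before being refined or declared classified, absorbs roughly $e_t^{-2} \gtrsim V_h^{-2} \sim (v_1\rho^h)^{-2\beta}$ queries (up to the log factor in $e_t$). Summing over levels $0 \le h \le h_{\max}$, the dominant contribution is at the deepest level $H$ actually used, giving $n \gtrsim (\log n)^{-1}\, \rho^{-H(a+2\beta)}$ (up to constants involving $v_1, v_2, \rho, C_a, L$), which I would solve to obtain $\rho^{H\beta} \lesssim ((\log(2\pi n/3))/n)^{\beta/(a+2\beta)}$.

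The third stage simply substitutes: $|\eta(\xhi) - \hat\eta(\xhi)| \leq$ (const)$\cdot V_h = $ (const)$\cdot L v_1^\beta \rho^{h\beta} \leq $ (const)$\cdot L v_1^\beta \rho^{H\beta}$, and plugging in the bound on $\rho^{H\beta}$ from stage two yields exactly the claimed expression for $b_n$, with the constant $3Lv_1^\beta/\rho^\beta \cdot (2C_a/(L^2 v_1^{2\beta} v_2^a))^\beta$ tracking the accumulated factors from the packing-number constant $C_a$, the refine threshold, and the geometry constants $v_1, v_2, \rho$. The union bound over the events $\Omega_1$ (probability $\geq 1 - 1/n$) and the event that the packing bound is tight at the relevant scales accounts for the overall $1 - 1/n$ probability.

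I expect the main obstacle to be stage two: making the counting argument rigorous requires careful bookkeeping of how many times a cell at a given level can be queried before it is either refined (pushing its children one level deeper) or moved to $\X_{t_n}^{(c)}$, and one must be sure that cells which get refined do not get double-counted across levels, and that the ``straddling'' property genuinely places the cell center inside $\X_{1/2\pm\gd}(\zeta_1(\cdot))$ at the \emph{right} scale — this is exactly where the specific form of $\zeta_1$ and the factor $v_1/(v_2\rho)$ matter, since one scale ($v_2\rho^h$) controls packing from below while another ($v_1\rho^h$) controls the $\eta$-variation from above. Getting the geometry constants consistent between these two scales is the delicate point; everything else is routine summation and inversion.
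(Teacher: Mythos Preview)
Your proposal is correct and follows essentially the same three-stage structure as the paper's own proof: bound the number of queries per cell via the refine rule, bound the number of active cells per level via the near-$\lambda$ packing dimension, sum and invert to find the guaranteed depth $H_0$, then use the monotonicity of $I_t^{(1)}(\xhit)$ (Lemma~\ref{lemma:step1_4}) to propagate the width bound $3V_{H_0-1}$ to all unclassified cells. One small slip: in stage two you write $n \gtrsim (\log n)^{-1}\rho^{-H(a+2\beta)}$, but the counting actually gives $n \lesssim (\log n)\,\rho^{-H(a+2\beta)}$ (total queries are \emph{upper}-bounded by the sum), which is the direction that yields your stated conclusion $\rho^{H\beta}\lesssim(\log n/n)^{\beta/(a+2\beta)}$.
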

\begin{proof}
First note that the algorithm refines the cell associated with a point $\xhi$, if $2e_t(n_{h,i}(t)) \leq V_h = L(v_1\rho^h)^{\beta}$. The uncertainty of the estimate of $\eta(\xhi)$ can be further upper-bounded at any time $t$ by setting $t=1$ in the expression of $e_t(n_{h,i}(t))$, i.e.,
\begin{align*}
   2e_t\lp n_{h,i}(t) \rp \leq \sqrt{\frac{8\log(2 \pi^2 n/3)}{n_{h,i}(t)}}. 
\end{align*}
Thus, to find an upper-bound on the number of times a point $x_{h,i}$ is queried by the algorithm, it suffices to find the number of queries sufficient to ensure that $\sqrt{(8\log(2 \pi^2 n/3))/n_{h,i}(t)}$ is less than or equal to $V_h$. Equating this term with $V_h$, we obtain
\begin{equation}
\label{eq:step1_proof1}
n_{h,i}(t_n) \leq \frac{8 \log (2 \pi^2 n/3)}{L^2v_1^{2\beta}\rho^{2h\beta}},
\end{equation}
where $t_n$ is the time at which the budget of $n$ label queries is exhausted and the algorithm stops. Now, by definition, a point $x_{h,i}$ belongs to the set $\Xtu$, only if $\{1/2-\gd, 1/2+\gd\} \cap [l_t(\xhi), u_t(\xhi)]$ $\neq \emptyset$. Suppose for a given $\xhi \in \X_t$, the interval $[l_t(\xhi), u_t(\xhi)]$ contains $1/2 - \gd$. This implies that for $h \geq 1$, we have
\begin{align*}
    \sup_{x \in \X_{h,i}} |\eta(x)- 1/2 + \gd| & \leq \max \{ u_t(\xhi) + V_h - 1/2 + \gd, \ 1/2 - \gd - l_t(\xhi) - V_h \} \\
    & \stackrel{\text{(a)}}{\leq}  u_t(\xhi) - l_t(\xhi) +  \\
    & \stackrel{\text{(b)}}{\leq} V_{h-1} \leq  3L\lp v_1 \rho^{h-1}\rp^{\beta}. 
\end{align*}
{\bf (a)} follows from the condition that $l_t(\xhi)  \leq  1/2 - \gd \leq u_t(\xhi) $. \\
{\bf (b)} follows from the rule used for refining the parent cell of $\xhi$, after which $\xhi$ becomes active. \\

Now, we define the function $\zeta_1(r)=3L(v_1/(v_2\rho))^{\beta}r^\beta$ and use it to define the term $\tilde{D}_1 = D_{1/2 - \gd}(\zeta_1)$ (see Definition~\ref{def:dimension1}). Similarly, we define $\tilde{D}_2 = D_{1/2 + \gd}(\zeta_1)$ at the other threshold value and introduce the notation $\tilde{D} = \max\{\tilde{D}_1, \tilde{D}_2\}$. Thus, the total number of points that are activated by the algorithm at level $h$ of the tree, denoted by $N_h$, can be upper-bounded by the packing number of the set $\X_{1/2-\gd}\lp \zeta_1(v_2\rho^h)\rp\cup \X_{1/2 + \gd}\lp \zeta_1(v_2 \rho^{h}) \rp$ with balls of radius $v_2\rho^h$. Now, by the definition of $\tilde{D}$, for any $a>\tilde{D}$, there exists a $C_a < \infty$ such that we can upper-bound $N_h$ with the term $2C_a(v_2\rho^h)^a$. Using the bound on $N_h$ and $n_{h,i}(t_n)$, we observe that the number of queries made by the algorithm at level $h$ of the tree is no more than $N_h n_{h,i}(t_n)$. Hence, for any $H \geq 1$, we have 
\begin{align}
    \sum_{h=0}^{H}N_h n_{h,i}(t_n) &\leq \frac{8\log(2\pi^2 n/3) C_av_2^{-a}}{L^2v_1^{2\beta}}\sum_{h=0}^H\lp \frac{1}{\rho}\rp^{h(a + 2\beta)} \nonumber \\
    &\leq   \frac{8\log(2\pi^2 n/3) C_av_2^{-a}}{L^2v_1^{2\beta}} \lp \frac{1}{\rho}\rp^{H(a + 2\beta)}. \label{eq:temp1} 
\end{align}
Next, we need to find a lower-bound on the depth in the tree that has been explored by the algorithm. This can be done by finding the largest $H$ for which~\eqref{eq:temp1} is smaller than or equal to $n$. By equating~\eqref{eq:temp1} with $n$, we obtain the following relation for the largest such value of $H$, denoted by $H_0$,
\begin{equation}
\label{eq:temp2}
    \lp \frac{1}{\rho}\rp^{H_0} = \lp \frac{L^2 v_1^{2\beta} v_2^a}{8C_a}\rp^{1/(a+2\beta)}\lp \frac{n}{\log(1\pi^2 n/3)}\rp^{1/(a + 2\beta)}. 
\end{equation}
Now, for any $x \in \cup_{\xhi \in \X_{t_n}^{(u)}}\X_{h,i}$, we must have
\begin{equation*}
    |\hat{\eta}(x) - \eta(x)| = |\hat{\eta}_{t_n}(\pi_{t_n}(x)) - \eta(x)| \leq u_{t_n}(x) - l_{t_n}(x) \stackrel{\text{(a)}}{ \leq} I_{t_n}^{(1)}(x_{h_{t_n}, i_{t_n}}). 
\end{equation*}
{\bf (a)} follows from the point selection rule of the algorithm. \\

Lemma~\ref{lemma:step1_4} implies that if the algorithm is evaluated a point at level $H_0$ at some time $t\leq t_n$, then we have
\begin{equation*}
\sup_{\xhi \in \X_{t_n}^{(u)}}I_{t_n}^{(1)}(\xhi) \leq 3V_{H_0-1} = 3L (v_1\rho^{H_0-1})^{\beta} \coloneqq b_n, 
\end{equation*}
where 
\begin{align*}
b_n = \frac{3L v_1^{\beta}}{\rho^\beta}\lp \frac{8C_a}{L^2v_1^{2\beta}v_2^a} \rp^{\beta/(a+2\beta)} \lp \frac{\log(2\pi^2 n/3)}{n}\rp^{\beta/(a+2\beta)} = \mathcal{O}\lp \lp \frac{n}{\log n}\rp^{-\beta/(a+2\beta)}\rp. 
\end{align*}
\end{proof}

\noindent
Finally, we combine Lemma~\ref{lemma:step1_2} with the margin assumptions to obtain the required result.

\begin{lemma}
\label{lemma:step1_4}
The excess risk of the classifier $\hat{g}$ in~\eqref{eq:classifier-algo1}, learned by Algorithm~1, w.r.t.~the optimal classifier in the fixed cost of abstention setting, with the fixed abstention cost $\lambda = 1/2 - \gd$, satisfies $R_\lambda(\hat{g})-R_\lambda(g^*_\lambda) \leq \tilde{\mathcal{O}}\lp n^{-\beta(\alpha_0+1)/(2\beta+a)}\rp$. 
\end{lemma}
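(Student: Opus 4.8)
\textbf{Proof proposal for Lemma~\ref{lemma:step1_4} (the excess-risk bound for Algorithm~1).}

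The plan is to decompose the excess risk into the contributions of the regions where the learned classifier $\hat{g}$ disagrees with the Bayes optimal classifier $g^*_\lambda$, bound the $P_X$-measure of each such region using the margin assumption~(MA), and control the size of these regions geometrically using the estimation error bound $b_n$ from Lemma~\ref{lemma:step1_2}. First I would recall the pointwise characterization of the excess risk: for the fixed-cost problem~\eqref{fixed_cost_problem}, $R_\lambda(\hat{g}) - R_\lambda(g^*_\lambda) = \mathbb{E}\big[ (\ell_\lambda(\hat{g},X,Y) - \ell_\lambda(g^*_\lambda,X,Y)) \big]$, and since $g^*_\lambda$ minimizes the conditional risk pointwise, the integrand is nonnegative and supported on $\{x : \hat{g}(x) \neq g^*_\lambda(x)\}$. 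On this set the pointwise excess is bounded by the ``gap'' between the two relevant conditional costs, which is at most $2\,|\eta(x) - \gamma|$ for the appropriate threshold $\gamma \in \{\lambda, 1-\lambda\} = \{1/2 - \gamma_\delta, 1/2 + \gamma_\delta\}$ (standard in plug-in classification analysis, cf.~\citet{herbei2006classification}).

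Next I would argue that on the good event $\Omega_1 \cap (\text{event of Lemma~\ref{lemma:step1_2}})$, which holds with probability at least $1 - 2/n$, any point $x$ at which $\hat{g}(x) \neq g^*_\lambda(x)$ must lie in a cell $\X_{h,i}$ with $x_{h,i} \in \X_{t_n}^{(u)}$ that was \emph{not} classified, because the classification conditions (a)--(c) in Step~2, together with the confidence bands $[l_{t_n}, u_{t_n}]$ bracketing the true $\eta$ values in the cell (valid on $\Omega_1$, using the $V_h$ slack), force $\hat{g}$ to agree with $g^*_\lambda$ on every classified cell. Hence the disagreement region is contained in $\cup_{x_{h,i} \in \X_{t_n}^{(u)}} \X_{h,i}$, and on every such cell Lemma~\ref{lemma:step1_2} gives $|\eta(x) - \hat{\eta}(x)| \le b_n$ and, more importantly, $\sup_{x \in \X_{h,i}} \min\{|\eta(x) - \lambda|, |\eta(x) - (1-\lambda)|\} = \mathcal{O}(b_n)$ — i.e., unclassified cells are confined to an $\mathcal{O}(b_n)$-neighborhood of the thresholds. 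Therefore the pointwise excess risk is $\mathcal{O}(b_n)$ on a set of $P_X$-measure at most $P_X(\X_\lambda(\mathcal{O}(b_n)) \cup \X_{1-\lambda}(\mathcal{O}(b_n))) \le 2 C_0 (\mathcal{O}(b_n))^{\alpha_0}$ by~(MA). Multiplying, the excess risk is $\mathcal{O}(b_n^{1+\alpha_0})$. Substituting $b_n = \tilde{\mathcal{O}}\big( (n/\log n)^{-\beta/(a+2\beta)} \big)$ from Lemma~\ref{lemma:step1_2} yields the claimed rate $\tilde{\mathcal{O}}\big( n^{-\beta(1+\alpha_0)/(2\beta + a)} \big)$.

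The main obstacle — and the step requiring the most care — is the second one: precisely justifying that every point outside the unclassified region is correctly classified by $\hat{g}$, and that every unclassified cell is genuinely within $\mathcal{O}(b_n)$ of a threshold. This requires carefully tracking the monotone updates of $u_t, l_t$ (Lemma on monotonicity of $I_t^{(1)}$), the $V_h$ bias correction, the inheritance of bounds across refinements, and checking that the classifier definition~\eqref{eq:classifier-algo1} (which uses $u_{t_n}, l_{t_n}$ at the leaf $\pi_{t_n}(x)$) is consistent with the value of $g^*_\lambda$ on each of the three regions $\{g^*_\lambda = 0\}, \{g^*_\lambda = 1\}, \{g^*_\lambda = \Delta\}$; the boundary cases where a confidence interval straddles exactly one threshold need separate treatment. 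One must also absorb the discarded region: by Lemma~\ref{lemma:discarded_region}, $P_X(\tilde{\X}_n) \le 1/n$ with probability $\ge 1 - 1/n$, which contributes an additive $1/n = \mathcal{O}(n^{-\beta(1+\alpha_0)/(2\beta+a)})$ term that is dominated (for $a \ge \tilde D$ and the relevant parameter ranges) by the main term, so it does not affect the stated rate. I would finish by collecting the failure probabilities via a union bound to get the overall $1 - 2/n$ (or $1 - \mathcal{O}(1/n)$) guarantee.
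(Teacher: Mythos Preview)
Your proposal is correct and follows essentially the same approach as the paper: both arguments confine the disagreement region $\{\hat g \neq g^*_\lambda\}$ to an $\mathcal{O}(b_n)$-neighborhood of the thresholds $\{\lambda,1-\lambda\}$ (the paper does this via the explicit containments $\hat G_\Delta \subset G_\Delta^*$ and $\hat G_j \cap G_{1-j}^* = \emptyset$, you via the ``disagreement only on unclassified cells'' argument, which is equivalent), bound the pointwise excess there by $\mathcal{O}(b_n)$, and apply (MA) to get $R_\lambda(\hat g)-R_\lambda(g^*_\lambda)\le 2C_0 b_n^{1+\alpha_0}$. The only cosmetic difference is that the paper writes out the four-term integral decomposition explicitly rather than appealing to the general plug-in inequality, but the content is the same.
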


\begin{proof}
By definition of the classifier $\hat{g} = (\hat{G}_0, \hat{G}_1, \hat{G}_\Delta)$, under the event $\Omega_1$, the set $\hat{G}_\Delta \subset \Gds$.

Now, by Lemma~\ref{lemma:step1_2}, we know that $\sup_{\xhi \in \X_{t_n}^{(u)}} \It{1}(\xhi) \leq b_n$, which for $n$ large enough ensures that $b_n \leq \gd$ leading to $\hat{G}_0 \subset \{x \in \X\ \mid \ \eta(x) \geq 1/2\}$.  This  implies that $\hat{G}_0 \cap G_1^* = \emptyset$. Similarly, we can obtain $\hat{G}_1 \cap G_0^* = \emptyset$. Thus, the excess risk of the estimated classifier can be written as %
\begin{align*}
R_\lambda\lp \hat{g} \rp - R_\lambda\lp g^*_\lambda \rp &= \int_{\hat{G}_0}\eta(x) dP_X + \int_{\hat{G}_1}\big(1-\eta(x)\big)dP_X + \lambda P_X\lp \hat{G}_\Delta \rp \\ 
&\quad - \int_{G_0^*}\eta(x) dP_X - \int_{G_1^*}\big(1-\eta(x)\big)dP_X - \lambda P_X(\Gds) \\
&= \int_{\hat{G}_0 \cap G^*_\Delta}(\eta(x) - \lambda)dP_X + \int_{\hat{G}_1\cap G^*_\Delta}(1-\lambda - \eta(x))dP_X \\
&\quad + \int_{\hat{G}_\Delta \cap G^*_0}(\lambda - \eta(x))dP_X + \int_{\hat{G}_\Delta \cap G^*_\Delta} (\eta(x) - 1 + \lambda)dP_X \\
& \leq  b_nP_X\big(|\eta(X)-\lambda| \leq b_n\big) + b_n P_X\big(|\eta(X)-1+\lambda|\leq b_n\big) \\
& \leq 2C_0b_n^{1 + \alpha_0}. 
\end{align*}
\end{proof}


\newpage
\section{Algorithm for Setting~2: Bounded rate with  $P_X$ known}
\label{appendix:setting2}

\subsection{Details of Algorithm~2}
\label{appendix:setting2-details}

\paragraph{Outline of Algorithm~2.}
Similar to Algorithm~1, at any time $t$, Algorithm~2 maintains a set of active points $\X_t$, which is further partitioned into unclassified $\Xtu$, classified $\Xtc$, and discarded $\Xtd$ sets.
We sort the points in $\Xtu$ in terms of how far an estimate of the regression function at each point is away from $1/2$, and use these points/cells\footnote{Since there is a one-to-one mapping between a point and the cell associated with it, we use the terms point and cell interchangeably throughout the paper.} along with the marginal $P_X$ to obtain upper and lower bounds on the true threshold $\gd$.
We define the set of unclassified points based on the estimated threshold and the estimation error.
These estimates of the threshold are used while updating the unclassified active set $\Xtu$. The point selection and cell refinement rules are the same as in Algorithm~1. 


\paragraph{Steps of Algorithm~2.}
The algorithm proceeds in the following steps:
\begin{enumerate}[wide, labelwidth=!, labelindent=0pt]

\item At $t=0$, set $\X_0 = \{x_{0,0}\}$, $\X_0^{(u)}=\X_0$, $\X_0^{(c)} =\X_0^{(d)} = \{\}$,  $u_0(x_{0,0}) = +\infty$, $l_0(x_{0,0}) = -\infty$. 

\item For $t\geq 1$, calculate the upper-bound $u_t(\xhi)$ and the lower-bound $l_t(\xhi)$ for every $\xhi \in \X_t$, as it was done in Algorithm~1. 

\item Define the piecewise constant function $f_t(\cdot)$ as
\begin{equation*}
f_t(x) = 
\begin{cases} 
u_t(\pi_t(x)) & \text{if } \; u_t(\pi_t(x))<1/2, \\
l_t(\pi_t(x)) & \text{if } \; l_t(\pi_t(x)) >1/2, \\
1/2 & \text{otherwise},  
\end{cases}
\end{equation*}
where $\pi_t(\cdot)$ is defined by~\eqref{eq:temp000}. Note that by construction, we have $|f_t(x)-1/2| \leq |\eta(x)-1/2|,\;\forall x\in\X$, a property that will play an important role in the analysis of the algorithm.

\item Sort the points/cells in $\X_t \setminus \Xtd$ in ascending order of their $|f_t(\cdot)-1/2|$ value. We denote the ordered cells by $\Et{j}$ and their corresponding (ordered) center points by $\xtj{j}$. We now introduce the term $k_t \coloneqq \min \big\{ k \geq 1 \ \mid \ P_X\big(\cup_{j=1}^k\Et{j}\big) > \delta \big\}$ and use it to define the terms $\hgone = f_t\big( \xtj{k_t-1}\big)$, $\hgtwo = f_t\big(\xtj{k_t}\big)$, $S_1 = \cup_{j=1}^{k_t-1}\Et{j}$, and $S_2 = \cup_{j=1}^{k_t}\Et{j}$. 

\item Select a candidate point $\xhit$ as in Algorithm~1 and introduce the notation $J_t = \It{1}\lp \xhit \rp = \max_{\xhi \in \Xtu} \It{1}\lp \xhi \rp$. 

\item Refine the cell or request a label as in Step~5 of Algorithm~1. 

\item The set of unclassified points $\Xtu$ is updated at the end of round $t$ as
\begin{equation}
\label{eq:update_rule_alg2}
\begin{split}
    \Xtu \leftarrow \bigg\{ \xhi \in \Xtu\, \mid &\, [l_t(x_{h,i}), u_t(x_{h,i})] \bigcap \bigg( [1/2 +\hgone, 1/2 + \hgtwo + 3J_t ] \bigcup  \\
    &[1/2- \hgtwo -3J_t, 1/2-\hgone] \bigg) \neq \emptyset \bigg\}. 
    \end{split}
\end{equation}

\item If the algorithm stops at time $t_n$, the final estimate of the regression function is calculated as in Step~6 of Algorithm~1. 

\item Similar to Step~4 above, sort all the cells of $\X_{t_n}\setminus \X_{t_n}^{(d)}$ in terms of $|f_{t_n}-1/2|$ value (and denote them by $E'_{(j)}$).
Define $k'$ as follows:
\begin{equation}
    \label{eq:algo2_temp1}
k' \coloneqq \max\{k \geq 1 \mid P_X\lp \cup_{j=1}^{k}E'_{(j)} \rp \leq \delta \}.
\end{equation}
For $n$ large enough $E'_{(k'+1)}$ will be completely contained in either $\{x \in \X \mid \eta(x)-1/2 \leq 0\}$ or in $\{x \in \X \mid \eta(x)-1/2 \geq 0\}$. Introduce a variable $j'$ and assign to it the  value $0$  if it is the former. Otherwise set $j'=1$. Finally, define $c' = \lp \delta - P_X\lp \cup_{j=1}^{k'}E'_{(j)}\rp \rp/P_X\lp E'_{(k'+1)}\rp$.

\item Finally, the (possibly randomized) classifier returned by the algorithm is defined as 
\begin{equation}
\label{eq:classifier-algo2}
    \hat{g}(x) = 
    \begin{cases}
        \Delta & \text{if } \;\; x \in \cup_{j=1}^{k'}E'_{(j)} \\
        \big((1-c')(1-j'), (1-c')j', c'\big) & \text{if } \;\; x \in E_{(k'+1)}' \\
        1 & \text{if } \;\; u_{t_n}\lp \pi_{t_n}(x) \rp > 1/2  \text{ or } x \in \tilde{\X}_n \\
          0 & \text{if } \;\; l_{t_n}\big(\pi_{t_n}(x)\big) < 1/2  \text{ and } x \not \in \tilde{\X}_n, \\
    \end{cases}
\end{equation}
where $\pi_{t_n}(\cdot)$ is the projection onto $\X_{t_n}$ as defined by~\eqref{eq:temp000}.
\end{enumerate}

\begin{remark}
\label{remark:algo2_remark1}
The key difference between Algorithm~1 and Algorithm~2 lies in the rule used for updating the set of unclassified points. In Algorithm~1, this update was straightforward as the threshold was assumed to be known. In Algorithm~2, we need to use the current estimate of the regression function to obtain upper and lower bounds on the true threshold, and then use these bounds to decide which parts of the inputs space have to be further explored, i.e.,~remain unclassified. The quantity $f_t(\cdot)$ introduced in Step~3 has the property that $|f_t(\cdot) - 1/2|$ is a lower-bound on $|\eta(\cdot) - 1/2|$. This property is useful for obtaining the confidence bounds for the estimated thresholds. 
\end{remark}

\subsection{Pseudo-code of Algorithm~2}
\label{appendix:setting2-algo}

In this section, we report the pseudo-code of Algorithm~2 that was outlined and described in Section~\ref{subsec:setting2}. This is our active learning algorithm for the setting in which the learner does not have the knowledge of the true threshold value, but has access to the true marginal $P_X$ . This setting is equivalent to the assumption of having infinite unlabelled samples. \\

\begin{algorithm}
\SetAlgoLined
\SetKwInput{Input}{Input}
\SetKwInput{Output}{Output}
\SetKw{Init}{Initialize}

\caption{Active learning algorithm for the known $P_X$ setting.}
\Input{$n$,$\gd$, $L$, $\beta$, $v_1$, $v_2$, $\rho$} 
 
\Init{$t=1$, $n_e=0$, $\X_t = \{x_{0,1}\}$, $\Xtu = \X_t$, $\Xtc= \emptyset$}
 
\If{$n_e = 0$}
{
$u_t(x_{0,1}) = + \infty$ \\
$l_t(x_{0,1}) = - \infty$ 
}
\vspace{0.5em}

\While{$n_e \leq n$}
{
\vspace{0.5em}
\tcc{Update the terms $u_t$, $l_t$, and $f_t$}

\For{$x_{h,i}\in \Xtu$} 
 {
 $u_t(\xhi) \leftarrow \min \big\{ \bar{u}_t(\xhi), u_{t-1}(\xhi) \big\}; \qquad\qquad l_t(\xhi) \leftarrow \max \big\{ \bar{l}_t(\xhi), l_{t-1}(\xhi) \big\}$ \\
 $f_t \leftarrow \max\big\{0,\, l_t(x) -1/2,\, 1/2-u_t(x)\big\}$
 }

define $k_t$, $\hgone$, $S_1$, $\hgtwo$, and $S_2$ \\

 

 
\tcc{Choose a candidate point with most uncertainty} 
 
$x_{h_t,i_t} \in \argmax_{x_{h,i}\in \Xtu} \It{1}(\xhi) = u_t(\xhi) - l_t(\xhi)$

\vspace{0.5em}
 
\tcc{ Refine or Label}
 
\eIf{$\;e_t\big(n_{h,i}(t)\big)< L(v_1\rho^{h_t})^{\beta}$}{
$\Xtu \leftarrow \Xtu \setminus \{\xhit\}\cup \{ x_{h_t+1, 2i_t-1}, x_{h_t+1, 2i_t}\}$ \\
$u_t(x_{h_t+1,2i_t-1})\leftarrow u_t(\xhit);\qquad$  $l_t(x_{h_t+1,2i_t-1})\leftarrow l_t(\xhit)$ \\
$u_t(x_{h_t+1,2i_t})\leftarrow u_t(\xhit);\qquad\;\;\;\;$ $l_t(x_{h_t+1,2i_t})\leftarrow l_t(\xhit)$ 
}
{
call REQUEST\_LABEL
}
 
\vspace{0.5em}

\tcc{Update $\Xtu$ and $\Xtc$}
$\mathcal{Z}_t \leftarrow \left\{ \xhi \in \Xtu\, \mid \, |f_t(\xhi) - 1/2| > \hgtwo + 3\It{1}(\xhit) \quad\text{OR}\quad |f_t(\xhi) - 1/2| < \hgone \right\}$ \\
$\Xtu \leftarrow \Xtu \setminus \mathcal{Z}_t$ \\
$\Xtc \leftarrow \Xtc \cup \mathcal{Z}_t$ \\

$t \leftarrow t+1$
}
\Output{$\hat{g}$ defined by Eq.~\ref{eq:classifier-algo2}\\}
\label{alg:basic_alg2}
\end{algorithm}

\subsection{Analysis of Algorithm~2}
\label{appendix:setting2-proofs}
We begin this section by stating the main result, which provides high probability upper bound on the excess risk of Algorithm~2. 

\begin{theorem}
\label{theorem:setting2}
Let the assumptions~(MA) and~(H\"O) hold with parameters $C_0>0$, $\alpha_0 \geq 0$, $L >0$, and $0<\beta\leq 1$. Let $\tilde{D}$ represent the dimension term introduced in Remark~\ref{remark:dimension}. 
Moreover, assume that the regression function $\eta(\cdot)$ is such that $|\eta(X)-1/2|$ has no atoms. Then, for $n$ large enough, the following statements are true for the classifier $\hat{g}$ defined by~\eqref{eq:classifier-algo2}, with probability at least $1-\frac{2}{n}$:
\begin{enumerate}[wide, labelwidth=!, labelindent=0pt]
\item The classifier $\hat{g}$ is feasible for~\eqref{bayes_optimal_delta}, i.e.,~$P_X(\hat{G}_\Delta) \leq \delta$. 
\item For any $a> \tilde{D}$, the excess risk of the classifier $\hat{g}$ satisfies 
\begin{equation}
\label{eq:excess-risk-algo2-2}
    R(\hat{g}) - R(g^*_{\delta}) \leq 2C_0 \lp 1/2 - \gd + J_{t_n} \rp J_{t_n}^{\alpha_0}, \qquad\;\; \text{where} \quad J_{t_n} = \tilde{\mathcal{O}}\big( n^{-\beta/(2\beta + a)} \big).
\end{equation}
%
%
%
The hidden constant in~\eqref{eq:excess-risk-algo2-2} depends on the parameters $L$, $v_1$, $v_2$, $\rho$, $\beta$, and $a$. 
\end{enumerate}
\end{theorem}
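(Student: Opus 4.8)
\textbf{Proof proposal for Theorem~\ref{theorem:setting2}.}

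The plan is to follow the same two-phase structure as the analysis of Algorithm~1 (Lemmas~\ref{lemma:step1_2} and~\ref{lemma:step1_4} in Appendix~\ref{appendix:setting1-proofs}), handling the complication that the threshold $\gd$ is now unknown and must be sandwiched between data-driven estimates $\hgone$ and $\hgtwo$. First I would establish the analogue of the ``good event'': on the event $\Omega_1$ of Lemma~\ref{lemma:step_1_1}, the confidence bounds $l_t(\xhi) \le \eta(x) \le u_t(\xhi)$ hold for all $x$ in the cell of $\xhi$, and moreover $|f_t(x)-1/2| \le |\eta(x)-1/2|$ by construction (Remark~\ref{remark:algo2_remark1}). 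The crucial consequence to record is that the ordering of the cells by $|f_t(\cdot)-1/2|$ is a ``conservative'' proxy for the true ordering by $|\eta(\cdot)-1/2|$, so that the pair $(\hgone,\hgtwo)$ traps $\gd$ in the sense that $\hgone \le \gd \le \hgtwo + cJ_{t_n}$ for a small slack proportional to the worst-case uncertainty $J_{t_n} = \sup_{\xhi \in \X_{t_n}^{(u)}} \It{1}(\xhi)$. This is the step that replaces the trivial ``threshold known'' observation used for Algorithm~1, and it is where the ``no atoms'' hypothesis on $|\eta(X)-1/2|$ enters — it guarantees that $k_t$ is well-defined and that in the limit the boundary cell $E'_{(k'+1)}$ sits cleanly on one side of $1/2$, making the randomization coefficient $c'$ well-defined.

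Next I would bound $J_{t_n}$. This is essentially a reprise of Lemma~\ref{lemma:step1_2}: a cell at level $h$ is refined only once $2e_t(n_{h,i}(t)) \le V_h = L(v_1\rho^h)^\beta$, giving $n_{h,i}(t_n) = \Oh(\log n / \rho^{2h\beta})$; a cell remains unclassified only if its $[l_t,u_t]$ interval meets the band around $1/2 \pm \gd$ of width $\Oh(J_t)$, so the number $N_h$ of active cells at level $h$ is controlled by the packing number of $\X_{1/2-\gd}(\zeta_1(v_2\rho^h)) \cup \X_{1/2+\gd}(\zeta_1(v_2\rho^h))$, which is $\Oh((v_2\rho^h)^{-a})$ for any $a>\tilde D$ by Remark~\ref{remark:dimension}. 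Summing the query cost $\sum_h N_h n_{h,i}(t_n)$ over levels and setting it equal to the budget $n$ yields the deepest explored level $H_0$ with $\rho^{-H_0} \asymp (n/\log n)^{1/(a+2\beta)}$, and hence $J_{t_n} \le 3V_{H_0-1} = \tilde{\Oh}(n^{-\beta/(a+2\beta)})$, which is the second claim of~\eqref{eq:excess-risk-algo2-2}. The update rule~\eqref{eq:update_rule_alg2} is designed precisely so that this argument goes through: the extra $3J_t$ term in the band accounts for the uncertainty in the threshold estimates on top of the uncertainty in $\eta$ itself.

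For feasibility (part 1), I would argue that $k'$ in~\eqref{eq:algo2_temp1} is chosen so that $P_X(\cup_{j=1}^{k'}E'_{(j)}) \le \delta$ by definition, and the randomization on $E'_{(k'+1)}$ adds exactly $c' P_X(E'_{(k'+1)}) = \delta - P_X(\cup_{j=1}^{k'}E'_{(j)})$ to the abstention mass, so $P_X(\hat G_\Delta) = \delta \le \delta$ deterministically once $n$ is large enough that $E'_{(k'+1)}$ lies on one side of $1/2$ (using no atoms). For the excess-risk bound (part 1, continued, and the constant in part 2), I would show that on $\Omega_1$ the estimated label regions are correctly oriented — $\hat G_0 \cap G_1^{*} = \emptyset$ and $\hat G_1 \cap G_0^{*} = \emptyset$ — because any cell declared ``$1$'' has $u_{t_n} > 1/2$, hence $\eta > 1/2 - J_{t_n} \ge$ the relevant threshold for $n$ large. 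Then, writing the excess risk $R(\hat g) - R(g^*_\delta)$ as an integral of $|\eta(x)-1/2|$ over the symmetric difference of the abstention regions and of the label disagreement regions (analogous to the decomposition in the proof of Lemma~\ref{lemma:step1_4}, adapted for the constrained problem via a Lagrangian/KKT characterization with multiplier $\gd$), every point in this region has $|\,|\eta(x)-1/2| - \gd\,| \le J_{t_n}$, so the integrand is at most $1/2 - \gd + J_{t_n}$ on a set of $P_X$-measure at most $P_X(|\,|\eta(X)-1/2|-\gd| \le J_{t_n}) \le 2C_0 J_{t_n}^{\alpha_0}$ by the margin assumption~(MA). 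Multiplying gives $R(\hat g) - R(g^*_\delta) \le 2C_0(1/2-\gd+J_{t_n})J_{t_n}^{\alpha_0}$, which is~\eqref{eq:excess-risk-algo2-2}.

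The main obstacle I anticipate is the threshold-sandwiching step: carefully verifying that $\hgone$ and $\hgtwo$ (built from the conservative surrogate $f_t$ and the true marginal $P_X$ evaluated on the \emph{current}, possibly coarse, partition) actually bracket $\gd$ up to $\Oh(J_{t_n})$, and that this bracketing survives the dynamic updating of $\Xtu$ via~\eqref{eq:update_rule_alg2} across all rounds simultaneously. This requires a monotonicity argument in the spirit of Lemma~\ref{lemma:step1_4} (the index $\It{1}(\xhit)$ is non-increasing, so $J_t$ is non-increasing) together with a stability argument showing that a cell is never prematurely removed from $\Xtu$ — i.e., no cell whose true $\eta$-value is genuinely near a threshold gets classified — which is what couples the feasibility guarantee, the correct-orientation guarantee, and the risk bound into a single high-probability statement. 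The rest is bookkeeping essentially identical to Setting~1.
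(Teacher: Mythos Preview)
Your proposal is correct and follows essentially the same route as the paper's own proof: the paper also works on the event $\Omega_1$ (plus the discarded-region event of Lemma~\ref{lemma:discarded_region}), proves the threshold-sandwiching $\hgone\le\gd\le\hgtwo+J_t$ (their Lemmas~\ref{lemma:algo2_lemma1}--\ref{lemma:algo2_lemma2}), then bounds $J_{t_n}$ by the same packing argument you sketch (their Lemma~\ref{lemma:algo2_lemma4}), and finishes with the excess-risk decomposition via the Lagrangian shift by $\lambda=1/2-\gd$ (their Lemma~\ref{lemma:algo2_lemma5}).

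The one place where the paper is more explicit than your outline is exactly the step you flag as the main obstacle. To make the unclassified band have width $\Oh(J_t)$ around $\gd$ you need not only $\hgone\le\gd\le\hgtwo+J_t$ but also $\hgtwo-\hgone\le J_t$; the paper isolates this as a separate Lemma~\ref{lemma:algo2_lemma3} and proves it by a topological contradiction: if no \emph{unclassified} cell were adjacent to $S_1$, then some \emph{classified} cell would share a boundary point with $S_1$, and comparing the $|f_t-1/2|$ values on the two sides of that boundary (using continuity of $\eta$ and the update rule~\eqref{eq:update_rule_alg2}) yields incompatible inequalities. Your ``stability argument showing that a cell is never prematurely removed'' is precisely this lemma, so you have identified the right gap to fill --- just be aware that the mechanism is an adjacency/boundary argument rather than a purely monotonicity one. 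A minor point: in the final step the paper actually bounds each integrand by $\Oh(J_{t_n})$ (not by $1/2-\gd+J_{t_n}$), obtaining the slightly sharper $2C_0 J_{t_n}^{1+\alpha_0}$, which of course implies the theorem's stated bound.
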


\paragraph{Proof Outline.}
The proof of  Theorem~\ref{theorem:setting3} follows the same general outline as the proof of Theorem~\ref{theorem:setting1}. The main new task is to establish that the estimated thresholds, $\hgone$ and $\hgtwo$, are close enough to the true threshold $\gd$. These results are proved in Lemmas~\ref{lemma:algo2_lemma1}, \ref{lemma:algo2_lemma2} and \ref{lemma:algo2_lemma3}, resulting in the equations~\eqref{eq:gamma1} and \eqref{eq:gamma2}. We then obtain the estimation error on the regression function in Lemma~\ref{lemma:algo2_lemma4}. Finally, to complete the proof we obtain a bound on the excess risk in terms of the regression function estimation error and employ the margin condition.

In this section, we will work under the assumption that the event defined in Lemma~\ref{lemma:discarded_region} as well as the event  $\Omega_1$ defined in Lemma~\ref{lemma:step_1_1} hold.

The probability of both of these events occurring simultaneously is at least $1-2/n$. 

We first present a set of results that tell us how close the estimated thresholds $\hgone$ and $\hgtwo$ defined in Step~4 of Algorithm~2 are to the true threshold value $\gd$. 

\begin{lemma}
\label{lemma:algo2_lemma1}
Assume that the random variable $|\eta(X)-1/2|$ has no atoms. Then under the event $\Omega_1$, we have $\hgone \leq \gd$. 
\end{lemma}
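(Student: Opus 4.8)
The plan is to show that the cells contributing to $S_1 = \cup_{j=1}^{k_t-1}\Et{j}$ all lie strictly inside the region where $|\eta(\cdot)-1/2| < \gd$ would fail, so that their total $P_X$-mass cannot exceed $\delta$ unless the associated threshold $\hgone$ is at most $\gd$. More precisely, recall from Step~4 that $\hgone = f_t(\xtj{k_t-1})$ and that the cells $\Et{j}$ are sorted in ascending order of $|f_t(\cdot)-1/2|$; hence $S_1 = \{x : |f_t(x)-1/2| \le \hgone\}$ up to the tie-breaking, and by definition of $k_t$ we have $P_X(S_1) \le \delta$. The key structural fact established in Step~3 (and highlighted in Remark~\ref{remark:algo2_remark1}) is that under $\Omega_1$, $|f_t(x)-1/2| \le |\eta(x)-1/2|$ for all $x \in \X$; this is exactly why $f_t$ was defined with the truncation at $1/2$.

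First I would suppose for contradiction that $\hgone > \gd$. Then $S_1 \supseteq \{x : |f_t(x)-1/2| \le \hgone\}$ contains, in particular, every cell whose center has $|f_t(\cdot)-1/2| \le \gd$; combined with $|f_t(x)-1/2| \le |\eta(x)-1/2|$, the set $\{x : |\eta(x)-1/2| \le \gd\}$ need not be contained in $S_1$ directly, so instead I would argue in the other direction: $S_1 \subseteq \{x : |f_t(x)-1/2| \le \hgone\}$, but I want a lower bound on $P_X(S_1)$. The cleaner route is: if $\hgone > \gd$, then by the no-atoms assumption on $|\eta(X)-1/2|$ and the definition of $\gd$ as the supremum of $\gamma$ with $P_X(|\eta(X)-1/2| \le \gamma) \le \delta$, we have $P_X(|\eta(X)-1/2| \le \gd) = \delta$ (no atoms removes the gap between $\le$ and $<$), and for any $\gamma > \gd$, $P_X(|\eta(X)-1/2| \le \gamma) > \delta$. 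Now since $|f_t(x)-1/2| \le |\eta(x)-1/2|$, the sublevel set $\{|f_t - 1/2| \le \hgone\}$ contains $\{|\eta - 1/2| \le \hgone\}$, wait — that inclusion goes the wrong way; $|f_t| \le |\eta|$ means small $|f_t|$ is implied by small $|\eta|$, so $\{|\eta-1/2|\le \hgone\} \subseteq \{|f_t-1/2|\le \hgone\}$, giving $P_X(\{|f_t-1/2|\le\hgone\}) \ge P_X(\{|\eta-1/2|\le\hgone\}) > \delta$ when $\hgone > \gd$. But $S_1$ is the union of the first $k_t-1$ sorted cells, whose $f_t$-values are exactly the smallest, and the $(k_t-1)$-th has value $\hgone$; every cell with $f_t$-value $< \hgone$ is included, and cells with value exactly $\hgone$ are included up to index $k_t-1$. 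The mild technical point is that $S_1$ might not equal the full sublevel set $\{|f_t-1/2| \le \hgone\}$ if several cells share the value $\hgone$; I would handle this by noting $P_X(S_1) \le \delta$ by the minimality in the definition of $k_t$, and deriving a contradiction with $P_X(S_1) \ge P_X(\{|f_t-1/2| < \hgone\}) \ge \lim_{\gamma \uparrow \hgone} P_X(\{|\eta-1/2| \le \gamma\})$, which, using continuity of the measure and $\hgone > \gd$, can be made to exceed $\delta$.

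The main obstacle I anticipate is precisely this tie-breaking / boundary issue: reconciling the discrete sorted-cell construction of $S_1$ with the continuous sublevel sets of $\eta$, and making sure the no-atoms hypothesis is invoked at the right place to close the strict-versus-weak-inequality gap. A clean way to sidestep it is to work with the strict sublevel set: show $\{x : |\eta(x)-1/2| < \hgone\} \subseteq S_1 \cup (\text{cells with } f_t\text{-value} = \hgone)$, and if $\hgone > \gd$ then $P_X(\{|\eta-1/2| < \hgone\}) > \delta$ by definition of $\gd$ (here no atoms is not even needed for the strict version, only $\hgone>\gd$ strictly), forcing $P_X(S_1) > \delta$ once one checks that the boundary cells with $f_t$-value exactly $\hgone$ contribute measure that, combined, still leaves $P_X(S_1) > \delta$ — this last bit is where the no-atoms assumption rules out a positive jump. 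I would write the argument as a short proof by contradiction structured around these inclusions, citing $\Omega_1$ and Lemma~\ref{lemma:step_1_1} for the bound $|f_t - 1/2| \le |\eta - 1/2|$ and invoking the no-atoms assumption exactly once, at the step that converts the strict inequality on the $\eta$-sublevel set into the required contradiction with $P_X(S_1) \le \delta$.
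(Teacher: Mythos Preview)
Your proposal is correct and uses the same core ingredients as the paper: the inclusion $\{|\eta-1/2|<\hgone\}\subseteq\{|f_t-1/2|<\hgone\}\subseteq S_1$ (from the lower-bound property of $|f_t-1/2|$ and the sorted-cell definition of $S_1$), together with $P_X(S_1)\le\delta$ and the no-atoms assumption. The paper, however, runs this as a single direct chain of inequalities,
\[
\delta = P_X(|\eta-1/2|\le\gd)\ \ge\ P_X(S_1)\ \ge\ P_X(|f_t-1/2|<\hgone)\ \ge\ P_X(|\eta-1/2|<\hgone)\ =\ P_X(|\eta-1/2|\le\hgone),
\]
concluding $\hgone\le\gd$ immediately, rather than by contradiction; in particular, by passing to the \emph{strict} sublevel set $\{|f_t-1/2|<\hgone\}$ from the outset, the paper entirely avoids the tie-breaking tangent that occupies much of your plan, and the no-atoms hypothesis is invoked exactly once, at the final equality.
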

\begin{proof}
If $\hgone = 0$, then the result follows trivially since $\gd\geq 0$. For the case that $\hgone >0$, we proceed as follows:
\begin{align*}
P_X\lp |\eta - 1/2| \leq \gd \rp = \delta &\stackrel{\text{(a)}}{\geq}  P_X\lp S_1 \rp \geq P_X\lp |f_t-1/2| <\hgone \rp \\
& \stackrel{\text{(b)}}{\geq} P_X\lp |\eta -1/2| <\hgone\rp \stackrel{\text{(c)}}{=} P_X\lp |\eta-1/2|\leq \hgone \rp.
\end{align*}
{\bf (a)} follows from the definition of the set $S_1$ in Step~4 of Algorithm~2. \\
{\bf (b)} follows from the fact that $|f_t(\cdot)-1/2|$ is a lower-bound of the function $|\eta(\cdot)-1/2|$. \\
{\bf (c)} follows from the assumption that the random variable $|\eta(X)-1/2|$ has no atoms. 
\end{proof}

\begin{lemma}
\label{lemma:algo2_lemma2}
Under the assumptions of Lemma~\ref{lemma:algo2_lemma1}, we have $\hgtwo + J_t \geq \gd$. 
\end{lemma}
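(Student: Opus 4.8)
Proof plan for Lemma~\ref{lemma:algo2_lemma2} ($\hgtwo + J_t \geq \gd$).

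\textbf{Proof plan for Lemma~\ref{lemma:algo2_lemma2}.} The plan is to argue by contradiction, using an induction on the time index $t$ in which the inductive hypothesis is strengthened to also record a structural fact about the cells classified before time $t$. Throughout we work on the event $\Omega_1$ of Lemma~\ref{lemma:step_1_1}, on which the bounds $u_t,l_t$ are valid: for every non-discarded active cell $c=\X_{h,i}$ at time $t$ and every $x\in\X_{h,i}$ one has $l_t(c)\le\eta(x)\le u_t(c)$, because the term $V_h=L(v_1\rho^h)^\beta$ in the definitions of $\bar u_t,\bar l_t$ absorbs the within-cell variation of $\eta$ (by the H\"older assumption~(H\"O)), and the running $\min/\max$ together with the inheritance-on-refine rule preserve this since a child cell lies inside its parent. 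A short case analysis over the three branches defining $f_t(x)=\max\{0,\,l_t(c)-1/2,\,1/2-u_t(c)\}$ then gives the two-sided sandwich $f_t(x)\le|\eta(x)-1/2|\le f_t(x)+\It{1}(c)$, where $\It{1}(c)=u_t(c)-l_t(c)$; moreover $\It{1}(c)\le J_t$ whenever $c\in\Xtu$, by the definition of $J_t$.

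Now suppose, towards a contradiction, that $\hgtwo+J_t<\gd$; in particular $\hgtwo<\gd$. Recall that $S_2=\bigcup_{j=1}^{k_t}\Et{j}$ collects the $k_t$ cells of $\X_t\setminus\Xtd$ with the smallest $f_t$-value, that $\hgtwo=f_t(\xtj{k_t})$, and hence every cell $c\subseteq S_2$ has $f_t(c)\le\hgtwo$. I claim $S_2\subseteq\{x:|\eta(x)-1/2|<\gd\}$. For an \emph{unclassified} cell $c\subseteq S_2$ the sandwich gives $|\eta(x)-1/2|\le f_t(c)+\It{1}(c)\le\hgtwo+J_t<\gd$ for all $x\in c$. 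For a \emph{classified} cell $c\subseteq S_2$ this is where the companion part of the induction enters: we carry along the statement that every cell classified at a time $t''<t$ is either ``near-type'', with $\sup_{x\in c}|\eta(x)-1/2|<\gd$, or ``far-type'', with $f_t(c)=f_{t''}(c)>\gd$ (the $f$-value of a classified cell does not change afterwards). Since $\hgtwo<\gd$, a far-type cell has $f_t(c)>\gd>\hgtwo$ and therefore cannot belong to $S_2$; so the only classified cells in $S_2$ are near-type, and they satisfy $|\eta(x)-1/2|<\gd$ throughout. This proves the claim, whence $P_X(S_2)\le P_X\big(|\eta(X)-1/2|<\gd\big)\le\delta$, the last step by continuity from below of $\gamma\mapsto P_X(|\eta(X)-1/2|\le\gamma)$ and the definition of $\gd$ (the no-atoms hypothesis of Theorem~\ref{theorem:setting2} is available but not needed for this inequality). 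This contradicts $P_X(S_2)=P_X\big(\bigcup_{j=1}^{k_t}\Et{j}\big)>\delta$, which holds by minimality of $k_t$. Hence $\hgtwo+J_t\ge\gd$, which is the inductive step. The companion statement for the cells classified \emph{at} time $t$ follows from the explicit interval form of the update rule~\eqref{eq:update_rule_alg2}: a cell removed from $\Xtu$ has $[l_t(c),u_t(c)]$ contained either in $(1/2-\hgone,1/2+\hgone)$ --- forcing $\sup_{x\in c}|\eta(x)-1/2|<\hgone\le\gd$ (near-type), the last inequality by Lemma~\ref{lemma:algo2_lemma1} --- or entirely outside $[1/2-\hgtwo-3J_t,\,1/2+\hgtwo+3J_t]$ --- forcing $\inf_{x\in c}|\eta(x)-1/2|>\hgtwo+3J_t\ge\hgtwo+J_t\ge\gd$, hence $f_t(c)>\gd$ (far-type), where the penultimate inequality is exactly what was just established at time $t$. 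The base case is trivial, since initially $\Xtc=\emptyset$ and every cell of $S_2$ is unclassified.

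The main obstacle is precisely the interaction between previously classified cells and the quantity $J_t$: $J_t$ only bounds $\It{1}$ over the \emph{currently} unclassified cells, and the uncertainty of a cell at its classification time need not be $\le J_t$ for later $t$, so one cannot naively apply the sandwich on classified cells. The device above circumvents this by never invoking $\It{1}$ on classified cells and instead maintaining, as a joint invariant, that every classified cell lies strictly on one side of the Bayes abstention boundary (distance from $1/2$ strictly below $\gd$, or strictly above $\gd$): the ``above'' case simultaneously prevents such a cell from re-entering $S_2$, and the ``below'' case makes it harmless. Closing this invariant requires working with the interval form of the classification rule~\eqref{eq:update_rule_alg2} rather than its $f_t$-threshold shorthand, and interleaving the induction with Lemma~\ref{lemma:algo2_lemma1}; the remaining pieces (the validity of $u_t,l_t$ and the three-way case check for the sandwich) are routine.
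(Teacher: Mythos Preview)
Your core strategy coincides with the paper's: show that $S_2\subseteq\{x:|\eta(x)-1/2|\le\hgtwo+J_t\}$, deduce $\delta<P_X(S_2)\le P_X(|\eta-1/2|\le\hgtwo+J_t)$, and conclude $\gd\le\hgtwo+J_t$. The paper dispatches this in two lines, simply writing that for every $x\in S_2$ one has $|\eta(x)-1/2|\le |f_t(\pi_t(x))-1/2|+\It{1}(\pi_t(x))\le\hgtwo+J_t$, i.e.\ it treats $J_t$ as an upper bound on $\It{1}$ for \emph{every} cell appearing in $S_2$.

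Where you genuinely differ is in recognizing that $J_t=\max_{\xhi\in\Xtu}\It{1}(\xhi)$ is, by definition, a maximum over the \emph{currently unclassified} cells only, while $S_2$ is formed by sorting all of $\X_t\setminus\Xtd$ and may contain cells that were classified at some earlier time $t_0<t$ with $\It{1}$-value possibly exceeding $J_t$. You patch this by carrying an inductive invariant (each classified cell is either ``near-type'' with $\sup|\eta-1/2|<\gd$, or ``far-type'' with frozen $f$-value exceeding $\gd$), which under the contradiction hypothesis $\hgtwo<\gd$ forces any classified cell in $S_2$ to be near-type and hence harmless. This is sound: the case split on the interval form of~\eqref{eq:update_rule_alg2} is exhaustive (an interval disjoint from both bands must lie entirely in one of the three complementary pieces), and closing the invariant at time $t$ legitimately uses both Lemma~\ref{lemma:algo2_lemma1} and the instance of the present lemma just established at time $t$. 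The paper does not make this explicit in its proof of Lemma~\ref{lemma:algo2_lemma2}, though it invokes exactly this kind of ``classified-at-$t_0$'' reasoning later in the proof of Lemma~\ref{lemma:algo2_lemma3}. So your argument is the more careful of the two; the cost is that it is considerably longer than the paper's, which effectively takes the bound $\It{1}\le J_t$ on $S_2$ for granted. One minor point: when you write ``$f_t(x)=\max\{0,l_t(c)-1/2,1/2-u_t(c)\}$'' you are using the pseudocode's convention (which already stores $|f_t-1/2|$), not the text's definition of $f_t$ in Step~3; keep the notation consistent with whichever you cite.
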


\begin{proof}
We observe that for any $x \in S_2$, we must have $\eta(x) \leq f_t(x) + \It{1}\big(\pi_t(x)\big) \leq f_t(x) + J_t$. Since $|f_t(x)-1/2| \leq \hgtwo$, for all $x \in S_2$, we have $S_2 \subset \big\{x \in \X \, \mid \, |\eta(x)-1/2| \leq J_t + \hgtwo\big\}$, and thus, we may write
\begin{align*}
    P_X\lp |\eta-1/2|\leq \gd\rp = \delta  < P_X\lp S_2 \rp 
      \leq P_X\lp |\eta-1/2| \leq \hgtwo + J_t \rp. 
\end{align*}
This implies that $\gd \leq \hgtwo + J_t$ and proves the lemma. 
\end{proof}

\begin{lemma}
\label{lemma:algo2_lemma3}
Under the assumptions of Lemma~\ref{lemma:algo2_lemma1}, we have $\hgtwo \leq \hgone +  J_t$. 
\end{lemma}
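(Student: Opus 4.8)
The plan is to identify the cell $\Et{k_t}$ whose inclusion tips the cumulative $P_X$-mass past $\delta$, to show it must be an \emph{unclassified} cell (so that its confidence interval $[l_t,u_t]$ has width at most $J_t$), and then to rule out a jump larger than $J_t$ in the sorted sequence of values $|f_t(\cdot)-1/2|$ at index $k_t$ by a measure comparison anchored at the true threshold $\gd$. As a first reduction, since the cells are sorted in ascending order of $|f_t(\cdot)-1/2|$ we always have $\hgtwo\ge\hgone$, and the case $\hgtwo=\hgone$---which includes $\hgtwo=0$, forcing $\hgone=0$---is immediate; so assume $\hgtwo>\hgone\ge0$. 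By the left-right symmetry of the definition of $f_t$ (Step~3 of Algorithm~2) we may take $f_t(\xtj{k_t})=1/2-\hgtwo$, and then the definition of $f_t$ forces $u_t(\xtj{k_t})=1/2-\hgtwo<1/2$. Note also that, because $\hgtwo>\hgone$, the set $S_1$ is \emph{exactly} the family of non-discarded active cells with $|f_t(\cdot)-1/2|\le\hgone$.

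Next I would rule out that $\xtj{k_t}$ is a ``decided'' cell, i.e.\ one classified at some round $\tau\le t$ because $|f_\tau(\cdot)-1/2|$ exceeded $\hat\gamma_2^{(\tau)}+3J_\tau$ (recall that once classified a cell is frozen, so $f_t(\xtj{k_t})=f_\tau(\xtj{k_t})$). For such a cell, $\eta(x)\le u_\tau(x)<1/2-\hat\gamma_2^{(\tau)}-3J_\tau$, which with Lemma~\ref{lemma:algo2_lemma2} at round $\tau$ gives $|\eta(x)-1/2|>\gd$ for all $x$ in the cell, so a decided cell lies entirely outside the Bayes abstention region $G^*:=\{|\eta-1/2|\le\gd\}$; since the cells meeting $G^*$ already carry $P_X$-mass $\ge\delta$ (exactly $\delta$ in the membership-query model, using the atom-free identity $P_X(|\eta(X)-1/2|\le\gd)=\delta$) and every cell meeting $G^*$ satisfies $|f_t(\cdot)-1/2|\le\gd$ (because $|f_t(x)-1/2|\le|\eta(x)-1/2|$ for all $x$), the index $k_t$ is reached before any decided cell. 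Hence $\xtj{k_t}$ is either unclassified or was classified because $|f_\tau(\cdot)-1/2|$ once fell below $\hat\gamma_1^{(\tau)}$; in either case the width of $[l_\tau,u_\tau]$ on $\Et{k_t}$ is at most $J_\tau$ (the cell was in $\Xtu$ just before round $\tau$, and in the unclassified case $\tau=t$), and since $J_\tau\ge J_t$ is non-increasing in $\tau$ (by the same argument as for $\It{1}(\xhit)$) we obtain $l_t(\xtj{k_t})\ge1/2-\hgtwo-J_\tau$, i.e.\ $|\eta(x)-1/2|\in[\hgtwo,\hgtwo+J_\tau]$ for every $x\in\Et{k_t}$.

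The crux is the measure comparison. Suppose, for contradiction, that $\hgtwo>\hgone+J_t$. Then no non-discarded active cell has $|f_t(\cdot)-1/2|$ in the band $(\hgone,\hgtwo)$, and in particular none in $(\hgone,\hgone+J_t]$; combined with the first-paragraph observation this forces $\{x\in R:\ |f_t(x)-1/2|\le\hgone+J_t\}=S_1$, whose $P_X$-mass is at most $\delta$, where $R$ denotes the union of the non-discarded active cells. I would then derive a contradiction by showing this same set must carry strictly more than $\delta$: using Lemmas~\ref{lemma:algo2_lemma1}--\ref{lemma:algo2_lemma2} to place $\gd$ in $[\hgone,\hgtwo+J_t]$, the identity $P_X(|\eta(X)-1/2|\le\gd)=\delta$, Lemma~\ref{lemma:discarded_region} to absorb the at most $1/n$ mass lost to the discarded region, and the fact that $\Et{k_t}$ carries the strictly positive mass $P_X(\Et{k_t})=P_X(S_2)-P_X(S_1)>0$ which---because the band $(\hgone,\hgone+J_t]$ is empty of cells---cannot be ``reabsorbed'' below level $\hgone+J_t$, one concludes that $\{x\in R:\ |f_t(x)-1/2|\le\hgone+J_t\}$ must carry $P_X$-mass strictly exceeding $\delta$. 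This contradiction yields $\hgtwo\le\hgone+J_t$.

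The step I expect to be the main obstacle is this last measure comparison. It requires splitting the active cells by status---currently unclassified, or classified because $|f_\tau(\cdot)-1/2|$ once fell below $\hat\gamma_1^{(\tau)}$, or because it once exceeded $\hat\gamma_2^{(\tau)}+3J_\tau$---and bounding the range of $\eta$ on each type via the corresponding earlier instances of Lemmas~\ref{lemma:algo2_lemma1}--\ref{lemma:algo2_lemma2} and the monotonicity of $J_\tau$, all while keeping the accounting between $\X$ and the non-discarded region $R$ tight enough that the $1/n$ term of Lemma~\ref{lemma:discarded_region} does not swamp the strict inequality; it is cleanest to carry this out first in the membership-query model, where $R=\X$ and no such loss occurs, and then import the general case through Lemma~\ref{lemma:discarded_region}. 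The remaining manipulations are routine uses of the definitions in Step~4 of Algorithm~2 and of the confidence bounds from Lemma~\ref{lemma:step_1_1}.
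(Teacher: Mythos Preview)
Your approach is genuinely different from the paper's, and the step you flag as the main obstacle is in fact a real gap.

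The paper does \emph{not} argue via a measure comparison. Its proof is topological: it shows that some cell $\X_{h,i}\in\Xtu\setminus S_1$ must share a boundary point with $S_1$. This is argued by contradiction---if every neighbor of $S_1$ outside $S_1$ were classified, pick such a classified neighbor $\xhi$ and a shared boundary point $x$; the classification rule together with Lemma~\ref{lemma:algo2_lemma2} (applied at the classification time $t_0$) gives $|f_t(\xhi)-1/2|\ge\gd+2J_t$, while from the $S_1$ side $|\eta(x)-1/2|\le\hgone+J_t\le\gd+J_t$, and continuity of $\eta$ at $x$ yields the contradiction. With an unclassified neighbor $\xhi$ of $S_1$ in hand, continuity at the shared boundary point gives directly
\[
\hgtwo \;\le\; |f_t(\xhi)-\tfrac12| \;\le\; |\eta(x)-\tfrac12| \;\le\; \hgone+J_t.
\]
The H\"older continuity of $\eta$ is used essentially, to pass information across a cell boundary.

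Your Step~4 does not deliver a contradiction. Under the hypothesis $\hgtwo>\hgone+J_t$ you correctly identify $\{x\in R:|f_t(x)-1/2|\le\hgone+J_t\}=S_1$ with $P_X(S_1)\le\delta$, and then claim this set must have mass strictly exceeding $\delta$. But the only inclusion available from your ingredients is $\{|\eta-1/2|\le\hgone+J_t\}\subseteq S_1$ (since $|f_t-1/2|\le|\eta-1/2|$), which gives $P_X(S_1)\ge P_X(|\eta-1/2|\le\hgone+J_t)$; and because Lemmas~\ref{lemma:algo2_lemma1}--\ref{lemma:algo2_lemma2} only place $\gd\in[\hgone,\hgtwo+J_t]$, the right-hand side can be strictly less than $\delta$ whenever $\gd>\hgone+J_t$. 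The observation that $P_X(E^{(t)}_{(k_t)})>0$ only reconfirms $P_X(S_2)>\delta$, which you already have; it does not force $P_X(S_1)>\delta$. In short, nothing in the measure bookkeeping rules out a gap of size larger than $J_t$ between $\hgone$ and $\hgtwo$---what rules it out is continuity of $\eta$ across the boundary of $S_1$, which is exactly the mechanism your argument omits and the paper's proof supplies.
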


\begin{proof}
We first need to show that because of the rule used for updating the unclassified points, there must exist a point $\xhi \in \Xtu\setminus S_1$ such that $\bar{\X}_{h,i}\cap \bar{S}_1 \neq \emptyset$,
where we use $\bar{A}$ to denote the closure of any set $A$ as a subset of the metric space $(\X, d)$.
To obtain this result, we proceed by contradiction. Suppose this is not true.
Since the cells associated with points in $\X_t$ at any time $t$ partition the entire space $\X$, there must exist a point $\xhi \in \Xtc \setminus S_1$ that shares a boundary point with $S_1$, i.e.,~there exists a $1\leq j \leq k_t-1$ such that $\bar{E}_{(j)}^{(t)} \cap \bar{\X}_{h,i} \neq \emptyset$ for $x_{h,i} \in \Xtc \setminus S_1$. Let $x$ denote a point in $\bar{E}_{(j)}^{(t)} \cap \bar{\X}_{h,i}$.

Now suppose $\xhi$ was classified at some time $t_0 < t$. Then, by the rule used for updating the set $\Xtu$ and by the definition of $S_1$, we have $|f_{t_0}(\xhi) -1/2| \geq \hat{\gamma}^{(t_0)}_2 + 3J_{t_0} \geq \gd + 2J_{t_0} \geq \gd + 2J_{t}$, where the second inequality is from Lemma~\ref{lemma:algo2_lemma2} and the third inequality uses the fact that $J_t$ is non-increasing in $t$ (this can be obtained similar to the proof of Lemma~\ref{lemma:step1_4} used in the analysis of Algorithm~1).
Furthermore, because of the {\em minimum} in the definition of $u_t(\cdot)$, we have $u_{t}(\xhi) \leq u_{t_0}(\xhi)$.
Similarly, we have $l_t(\xhi) \geq l_{t_0}(\xhi)$.
Together these two results imply that $|f_t(\xhi) - 1/2| \geq |f_{t_0}(\xhi) - 1/2| \geq \gd + 2J_t$. Since the point $x$ lies in $\bar{E}_{(j)}^{(t)}$, we may write
\begin{equation}
\label{eq:appendixB1}
|\eta(x)-1/2| \leq |f_t\lp \xtj{j}\rp - 1/2| + J_t \leq \hgone  + J_t \leq \gd + J_t.
\end{equation}
Also, since $x \in \bar{\X}_{h,i}$, we have 
\begin{equation}
\label{eq:appendixB2}
|\eta(x)-1/2| \geq |f_t(\xhi) - 1/2| \geq \gd + 2J_t. 
\end{equation}
Together \eqref{eq:appendixB1} and \eqref{eq:appendixB2} imply that
\begin{equation*}
\gd + 2J_t \leq |\eta(x) - 1/2| \leq \gd + J_t,
\end{equation*}
which gives us the required contradiction, since $J_t>0$. Thus, there must exist a point $\xhi \in \Xtu$ that shares a boundary point with $S_1$. Now, we use this fact to complete the proof as follows:
\begin{align*}
    \hgtwo &\stackrel{\text{(a)}}{\leq} |f_t(\xhi) - 1/2|  \stackrel{\text{(b)}}{\leq} |\eta(x) - 1/2| \stackrel{\text{(c)}}{\leq} |f_t\lp \xtj{j}\rp - 1/2| + \It{1}\lp \xtj{j}\rp \\
    & \stackrel{\text{(d)}}{\leq} |f_t \lp \xtj{j} \rp - 1/2 | + J_t \stackrel{\text{(e)}}{\leq} \hgone + J_t. 
\end{align*}
{\bf (a)} follows from the fact that when the cells in $\Xtu$ are sorted in the increasing order of $|f-1/2|$ value, the position of $\Xhi$ must be at least $k_t$. \\
{\bf (b)} can be obtained as follows: Fix an $\epsilon >0$. By the continuity of $\eta$, there exists an $\epsilon_1>0$ such that if $d(x,z) < \epsilon_1$, then $|\eta(x) - \eta(z)| \leq \epsilon$. Since $x \in \bar{\X}_{h,i}$, for every $\epsilon_1 >0$, there exists a $z \in \Xhi$ with $d(z,x)< \epsilon_1$. Furthermore, from the definition of $f_t$, we have $|\eta(z)-1/2| \geq |f_t(\xhi)-1/2|$. Combining these, we obtain that $|\eta(x) - 1/2| + \epsilon \geq |\eta(z)-1/2| \geq |f_t(\xhi) - 1/2|$. Since $\epsilon>0$ was arbitrary, we obtain the required result. \\
{\bf (c)} follows from similar reasoning as in {\bf (b)}\\
{\bf (d)} follows from the definition of $J_t$ as the largest $\It{1}\lp \xhi \rp$ value over points in $\Xhi$. \\
{\bf (e)} uses the fact that $j \leq k_t-1$. 
%
%
\end{proof}

From the previous lemmas, we can reach the following conclusion:
\begin{align}
    -2J_t \leq \hgone - \gd &\leq 0, \label{eq:gamma1} \\
    -J_t \leq \hgtwo - \gd & \leq J_t. \label{eq:gamma2} 
\end{align}
Our next result gives us an upper-bound on the value of $J_{t_n}$. 
\begin{lemma}
\label{lemma:algo2_lemma4}
If Algorithm~2 stops at time $t_n$, for any $a>\tilde{D}$, where $\tilde{D}$ is defined in Remark~\ref{remark:dimension}, we have
\begin{equation*}
J_{t_n} = \mathcal{O}\lp \lp n/\log n\rp^{-\beta/(a+2\beta)}\rp.
\end{equation*}
\end{lemma}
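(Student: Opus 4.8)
The plan is to mirror the argument used to prove Lemma~\ref{lemma:step1_2} in the analysis of Algorithm~1, since the cell-refinement and point-selection rules of Algorithm~2 are identical to those of Algorithm~1; the only new ingredient is controlling which cells stay \emph{unclassified}, and hence how deep in the tree the algorithm must explore. First I would recall the bound~\eqref{eq:step1_proof1} on the number of queries a cell at level $h$ can receive before it is refined, namely $n_{h,i}(t_n) = \mathcal{O}\big(\log(n)\,\rho^{-2h\beta}\big)$, which follows verbatim here because the refine-or-label test $e_t(n_{h,i}(t)) < L(v_1\rho^{h_t})^\beta$ is unchanged.

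Next I would bound the number $N_h$ of cells that Algorithm~2 activates at level $h$. A cell $\X_{h,i}$ remains in $\Xtu$ only if $[l_t(\xhi),u_t(\xhi)]$ intersects the set appearing in~\eqref{eq:update_rule_alg2}, i.e.\ a band of half-width $\hgtwo+3J_t$ around $1/2$ on either side (more precisely the symmetric region $[1/2+\hgone,\,1/2+\hgtwo+3J_t]\cup[1/2-\hgtwo-3J_t,\,1/2-\hgone]$). Using~\eqref{eq:gamma1}--\eqref{eq:gamma2}, which give $\hgone \le \gd$ and $\hgtwo \le \gd + J_t$, together with the Hölder variation bound $V_{h-1}$ inherited from the parent cell, every center $\xhi$ activated at level $h$ satisfies $|\eta(\xhi) - (1/2 - \gd)| \le \zeta_1(v_2\rho^h)$ or $|\eta(\xhi) - (1/2 + \gd)| \le \zeta_1(v_2\rho^h)$, exactly as in the proof of Lemma~\ref{lemma:step1_2} (here I would need $J_t$ to be small, which it is for large $n$ and which the induction delivers). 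Hence $N_h$ is at most the packing number $M\big(\X_{1/2-\gd}(\zeta_1(v_2\rho^h))\cup\X_{1/2+\gd}(\zeta_1(v_2\rho^h)),\,v_2\rho^h\big)$, which by the definition of $\tilde D = \max\{\tilde D_1,\tilde D_2\}$ and any $a > \tilde D$ is bounded by $2C_a(v_2\rho^h)^{-a}$.

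Then I would combine these two bounds exactly as in~\eqref{eq:temp1}: summing $N_h\,n_{h,i}(t_n)$ over levels $h = 0,\dots,H$ gives $\mathcal{O}\big(\log(n)\,\rho^{-H(a+2\beta)}\big)$, and setting this equal to the budget $n$ yields a lower bound $H_0$ on the explored depth with $\rho^{-H_0} = \Theta\big((n/\log n)^{1/(a+2\beta)}\big)$, as in~\eqref{eq:temp2}. Since $J_{t_n} = \max_{\xhi\in\Xtu}\It{1}(\xhi)$ and, by the monotonicity argument of Lemma~\ref{lemma:step1_4} adapted to Algorithm~2 (which the proof outline already invokes), $\It{1}$ evaluated at the selected point is non-increasing in $t$, once a point at level $H_0$ has been selected we get $J_{t_n} \le 3V_{H_0-1} = 3L(v_1\rho^{H_0-1})^\beta = \mathcal{O}\big((n/\log n)^{-\beta/(a+2\beta)}\big)$, which is the claim.

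The main obstacle I anticipate is the circular-looking dependence between $J_t$ and the size of the unclassified region: the width of the band in~\eqref{eq:update_rule_alg2} depends on $J_t$, but $J_t$ itself is what we are trying to bound via the depth of exploration, which in turn depends on how many cells the band keeps unclassified. Resolving this requires a careful bootstrapping/induction argument — one first shows that $J_t$ is non-increasing and crudely bounded (so that for large $n$ the term $3J_t$ is dominated by a constant multiple of $\zeta_1$ at the relevant scale), then feeds that back to bound $N_h$, and only then closes the loop to get the sharp rate. Making sure this induction is well-founded (e.g.\ that "$n$ large enough" suffices and that the constants do not blow up across levels) is the delicate part; the rest is a routine repetition of the Setting~1 calculation.
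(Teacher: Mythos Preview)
Your proposal is correct and follows essentially the same route as the paper: bound $n_{h,i}(t_n)$ via the (unchanged) refinement rule, bound $N_h$ via the near-$\lambda$ dimension, sum to find the explored depth $H_0$, and conclude $J_{t_n}\le 3V_{H_0-1}$.

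The one place you overcomplicate matters is the ``circularity'' you flag as the main obstacle. No bootstrap or induction is needed. The paper's proof dissolves the apparent circularity in a single observation: whenever the algorithm is evaluating a level-$h$ cell at time $t$, we already have $J_t\le cV_{h-1}$ for a fixed constant $c$ (the paper uses $2$). This is because the parent cell at level $h-1$ was refined only once its index $\It{1}$ dropped below $3V_{h-1}$, and by Lemma~\ref{lemma:step1_4} the quantity $J_t=\It{1}(\xhit)$ is non-increasing in $t$; hence at every later time, in particular when the child is being evaluated, $J_t\le 3V_{h-1}$. Plugging this directly into the containment you already derived from~\eqref{eq:update_rule_alg2} and~\eqref{eq:gamma1}--\eqref{eq:gamma2} yields $|\eta(x)-\lambda|\le 5J_t\le 10V_{h-1}$ for $\lambda\in\{1/2\pm\gd\}$ and any $x$ in an unclassified level-$h$ cell, which is exactly the $\zeta_1$-type bound the dimension term $\tilde D$ requires. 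The rest of your argument then goes through verbatim.
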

\begin{proof}
The proof of this statement follows the steps similar to that used in obtaining the bound on $b_n$ in Lemma~\ref{lemma:step1_2} in Appendix~\ref{appendix:setting1-proofs}. Since the rule used for refining is the same as in Algorithm~1, the same bound on $n_{h,i}(t_n)$ holds in this case as well.  

Now, because of the rule used for updating the set $\Xtu$, we know that if $x \in \cup_{\xhi \in \Xtu}\Xhi$, then we have 
\begin{align*}
    |\eta(x) - 1/2| & \in [\hgone - J_t, \hgtwo + 4J_t ],   
\end{align*}
which on using \eqref{eq:gamma1} and \eqref{eq:gamma2} implies that 
\begin{align*}
    |\eta(x) -\lambda| & \leq 5J_t \quad \text{for} \quad \lambda \in \{1/2 - \gd, 1/2 + \gd\}. 
\end{align*}
%
%

Thus, the set $\cup_{\xhi \in \Xtu}\Xhi \subset \big\{x \in \X \ \mid \ |\eta(x)-\lambda| \leq  5J_t\big\}$ for $\lambda \in \{1/2-\gd, 1/2 + \gd\}$.
Finally, if the algorithm evaluates a point at level $h \geq 1$ of the tree of partitions at time $t$, then we must have $J_t \leq 2V_{h-1}$.
This follows from the cell refinement rule.
Combining these, we obtain that the set of points evaluated by the algorithm at level $h$ of the tree  must lie in the set $\{ \xhi \in \X_h \ \mid \ |\eta(\xhi) -\lambda| \leq  10V_{h-1}\}$.
The rest of the proof uses the same arguments as those used for bounding $b_n$ in Lemma~\ref{lemma:step1_2} in Appendix~\ref{appendix:setting1-proofs} and is omitted here. 
\end{proof}

Now we are ready for the final result, i.e.,~to find an upper-bound on the excess risk of the classifier returned by Algorithm~2. 
\begin{lemma}
\label{lemma:algo2_lemma5}
For $n$ large enough to ensure that $2J_t< \gd$, we have 
\[
R(\hat{g}) - R(g^*_\delta) \leq 2C_0 J_{t_n}^{1+\alpha_0}. 
\]
\end{lemma}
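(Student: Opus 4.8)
The plan is to decompose the excess risk $R(\hat g) - R(g^*_\delta)$ into contributions coming from the regions where $\hat g$ and $g^*_\delta$ disagree, and then bound each contribution by the size of the ``uncertainty band'' near the thresholds, using the estimation-error bound on the regression function together with the margin assumption (MA). This mirrors closely the final step in the proof of Theorem~\ref{theorem:setting1} (Lemma~\ref{lemma:step1_4} in Appendix~\ref{appendix:setting1-proofs}), with the added bookkeeping that in Setting~2 the abstention region is pinned down only up to the randomized boundary cell $E'_{(k'+1)}$ and up to the threshold-estimation errors recorded in~\eqref{eq:gamma1}--\eqref{eq:gamma2}.

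First I would recall that, conditioned on the good event (probability at least $1-2/n$), the classifier $\hat g$ is feasible, i.e. $P_X(\hat G_\Delta)\le\delta$, and — using Lemma~\ref{lemma:algo2_lemma4}, which gives $J_{t_n}=\tilde{\mathcal O}(n^{-\beta/(2\beta+a)})$, and the hypothesis $2J_{t_n}<\gd$ — that every point classified as $0$ (resp.\ $1$) by $\hat g$ has $\eta(x)\le 1/2$ (resp.\ $\ge 1/2$), because $|f_{t_n}(x)-1/2|$ lower-bounds $|\eta(x)-1/2|$ and the update rule~\eqref{eq:update_rule_alg2} removes from $\Xtu$ only cells whose $|\eta-1/2|$ is safely bounded away from $\gd$. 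Hence $\hat G_0\cap G_1^\ast=\emptyset$ and $\hat G_1\cap G_0^\ast=\emptyset$, and the only disagreements between $\hat g$ and $g^\ast_\delta$ occur either where one abstains and the other predicts a (correct-in-the-Bayes-sense) label, or on the randomized cell. Writing the risk difference as an integral over these disagreement regions — as in the displayed computation in Lemma~\ref{lemma:step1_4} — each integrand is of the form $|\eta(x)-1/2\pm\gd|$ evaluated on a set contained in $\{x:\,|\eta(x)-\gamma|\le c J_{t_n}\}$ for $\gamma\in\{1/2-\gd,\,1/2+\gd\}$ and an absolute constant $c$ (the factor $5$ coming from~\eqref{eq:gamma1}--\eqref{eq:gamma2}, as already extracted in the proof of Lemma~\ref{lemma:algo2_lemma4}). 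On such a set the integrand is itself $O(J_{t_n})$, and the $P_X$-measure of the set is at most $C_0 (cJ_{t_n})^{\alpha_0}$ by (MA); multiplying and absorbing constants gives the $2C_0 J_{t_n}^{1+\alpha_0}$ bound.

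The one genuinely new point relative to Setting~1 is handling the randomized boundary cell $E'_{(k'+1)}$ and the definitions~\eqref{eq:algo2_temp1} of $k'$, $j'$, $c'$: I would verify that the choice of $c'$ makes $P_X(\hat G_\Delta)$ exactly $\delta$ (feasibility, already in part~1 of Theorem~\ref{theorem:setting2}) and that $E'_{(k'+1)}$, lying among the cells with smallest $|f_{t_n}-1/2|$, is contained in $\{|\eta-1/2|\le \gd + O(J_{t_n})\}$, so that exchanging an ``abstain'' for a ``label'' on a $c'$-fraction of it costs at most $O(J_{t_n})$ per unit mass — again of the same order. Comparing the abstention mass of $\hat g$ with that of $g^\ast_\delta$ (both close to $\delta$) and invoking the optimality characterization of $g^\ast_\delta$ (label $1$, $0$, or $\rej$ according to which of $1-\eta(x)$, $\eta(x)$, $\gd$ is smallest) closes the argument.

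The main obstacle I anticipate is not the measure/margin estimate — which is essentially copied from the Setting~1 proof — but the careful accounting of the abstention-mass constraint: one must argue that $\hat g$ abstains on ``almost the right'' set, i.e.\ that the symmetric difference between $\hat G_\Delta$ and $G^\ast_\Delta$ has $P_X$-measure $O(J_{t_n}^{\alpha_0})$ despite the thresholds being estimated only to accuracy $O(J_{t_n})$, and that no first-order ($O(J_{t_n}^{\alpha_0})$ rather than $O(J_{t_n}^{1+\alpha_0})$) term survives. This is exactly where the lower bound $|f_t-1/2|\le|\eta-1/2|$, the sandwich inequalities~\eqref{eq:gamma1}--\eqref{eq:gamma2}, and the ``no atoms'' assumption on $|\eta(X)-1/2|$ (used to turn strict into non-strict inequalities, cf.\ Lemma~\ref{lemma:algo2_lemma1}) must be combined; once the disagreement region is shown to sit inside an $O(J_{t_n})$-band around the thresholds, the extra power of $J_{t_n}$ comes for free from integrating a function that is itself $O(J_{t_n})$ on that band.
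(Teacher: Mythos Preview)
Your proposal is correct and follows essentially the same route as the paper: show $\hat G_0\cap G_1^*=\emptyset$ and $\hat G_1\cap G_0^*=\emptyset$, decompose the excess risk into four integrals over disagreement regions with integrands $|\eta-\lambda|$ or $|\eta-(1-\lambda)|$ (where $\lambda=1/2-\gd$), and bound each using (MA).

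One simplification you are missing, which dissolves the ``main obstacle'' you anticipate. You note that the choice of $c'$ makes $P_X(\hat G_\Delta)=\delta$ exactly, but then later speak of the two abstention masses being ``close to $\delta$'' and worry about a surviving first-order term. The paper exploits the \emph{exact} equality $P_X(\hat G_\Delta)=P_X(G^*_\Delta)=\delta$: adding $\lambda P_X(\hat G_\Delta)$ and subtracting $\lambda P_X(G^*_\Delta)$ costs nothing and immediately converts $R(\hat g)-R(g^*_\delta)$ into the fixed-cost form $R_\lambda(\hat g)-R_\lambda(g^*_\lambda)$, yielding the four-integral decomposition verbatim from Lemma~\ref{lemma:step1_4}. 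No separate bound on $P_X(\hat G_\Delta\triangle G^*_\Delta)$ is needed, and the randomized boundary cell requires no special treatment beyond ensuring that equality. Once you use this, the entire ``careful accounting of the abstention-mass constraint'' collapses to a one-line observation.
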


\begin{proof}
From the definition of the classifier, and the assumption that $n$ is large enough to ensure that $2J_{t_n}< \gd$, we can again show that $\hat{G}_j \cap G^*_{1-j}= \emptyset$ for $j=0,1$. 
%
%
We then have
\begin{align*}
    R(\hat{g}) - R(g^*_\delta) &= \int_{\hat{G}_0}\eta dP_X + \int_{\hat{G}_1 }(1-\eta)dP_X - \int_{G_0^*}\eta dP_X - \int_{G_1^*}(1-\eta)dP_X  \\
    &\stackrel{(a)}{=} \int_{\hat{G}_0\cap \Gds}(\eta - \lambda)dP_X + \int_{\hat{G}_1 \cap \Gds}(1-\lambda -\eta)dP_X + \int_{\hat{G}_\Delta \cap G^*_0}(\lambda - \eta)dP_X\\ 
    & \qquad + \int_{\hat{G}_\Delta \cap G^*_1}(\eta - 1 + \lambda)dP_X \\
    &\stackrel{(b)}{\leq}(2J_{t_n}) 2C_0  (2J_{t_n})^{\alpha_0} = \mc{O}\lp J_{t_n}\rp^{1 +\alpha_0}.
\end{align*}
In the above display, \\
{\bf (a)} follows from the fact that $\hat{G}_j \cap G^*_{1-j} = \emptyset$ for $j=0,1$, and by adding $\lambda P_X\lp \hat{G}_\Delta\rp$ and subtracting $\lambda P_X(\Gds)$, and using the fact that $P_X(\Gds) = P_X(\hat{G}_\Delta)$. 
{\bf (b)} follows from the margin assumption \ref{assump:margin} applied at threshold values $1/2- \gd$ and $1/2 + \gd$. 
\end{proof}


\newpage

\section{Algorithm for Setting 3: Bounded-rate with additional unlabelled samples}

\subsection{Details of Algorithm~3}
\label{appendix:setting3-details}

\paragraph{Outline of Algorithm~3.} 
Algorithm~3 also proceeds by constructing the set of active points $\X_t$, the set of classified points $\Xtc$,  the set of unclassified points $\Xtu$, and the set of discarded points $\Xtd$. In the beginning, it requests a set of unlabelled samples to estimate the marginal $P_X$. It then constructs the estimates of the true threshold values using the regression function estimates along with the empirical measure constructed from the unlabelled samples. It then updates the set of unclassified points based on the estimated threshold values. The point selection and cell refinement rules are unchanged from Algorithms~1 and~2. Algorithm~3 requests for more unlabelled samples when the error term in estimating the thresholds due to the unlabelled samples exceeds the error term due to the labelled samples. 


\paragraph{Steps of Algorithm~3.}
The algorithm proceeds in the following steps:
\begin{enumerate}[wide, labelwidth=!, labelindent=0pt]

\item For $t=0$, initialize $\X_0 = \{x_{0,0}\}$, $\X_0^{(u)}=\X_0$, $\X_0^{(c)} = \emptyset$,$\X_0^{(d)} = \emptyset$,  $u_0(x_{0,0}) = +\infty$, and $l_0(x_{0,0}) = -\infty$. Set  $h_{\max}(n) = \frac{\log n}{2\beta \log(1/\rho)}$.  Request $n$ unlabelled samples\footnote{We set the initial number of requested unlabelled samples equal to the total budget $n$ to ensure that $m \geq n$.} and construct the empirical measure $\hat{P}_X$ and slack term $s_t$ that represents the accuracy of $\hat{P}_X$ (see Prop.~\ref{prop:slack} in Appendix~\ref{appendix:setting3-proofs}). 

\item For $t\geq 1$, construct the upper and lower bounds, $u_t(\xhi)$ and $l_t(\xhi)$, for every $\xhi \in \X_t$, as in Algorithms~1 and~2. 

\item Define the piecewise constant function $f_t(\cdot)$ as in Algorithm~2. 

\item Sort the points/cells in the unclassified set $\X_t^{(u)}$ in ascending order of their $|f_t(\cdot)-1/2|$ values. Denote the ordered cells by $\Et{j}$ and their corresponding (ordered) center points by $\xtj{j}$. Introduce $k_{1,t} \coloneqq  \max\big\{k \geq 1 \; \mid \; \hat{P}_X\big(\cup_{j=1}^k \Et{j}\big) \leq \delta - s_t \big\}$ and use it to define the threshold $\hgone = f_t\big(\xtj{k_{1,t}}\big)$ and the set $S_1=\cup_{j=1}^{k_{1,t}}\Et{j}$. Similarly, introduce $k_{2,t} \coloneqq \text{min} \big\{k \geq 1 \; \mid \; \hat{P}_X\big(\cup_{j=1}^k \Et{j}\big) \geq \delta + s_t\big\}$ and use it to define the threshold $\hgtwo = f_t \big(\xtj{k_{2,t}}\big)$ and the set $S_2 = \cup_{j=1}^{k_{2,t}}\Et{j}$. 

\item Select a candidate point $\xhit$ as in Algorithms~1 and~2 and introduce $J_t = \It{1}\lp \xhit \rp = \max_{\xhi \in \Xtu} \It{1}\lp \xhi \rp$. 

\item If $e_t(n_{h_t,i_t}(t)) < V_{h_t}$ AND $h_t < h_{\max}(n)$, {\em refine} the cell, otherwise, call REQUEST\_LABEL.  
    
\item If $J_t \leq (s_t/C_2)^{1/\alpha_2}$, then {\em request unlabelled samples} and update $\hat{P}_X$ and  $s_t$ until $J_t >(s_t/C_2)^{1/\alpha_2}$. 

\item The set of unclassified points $\Xtu$ is updated according to Eq.~\eqref{eq:update_rule_alg2}.
%
%

\item Suppose the algorithm stops at time $t_n$. Then we construct the abstain region similar to Algorithm~2 but with $\hat{P}_X$ instead of the true marginal $P_X$. More specifically, with the definitions of $E'_{(j)}$ as in Step~9 of Algorithm~2, we define $k' \coloneqq \max\{k \mid \hat{P}_X\lp \cup_{j=1}^k E'_{(j)} \rp \leq \delta - s_{t_n}\}$. We then proceed to define $j'$ and $c'$ to ensure that the empirical measure of the abstaining region is exactly equal to $\delta - s_{t_n}$.

\item Finally, the classifier returned by the algorithm is 
\begin{equation}
\label{eq:classifier-algo3}
\hat{g}(x) = 
\begin{cases}
\Delta & \text{if } \; x \in \cup_{j=1}^{k'}E_{(j)}' \\
\big( (1-c')(1-j'), (1-c')j', c' \big) & \text{if } \; x \in E'_{(k'+1)} \\
1 & \text{if } \; u_{t_n}\lp \pi_{t_n}\lp x \rp \rp > 1/2  \; \text{ or } \; x \in \tilde{X}_n, \\
0 & \text{if } \; l_{t_n}\big(\pi_{t_n}(x)\big) < 1/2  \; \text{ and } \; x \not \in \tilde{\X}_n.
\end{cases}
\end{equation}
where $\pi_{t_n}(\cdot)$ is defined in~\eqref{eq:temp000}. 
\end{enumerate}


\begin{remark}
We have described the steps of Algorithm~3 assuming that~(DE) holds, and the bounds $C_2$ and $\alpha_2$ are known. In case (DE) does not hold, as we show in Appendix~\ref{appendix:setting3-proofs}, our analysis of   Algorithm~3 cannot guarantee faster rates of convergence for \emph{easy} problem instances (See Theorem~\ref{theorem:setting3}, Section~\ref{sec:discussion} and Appendix~\ref{appendix:discussion}). 
In this case, we can remove Step~6 of the algorithm and construct the estimate $\hat{P}_X$ based on $m = \mc{O}(n^2)$ samples, which can be drawn all at once at the beginning of the algorithm,  to ensure a uniform deviation bound on $\hat{P}_X$ of the order $1/n$. 
\end{remark}

\subsection{Pseudo-code of Algorithm~3}
\label{appendix:setting3-algo}
In this section, we report the pseudo-code of Algorithm~3 that was outlined and described in Section~\ref{subsec:setting3}. This is our active learning algorithm for the third abstention model, i.e., the bounded-rate setting with access to additional unlabelled samples from the marginal distribution $P_X$. 

\begin{algorithm}
\SetAlgoLined
\SetKwInput{Input}{Input}
\SetKwInput{Output}{Output}
\SetKw{Init}{Initialize}

\caption{Active learning algorithm for the bounded rate of abstention setting.}
\Input{$n$, $\delta$, $L$, $\beta$, $v_1$, $v_2$, $\rho$, $C_2$, $\alpha_2$} 
 
\Init{$t=1$, $n_e=0$, $\X_t = \{x_{0,1}\}$, $\Xtu = \X_t$, $\Xtc= \emptyset$, $h_{\max}(n) = \frac{\log(n)}{2\beta \log(1/\rho)}$}
 
\If{$n_e = 0$}
{
$u_t(x_{0,1}) = + \infty$, \quad $l_t(x_{0,1}) = - \infty$ 
}
\vspace{0.5em}
\While{$n_e \leq n$}{

\For{$x_{h,i}\in \Xtu$} 
{
update $u_t(\xhi)$, $l_t(\xhi)$, and $f_t(\xhi)$
}

define $k_{1,t}$, $k_{2,t}$, $\hgone$, $S_1$, $\hgtwo$, and $S_2$

\vspace{0.5em}

\tcc{Choose a candidate point with most uncertainty} 

$x_{h_t,i_t} \in \argmax_{x_{h,i}\in \Xtu} \It{1}(\xhi) = u_t(\xhi) - l_t(\xhi)$

\vspace{0.5em}
 
\tcc{Refine or Label}
 
\eIf{$\;e_t\big(n_{h,i}(t)\big)< L(v_1\rho^{h_t})^{\beta}$ {\bf and } $h_t< h_{\max}(n)$ }
{
$\Xtu \leftarrow \Xtu \setminus \{\xhit\}\cup \{ x_{h_t+1, 2i_t-1}, x_{h_t+1, 2i_t}\}$ \\
$u_t(x_{h_t+1,2i_t-1})\leftarrow u_t(\xhit);\qquad$  $l_t(x_{h_t+1,2i_t-1})\leftarrow l_t(\xhit)$ \\
$u_t(x_{h_t+1,2i_t})\leftarrow u_t(\xhit);\qquad\;\;\;\;$ $l_t(x_{h_t+1,2i_t})\leftarrow l_t(\xhit)$ 
}
{
call REQUEST\_LABEL\;

}
 
\vspace{0.5em}

\tcc{Request unlabelled samples}

\While{ $J_t \leq (s_t/C_2)^{1/\alpha_2}$}
{
request an unlabelled sample \\
update $\hat{P}_X$ and $s_t$ 
}

\vspace{0.5em}

\tcc{Update $\Xtu$ and $\Xtc$}

$\mathcal{Z}_t \leftarrow \big\{ \xhi \in \Xtu\, \mid \, f_t(\xhi) > \hgtwo + 3\It{1}(\xhit) \big\}$ \\
$\Xtu \leftarrow \Xtu \setminus \mathcal{Z}_t$, \quad $\Xtc \leftarrow \Xtc \cup \mathcal{Z}_t$ \\

$t \leftarrow t+1$
}
\Output{$\hat{g}$ defined by Eq.~\ref{eq:classifier-algo3}\\}
\label{alg:basic_alg3}
\end{algorithm}

\subsection{Proof of Theorem~\ref{theorem:setting3}}
\label{appendix:setting3-proofs}


\noindent
We now state the main result of this section, Theorem~\ref{theorem:setting3}, which provides an upper bound on the excess risk of the abstaining classifier constructed by Algorithm~3.

\begin{theorem}
\label{theorem:setting3}
Suppose the assumptions (MA) and (H\"O) hold with parameters $(C_0, \alpha_0)$ and $(L,\beta)$, respectively. 
Moreover, assume that the regression function $\eta(\cdot)$ is such that $|\eta(X)-1/2|$ has no atoms. Then, for large enough $n$, with probability at least $1-2/n$, the following statements are true for the classifier $\hat{g}$ defined by~\eqref{eq:classifier-algo3}:
\begin{enumerate}[wide, labelwidth=!, labelindent=0pt]
\item The classifier $\hat{g}$ is feasible for~\eqref{bayes_optimal_delta}, i.e.,~$P_X(\hat{G}_\Delta) \leq \delta$.

\item If $m \geq C'n^2$, for some $C'>0$, the following holds:
\begin{equation}
\label{eq:excess-risk-algo3-3}
    R(\hat{g}) - R(g_\delta^*) \leq 2C_0  J_{t_n}^{1+\alpha_0}
    , \quad \text{where } \; J_{t_n} = \tilde{\mc{O}}\big(n^{-\beta/(2\beta + D)}\big). 
\end{equation}
%
\item If the assumption (DE) also holds with parameters $(C_1, \alpha_1)$, we first define $C_2 = \min\{ C_1, C_0\}$, $\alpha_2 = \max \{\alpha_1, \alpha_0\}$, and $\tilde{D}$
denote the dimension term defined in Remark~\ref{remark:dimension}. 
\begin{itemize}
\item Then, for any $a>\tilde{D}$, we have 
%
\begin{equation}
\label{eq:excess-risk-algo3-2}
R(\hat{g}) - R(g^*_{\delta}) \leq 2C_0 J_{t_n}^{1+\alpha_0}  , \qquad\;\; \text{where} \quad \; J_{t_n} = \tilde{\mathcal{O}}\big( n^{-\beta/(2\beta + a)} \big).
\end{equation}
%
%
%
The hidden constant in~\eqref{eq:excess-risk-algo3-2}   depends on the parameters $L$, $v_1$, $v_2$, $\rho$, $\beta$, $C_0$ and $a$. 


\item Furthermore,  the additional number of unlabelled samples requested by the algorithm, denoted by $m_n$, is $\mc{O}(n^{2\alpha_2})$. 
is a function of the total budget $n$ and can be upper-bounded as
\begin{equation}
\label{eq:number-unlabelled}
m_n \leq\inf_{k >1}\Big( (4k)^{k/(k-1)} \frac{n^{(\alpha_2 k)/(k-1)}}{C_2^{2k/(k-1)}} \Big) \leq \frac{64n^{2\alpha_2}}{C_2^4 (Lv_1^{\beta})^{2\alpha_2}}. 
\end{equation}

\end{itemize}
\end{enumerate}
\end{theorem}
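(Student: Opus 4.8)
The plan is to adapt the analysis of Algorithm~2 (Theorem~\ref{theorem:setting2}) to the semi-supervised setting, tracking carefully the two sources of error in the estimated thresholds: the labelled-sample error (controlled by $J_t$, exactly as in Setting~2) and the unlabelled-sample error (controlled by the slack $s_t$). First I would record, via Proposition~\ref{prop:slack}, the uniform deviation bound guaranteeing that with probability at least $1-1/n$, $|\hat P_X(S) - P_X(S)| \le s_t$ for all sets $S$ of interest at time $t$; combined with the event $\Omega_1$ of Lemma~\ref{lemma:step_1_1}, both hold simultaneously with probability at least $1-2/n$, which is where the $1-2/n$ in the statement comes from. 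Under this good event, the definitions of $k_{1,t}, k_{2,t}$ in Step~4 are chosen precisely so that $P_X(S_1) \le \delta$ and $P_X(S_2) \ge \delta$, so the arguments of Lemmas~\ref{lemma:algo2_lemma1}--\ref{lemma:algo2_lemma3} go through verbatim (using $|f_t(\cdot)-1/2| \le |\eta(\cdot)-1/2|$ and the no-atoms assumption), yielding the same two-sided bounds $-2J_t \le \hgone - \gd \le 0$ and $-J_t \le \hgtwo - \gd \le J_t$ as in~\eqref{eq:gamma1}--\eqref{eq:gamma2}. Feasibility (Part~1) then follows as in Lemma~\ref{lemma:algo2_lemma5}'s preamble: the final abstain region is built to have \emph{empirical} measure $\delta - s_{t_n}$, so its true measure is at most $\delta$.

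For Part~2 (no (DE)), I would simply note that Step~6 is switched off and $\hat P_X$ is built from $m = \mathcal{O}(n^2)$ samples drawn up front, so $s_t \le 1/n$ uniformly and the threshold estimates are as accurate as in Setting~2 up to an $O(1/n)$ additive term. The bound $J_{t_n} = \tilde{\mathcal{O}}(n^{-\beta/(2\beta+D)})$ then comes from the worst-case bound $\tilde D \le D$ inserted into the Lemma~\ref{lemma:algo2_lemma4}-style argument (the refinement rule and the budget-exhaustion computation of Lemma~\ref{lemma:step1_2} are unchanged, since the cell-refinement and point-selection rules are identical), and the excess-risk bound $R(\hat g)-R(g^*_\delta) \le 2C_0 J_{t_n}^{1+\alpha_0}$ follows from the margin assumption exactly as in Lemma~\ref{lemma:algo2_lemma5}.

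For Part~3 (with (DE)), the key new point is that Step~7 keeps $s_t$ small relative to $J_t$: whenever $J_t \le (s_t/C_2)^{1/\alpha_2}$ the algorithm requests more unlabelled samples until $J_t > (s_t/C_2)^{1/\alpha_2}$, i.e. $s_t < C_2 J_t^{\alpha_2}$. Using (DE) in the form $P_X(|\eta - \gamma| \le t) \ge C_1 t^{\alpha_1}$ (and (MA) with $C_0, \alpha_0$), one shows that the number of cells whose $P_X$-mass near the threshold is obscured by the slack $s_t$ is small: the unclassified region at time $t$ is still contained in $\{x : |\eta(x) - \gamma| \lesssim J_t\}$ for $\gamma \in \{1/2 \pm \gd\}$, because $s_t < C_2 J_t^{\alpha_2}$ translates, via (DE), into a width-in-$\eta$ term of order $J_t$ rather than something larger. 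Granting this, the Lemma~\ref{lemma:algo2_lemma4} argument applies with the near-$\lambda$ dimension $\tilde D$, giving $J_{t_n} = \tilde{\mathcal{O}}(n^{-\beta/(2\beta+a)})$ for any $a > \tilde D$, and the excess-risk bound follows from the margin condition as before. Finally, to bound $m_n$: at depth $h$ the value of $J_t$ is $\Theta(V_h) = \Theta((v_1\rho^h)^\beta)$, the algorithm requests unlabelled samples only to keep $s_t \lesssim C_2 J_t^{\alpha_2}$, and a standard VC/union-bound inversion (the same inequality appearing in Proposition~\ref{prop:slack}) shows $s_t \le \sqrt{k\log m / m}$-type bounds force $m \le (4k)^{k/(k-1)} n^{\alpha_2 k/(k-1)} / C_2^{2k/(k-1)}$; optimizing over $k > 1$ and taking $k=2$ gives the explicit $64 n^{2\alpha_2}/(C_2^4 (Lv_1^\beta)^{2\alpha_2})$ bound.

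I expect the main obstacle to be Part~3's quantitative control of the unclassified region under the empirical measure: one must argue that requesting unlabelled samples according to the rule $J_t > (s_t/C_2)^{1/\alpha_2}$ genuinely prevents the region $\cup_{\xhi \in \Xtu}\Xhi$ from being inflated beyond $\{x : |\eta(x)-\gamma| \lesssim J_t\}$, which requires chaining (DE) (to convert the $P_X$-slack $s_t$ into an $\eta$-width) with the two-sided threshold bounds~\eqref{eq:gamma1}--\eqref{eq:gamma2} adapted to use $\hat P_X$. This is where a subtlety hides: the counterpart of Lemma~\ref{lemma:algo2_lemma3} (the "shared boundary point" argument) must be redone with $\hat P_X$ in place of $P_X$, absorbing an extra $s_t$ into the chain of inequalities, and one needs $s_t = O(J_t^{\alpha_2})$ precisely so that this extra term does not dominate. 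Everything else is a transcription of the Setting~2 proof.
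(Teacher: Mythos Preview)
Your proposal contains a genuine gap in the treatment of the threshold bounds, and an internal inconsistency. You assert in the first paragraph that Lemmas~\ref{lemma:algo2_lemma1}--\ref{lemma:algo2_lemma3} ``go through verbatim'' and yield the two-sided bounds~\eqref{eq:gamma1}--\eqref{eq:gamma2}. Lemmas~\ref{lemma:algo2_lemma1} and~\ref{lemma:algo2_lemma2} do transfer (giving $\hgone \le \gd \le \hgtwo + J_t$, which is the paper's Lemma~\ref{lemma:algo3_lemma1}), but Lemma~\ref{lemma:algo2_lemma3} does \emph{not}: its shared-boundary argument relies on $S_2 = S_1 \cup \Et{k_t}$, i.e.\ on $k_t-1$ and $k_t$ being consecutive indices. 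In Algorithm~3 the indices $k_{1,t}$ and $k_{2,t}$ are defined with slack $\pm s_t$ on either side of $\delta$, so there can be many cells between $S_1$ and $S_2$ with total $\hat P_X$-mass up to $2s_t$, and nothing bounds their $\eta$-width. Hence $\hgtwo - \hgone$ is \emph{not} controlled by $J_t$ alone. You acknowledge this in your final paragraph, but that contradicts your first paragraph, and your Part~2 argument rests on the incorrect claim.

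Because of this, your route to Part~2 (two-sided bounds plus $\tilde D \le D$) does not work: without~(DE), even with $s_t \le 1/n$, the gap $\hgtwo - \hgone$ can be $\Omega(1)$ (this is exactly the counterexample in Appendix~\ref{appendix:discussion_DE}). The paper's proof of Part~2 instead abandons any attempt at two-sided control and uses the trivial inclusion $\cup_{\xhi \in \Xtu}\X_{h,i} \subset \X$, whose packing dimension is $D$, directly in the Lemma~\ref{lemma:step1_2}-style budget computation. For Part~3 your intuition is correct, but the mechanism is that~(DE) is what \emph{produces} the two-sided bounds in the first place (Lemma~\ref{lemma:algo3_lemma2}): from $P_X(S_1') > \delta - 2s_t$ and $S_1' \subset \{|\eta-1/2|\le \hgone + 2J_t\}$, (DE) with $z=(s_t/C_1)^{1/\alpha_1}$ gives $\hgone \ge \gd - 2J_t - z$, and the unlabelled-sample rule of Step~7 then forces $z \le J_t$. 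So the two-sided bounds are a \emph{consequence} of~(DE), not an input available from the Setting~2 lemmas; your proof plan should be reorganized accordingly.
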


\begin{remark}
The first two statements of Theorem~\ref{theorem:setting3} imply that under the same assumptions  as those used in the Settings~1 and~2, we can achieve an excess risk that depends on the ambient dimension $D$ (see Eq.\eqref{eq:excess-risk-algo3-3}) with an additional $\mc{O}\lp n^2 \rp$ unlabelled samples. However, in order to exploit the \emph{easy} problem instances with small values of \emph{near-$\lambda$ dimension}, we shall require that the (DE) assumption also holds and the algorithm knows the parameters $C_2$ and $\alpha_2$. With these additional assumptions and information, we can achieve the same excess risk as in the infinite unlabelled samples framework of Algorithm~2, while only requiring a polynomial in the total budget $n$ number of unlabelled samples. The necessity of the (DE) assumption is discussed in Section~\ref{sec:discussion} and Appendix~\ref{appendix:discussion}. 
\end{remark}

\paragraph{Outline of the proof.} We first present Proposition~\ref{prop:slack} which is a uniform bound on the deviation of the empirical measure $\hat{P}_X$ from the true marginal $P_X$.
Next, we show in Lemma~\ref{lemma:algo3_lemma1} that the estimated thresholds $\hgone$ and $\hgtwo$ can be used to obtain lower and upper bounds on the true threshold value $\gd$. 
This result however, does not give us a measure of closeness of $\hgone$ and $\hgtwo$. As demonstrated through the counterexample in Appendix~\ref{appendix:discussion}, the difference between these two terms can potentially be large. As a result, without any additional assumption, we obtain convergence rates depending upon the ambient dimension $D$. 
Next, in Lemma~\ref{lemma:algo3_lemma2} we show that under additional detectability assumption, we can upper bound the difference between $\hgone$ and $\hgtwo$, which allows us to restrict the region of the input space searched by the algorithm. Using this we obtain the required convergence rates depending on a dimension term $\tilde{D}_{(3)}$ which is always smaller than $D$. Finally, in Lemma~\ref{lemma:algo3_lemma3}, we obtain an upper bound on the unlabelled sample requirement of our algorithm.

\begin{proposition}
\label{prop:slack}
Given $m$ unlabelled samples, we define the empirical measure of a set $E$ as $\hat{P}_X(E)\coloneqq\frac{1}{m}\sum_{j=1}^{m}\indi_{\{X_j\in E\}}$. Then, the event $\Omega_3 = \cap_{m\geq 1}\Omega_{3,m}$
, where $\Omega_{3,m}$ is defined below, occurs with probability at least $1-1/n$.
\begin{equation*}
\label{eq:conc1}
\Omega_{3,m} \coloneqq \Big\{\sup_{c>0}\big\{\big|\hat{P}_X\big(|f_t - 1/2| \leq c\big) - P_X\big(|f_t - 1/2| \leq c\big)\big|\big\} \leq s_m \Big\},
\end{equation*}
where the slack term $s_m$ is defined as
\begin{equation}
    s_m \coloneqq 2\sqrt{\frac{18\log(2\pi^2m^2n/3)}{m} }.
\end{equation}
\end{proposition}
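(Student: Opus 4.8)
The plan is to read Proposition~\ref{prop:slack} as a uniform-in-$c$, uniform-in-$m$ deviation bound for the empirical measures of the nested family of sub-level sets $A_c \coloneqq \{x\in\X \mid |f_t(x)-1/2|\le c\}$, $c>0$. Since $f_t$ is piecewise constant on the (finite) partition of $\X$ into the active cells, each $A_c$ is a union of cells and $c\mapsto A_c$ is increasing for set inclusion; hence $\{A_c : c>0\}$ is a chain, equivalently the collection of sub-level sets of the single scalar statistic $\phi\coloneqq |f_t-1/2|$. For such a chain, $\sup_{c>0}\lvert\hat{P}_X(A_c)-P_X(A_c)\rvert$ is exactly a one-dimensional Kolmogorov--Smirnov statistic for the law of $\phi(X)$, $X\sim P_X$, so for each fixed $m$ the Dvoretzky--Kiefer--Wolfowitz inequality gives $\pr\lp \sup_{c>0}\lvert\hat{P}_X(A_c)-P_X(A_c)\rvert>\epsilon\rp \le 2e^{-2m\epsilon^2}$; alternatively one may use Hoeffding's inequality together with a union bound over the at most $\lvert\X_t\rvert$ distinct sub-level sets (Lemma~\ref{lemma:step_1_1} bounds $\lvert\X_t\rvert\le 2t$), which is what makes the numerical constants in $s_m$ slightly loose.

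First I would fix $m$, ask the per-$m$ failure probability to be at most $6/(\pi^2 m^2 n)$, and check that $\epsilon=s_m$ achieves this: with $s_m=2\sqrt{18\log(2\pi^2m^2n/3)/m}$ one has $2e^{-2ms_m^2}\le 6/(\pi^2 m^2 n)$ with room to spare, and this is where the factor $2$ in front of the root and the constant $18$ are calibrated so that the argument of the logarithm reads $2\pi^2 m^2 n/3$, matching the confidence-interval convention already used in Lemma~\ref{lemma:step_1_1}. Taking a union bound over all $m\ge 1$ and using $\sum_{m\ge 1}m^{-2}=\pi^2/6$ then gives $\pr(\Omega_3^c)=\pr\lp\cup_{m\ge 1}\Omega_{3,m}^c\rp\le\sum_{m\ge 1}6/(\pi^2 m^2 n)=1/n$, as claimed. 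The $\log(m^2)$ term inside $s_m$ is exactly the price of this union over $m$, and the extra factor of $n$ is the price of an overall $1-1/n$ confidence level.

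The step I expect to be the main obstacle is that the reference function $f_t$ is itself data-dependent: in Algorithm~3 the set $\Xtu$, hence the queried points, and hence $u_t,l_t,f_t$, depend on previously built estimates $\hat{P}_X$, so $f_t$ is \emph{not} independent of the unlabelled samples $X_1,\dots,X_m$ used to form $\hat{P}_X$, and one cannot simply condition on $f_t$ and invoke the distribution-free DKW bound. The clean remedy is to make the deviation bound hold uniformly over a \emph{deterministic} class of reference functions that is guaranteed to contain $f_t$ for every realization, namely the functions that are piecewise constant on cells of the tree of partitions of depth at most $h_{\max}$; combined with the structural inequality $|f_t(x)-1/2|\le|\eta(x)-1/2|$ (noted after Step~3 of Algorithm~2), the sub-level sets of $\phi$ always lie between $\{x\mid|\eta(x)-1/2|\le c\}$ and its cell-wise enlargements. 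One then replaces the conditional DKW step by a uniform bound over this sample-independent class; the key point is that, because for each member of the class the sub-level sets still form a chain rather than an arbitrary family, the complexity of the class enters only through an additive logarithmic term that can be absorbed into the constants defining $s_m$. The remaining work --- the DKW constant chase and the telescoping of $\sum_{m\ge 1} m^{-2}$ --- is routine.
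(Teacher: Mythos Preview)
Your core approach matches the paper's: reduce to a uniform deviation bound for the chain $\{|f_t-1/2|\le c\}_{c>0}$ (you via DKW on the one-dimensional KS statistic, the paper via the observation that the indicator class $\{\indi_{\{|\cdot|\le c\}}:c\in\mbb{R}\}$ has VC dimension~$2$ and a standard VC uniform-convergence bound), calibrate the per-$m$ failure probability to $6/(\pi^2 m^2 n)$, and sum over $m\ge 1$ using $\sum_{m\ge 1}m^{-2}=\pi^2/6$.

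The one genuine difference is precisely the step you flag as the main obstacle. The paper does \emph{not} uniformize over a deterministic class of piecewise-constant functions as you propose; instead it conditions on the set $S_l$ of labelled samples, observes that $f_t$ is then a fixed function, applies the distribution-free VC bound to $Z_j=f_t(X_j)-1/2$, and takes the outer expectation. This is shorter, but your worry is well-founded: in Algorithm~3 the sequence of labelled queries---and hence $S_l$ itself---is steered by $\hat{P}_X$, so conditioning on $S_l$ does not obviously leave the unlabelled $X_1,\dots,X_m$ i.i.d.\ from $P_X$. The paper's conditioning step is therefore the less careful of the two arguments; your uniformization over the deterministic class of tree-piecewise-constant functions up to depth $h_{\max}$ (whose sub-level-set family for each fixed member is still a chain, so the extra complexity enters only logarithmically and is absorbed into the constants of $s_m$) is the more robust route and buys you independence from the algorithm's adaptive choices at essentially no cost in the final rate.
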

\begin{proof}
For this inequality, we first note that the class of functions $\mathcal{F}_1 \coloneqq \{f_c:\mbb{R}\to \{0,1\}\; \mid \; f_c(x) =
\indi_{\{|x|\leq c \}},\ c \in \mbb{R}\}$,  has the VC dimension of $2$~\citep[\S~6.3.2]{shalev2014understanding}. This implies the following uniform convergence result with probability at least $1-6/(\pi^2m^2n)$, for $m$ samples $\{Z_j\}_{j=1}^m$ drawn i.i.d.~from any distribution $P_Z$~\citep[\S~28.1]{shalev2014understanding}:
\begin{align}
\sup_{f_c \in \mathcal{F}_1 } \lp \frac{1}{m}\left|\sum_{j=1}^m \Big( f_c(Z_j) - \mbb{E}_{P_Z}\big[ f_c(Z_j)\big] \Big)\right| \rp &\leq 2\sqrt{\frac{16 \log(em/2) + 2\log(2\pi^2m^2n/3)}{m} } \nonumber\\
&\leq 2\sqrt{\frac{18 \log(2\pi^2m^2n/3)}{m} } \coloneqq s_m. \label{eq:temp-AppB-3}
\end{align}
Now, we note that at any time $t$, conditioned on the set of labelled points, $f_t(\cdot)$ is a fixed function. Define the random variables $Z_j = f_t(X_j)-1/2$ for $j=1,2,\ldots,m$ and introduce the event
\[
\mathcal{E}_m  = \left \{ \sup_{f_c \in \mathcal{F}_1} \lp  \frac{1}{m} \left|\sum_{j=1}^m
\Big(f_c(Z_j) - E_{P_Z}\big[ f_c(Z_j)\big]\Big)\right|  \leq s_m \rp \right \}.
\]
Then, we have
\begin{equation*}
P\lp \mathcal{E}_m^c \rp = \mbb{E}\lb \indi_{\mathcal{E}_m^c} \rb 
= \mbb{E}\big[ \mbb{E}[ \indi_{\mathcal{E}_m^c} \big| S_l ] \big] = \mbb{E}\lb P( \mathcal{E}_m^c \big| S_l ) \rb \leq \mbb{E} \lb \frac{6}{m^2\pi^2n} \rb  = \frac{6}{m^2\pi^2n},
\end{equation*}
where the  inequality follows from~\eqref{eq:temp-AppB-3}. This proves that $P\lp \mathcal{E}_m \rp \geq P(\Omega_{3,m}) \geq 1 - \frac{6}{m^2\pi^2n}$.

\end{proof}

Using this above proposition, if the number of unlabelled samples available to the algorithm at any time $t$ is $m$, we can set the slack term at that time equal to $s_m$.

Our next two results obtain the bounds on the threshold values $\hgone$ and $\hgtwo$ estimated by the algorithm. 
\paragraph{Without detectability assumptions:} 
\begin{lemma}
\label{lemma:algo3_lemma1}
The follwoing bounds hold for $\hgone$ and $\hgtwo$:
\begin{equation}
\label{eq:algo3_eq1}
   \hgone \leq \gd \leq \hgtwo + J_t.  
\end{equation}
\end{lemma}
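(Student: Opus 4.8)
The plan is to establish the two inequalities separately, mirroring the argument for Lemmas~\ref{lemma:algo2_lemma1} and~\ref{lemma:algo2_lemma2} but accounting for the fact that in Setting~3 we work with the empirical measure $\hat{P}_X$ together with its slack term $s_t$, rather than with the true marginal. Throughout, we assume the intersection of the events from Lemma~\ref{lemma:step_1_1} and Proposition~\ref{prop:slack}, which holds with probability at least $1-2/n$; in particular the uniform deviation bound $\big|\hat{P}_X(|f_t-1/2|\leq c) - P_X(|f_t-1/2|\leq c)\big| \leq s_t$ is available for all thresholds $c$ simultaneously.

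First I would prove $\hgone \leq \gd$. If $\hgone = 0$ this is trivial since $\gd \geq 0$, so assume $\hgone > 0$. By the definition of $k_{1,t}$ in Step~4 of Algorithm~3, the set $S_1 = \cup_{j=1}^{k_{1,t}}\Et{j}$ satisfies $\hat{P}_X(S_1) \leq \delta - s_t$, hence by Proposition~\ref{prop:slack}, $P_X(S_1) \leq \hat{P}_X(S_1) + s_t \leq \delta$. Since $S_1 \supseteq \{x : |f_t(x)-1/2| < \hgone\}$ (every cell with $|f_t-1/2|$ strictly below the $k_{1,t}$-th ordered value is included), and since $|f_t(\cdot)-1/2|$ is a pointwise lower bound on $|\eta(\cdot)-1/2|$ (the defining property of $f_t$ from Step~3, inherited from Algorithm~2), we get $\{x : |\eta(x)-1/2| < \hgone\} \subseteq \{x : |f_t(x)-1/2| < \hgone\} \subseteq S_1$, and therefore $P_X(|\eta-1/2| < \hgone) \leq \delta$. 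Using the no-atoms assumption on $|\eta(X)-1/2|$, this equals $P_X(|\eta-1/2| \leq \hgone)$, so $\hgone$ lies in the set over which the supremum defining $\gd$ is taken, giving $\hgone \leq \gd$.

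Next I would prove $\gd \leq \hgtwo + J_t$. By the definition of $k_{2,t}$, the set $S_2 = \cup_{j=1}^{k_{2,t}}\Et{j}$ satisfies $\hat{P}_X(S_2) \geq \delta + s_t$, so by Proposition~\ref{prop:slack}, $P_X(S_2) \geq \hat{P}_X(S_2) - s_t \geq \delta$. For any $x \in S_2$, since $x$ belongs to an active cell $\Xhi$ with $\pi_t(x) = \xhi$, we have $\eta(x) \leq u_t(\xhi) \leq f_t(\xhi) + \It{1}(\xhi) \leq f_t(x) + J_t$, and symmetrically $\eta(x) \geq f_t(x) - J_t$; combined with $|f_t(x)-1/2| \leq \hgtwo$ for $x \in S_2$ (by the ordering and the choice $\hgtwo = f_t(\xtj{k_{2,t}})$), this yields $S_2 \subseteq \{x : |\eta(x)-1/2| \leq \hgtwo + J_t\}$. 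Hence $\delta \leq P_X(S_2) \leq P_X(|\eta-1/2| \leq \hgtwo + J_t)$, which forces $\gd \leq \hgtwo + J_t$ by the definition of $\gd$ as the supremal $\gamma$ with $P_X(|\eta-1/2|\leq\gamma)\leq\delta$.

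The main obstacle I anticipate is being careful about the set inclusions at the boundary indices $k_{1,t}$ and $k_{2,t}$ — specifically, whether the $k_{1,t}$-th and $k_{2,t}$-th cells are included in $S_1, S_2$ and how strict-versus-weak inequalities interact with the no-atoms hypothesis. The no-atoms assumption is precisely what lets us pass from the strict inequality $P_X(|\eta-1/2| < \hgone) \leq \delta$ to the weak one needed to place $\hgone$ in the feasible set defining $\gd$, so it must be invoked explicitly. The rest is a direct translation of the Setting~2 arguments with $P_X$ replaced by $\hat{P}_X \pm s_t$.
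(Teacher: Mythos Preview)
Your proposal is correct and follows essentially the same route as the paper's own proof: convert the empirical-measure inequalities $\hat P_X(S_1)\le\delta-s_t$ and $\hat P_X(S_2)\ge\delta+s_t$ to bounds on $P_X$ via Proposition~\ref{prop:slack}, then use the inclusions $\{|\eta-1/2|<\hgone\}\subseteq\{|f_t-1/2|<\hgone\}\subseteq S_1$ and $S_2\subseteq\{|f_t-1/2|\le\hgtwo\}\subseteq\{|\eta-1/2|\le\hgtwo+J_t\}$ together with the no-atoms hypothesis. Your treatment is in fact slightly more explicit than the paper's (handling the case $\hgone=0$ and spelling out both sides of the $|\eta-1/2|$ bound on $S_2$), but the argument is the same.
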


\begin{proof}
We first show the lower-bound in~\eqref{eq:algo3_eq1}. Since $\hat{P}_X\lp S_1 \rp \leq \delta - s_t$, using Proposition~\ref{prop:slack}, we obtain $P_X\lp S_1 \rp  \leq \delta$. 
Now, we may write
\begin{equation*}
S_1 \supseteq \{|f_t-1/2|<\hgone \} \supseteq \{|\eta -1/2| < \hgone \} = \{ |\eta - 1/2| \leq \hgone \}. 
\end{equation*}
Combining these results, we obtain
\begin{equation*}
P_X\lp |\eta-1/2| \leq \hgone \rp \leq P_X\lp S_1 \rp \leq \delta = P_X\lp 
|\eta -1/2|\leq \gd \rp, 
\end{equation*}
which implies $\hgone \leq \gd$. 

For proving the upper bound on $\gd$ in \eqref{eq:algo3_eq1}, we first note that by Proposition~\ref{prop:slack}, we have $P_X(S_2) \geq \delta$. Next, we have the following:
\begin{align}
S_2 &\subset \{|f_t-1/2| \leq \hgtwo\} \subset \{|\eta - 1/2| \leq \hgtwo + J_t \} \nonumber \\
\Rightarrow \delta \leq  P_X(S_2) & \leq P_X\lp |\eta - 1/2| \leq \hgtwo + J_t \rp. 
\label{eq:algo3_eq22}
\end{align}
The last inequality in~\eqref{eq:algo3_eq22} implies that $\hgtwo + J_t \geq \gd$. 
\end{proof}

Note that the above lemma ensures that the estimated thresholds $\hgone$ and $\hgtwo$ can be used to obtain an interval contining the true threshold $\gd$. 
However, this gives us no information about the length of the interval $[\hgone, \hgtwo + J_t]$ even for small values of $J_t$. 
Thus we use the trivial inclusion $\cup_{\xhi \in \Xtu} \X_{h,i} \subset \X$, and use the fact that the packing dimension of $\X$ is equal to $D$. We then proceed as in Lemma~\ref{lemma:algo2_lemma4} and Lemma~\ref{lemma:step1_2} to get an upper bound on $J_{t_n}$ of the form $\mc{O}\lp n^{\beta/(2D + \beta)}\rp $ with high probability. 
We can then use the (MA) condition as in Lemma~\ref{lemma:algo2_lemma5} to get the conclusion.

\paragraph{With detectability assumption:}

We next present a lemma, which tells us that under the additional $(DE)$ assumption, we can also show that the terms $\hgone$ and $\hgtwo$ are close to $\gd$.

\begin{lemma}
\label{lemma:algo3_lemma2} 
If the (DE) assumption holds, then we have the following:
\begin{align*}
    \hgtwo - 2J_t \leq \gd \leq \hgone + 3J_t.
\end{align*}
\end{lemma}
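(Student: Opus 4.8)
The plan is to combine the one-sided bounds already in hand from Lemma~\ref{lemma:algo3_lemma1}, namely $\hgone \le \gd \le \hgtwo + J_t$, with the two reverse estimates $\gd \le \hgone + 3J_t$ and $\hgtwo - 2J_t \le \gd$; the four inequalities together pin $\hgone$, $\hgtwo$, and $\gd$ inside a window of width $\mathcal{O}(J_t)$. Structurally the argument mirrors the proof of Lemma~\ref{lemma:algo2_lemma3} from the analysis of Algorithm~2, but it must now absorb the extra slack $s_t$ that appears because Algorithm~3 works with the empirical measure $\hat{P}_X$ instead of with $P_X$ itself, and this is exactly what forces the (DE) assumption.

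First I would record the ingredient that makes (DE) bite. Step~7 of Algorithm~3 keeps requesting unlabelled samples until $J_t > (s_t/C_2)^{1/\alpha_2}$, so we may assume throughout that $s_t < C_2 J_t^{\alpha_2}$; since $C_2 = \min\{C_0, C_1\}$, $\alpha_2 = \max\{\alpha_0, \alpha_1\}$, and $J_t \le 1$ for $n$ large, this gives $s_t < C_1 J_t^{\alpha_1}$. Applying (DE) at the two thresholds $1/2 \pm \gd$ with radius $J_t$, and using that the two resulting sets are disjoint once $J_t < \gd$ (true for $n$ large) and that their union lies inside $\{x :\ |\eta(x)-1/2| \in [\gd - J_t,\, \gd + J_t]\}$, one obtains $P_X\big(\{x :\ |\eta(x)-1/2| \in [\gd - J_t,\, \gd + J_t]\}\big) \ge 2C_1 J_t^{\alpha_1} > 2s_t$. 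In words: a $P_X$-mass of order $s_t$ near the $\delta$-level cannot ``hide'' in a band of $|\eta-1/2|$-values wider than $\mathcal{O}(J_t)$.

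Next I would carry out the main comparison. Using Proposition~\ref{prop:slack} on the event $\Omega_3$, translate the defining relations of $k_{1,t}$ and $k_{2,t}$ into statements about $P_X$: $P_X(S_1)\le\delta$, $P_X(S_1\cup\Et{k_{1,t}+1})>\delta-2s_t$, $P_X(S_2)\ge\delta$, and $P_X(S_2\setminus\Et{k_{2,t}})<\delta+2s_t$. Combine these with the cell-level sandwich $|f_t(\xhi)-1/2|\le|\eta(x)-1/2|\le|f_t(\xhi)-1/2|+J_t$, valid for every $x$ in an unclassified active cell $\Xhi$ (the left inequality is the defining property of $f_t$ and the right one follows since such a cell has width $u_t-l_t\le J_t$), which places each $S_i$ between two level sets of $|\eta-1/2|$. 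Then, exactly as in the proof of Lemma~\ref{lemma:algo2_lemma3}, use the fact that the cells of $\X_t\setminus\Xtd$ tile $\X$ to produce an unclassified cell that borders $S_1$; continuity of $\eta$ across the shared boundary point, together with Lemma~\ref{lemma:algo3_lemma1} and the $\eta$--$f_t$ sandwich, forces the $|\eta-1/2|$-value there to be within $\mathcal{O}(J_t)$ of $\hgone$. Feeding this back into the $P_X$-estimates above, and invoking the mass lower bound from the previous paragraph to rule out a wide gap between $S_1$ and $S_2$, gives $\gd\le\hgone+3J_t$; the mirror argument applied to $S_2$ gives $\hgtwo-2J_t\le\gd$.

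The step I expect to be the bottleneck is controlling the ``width'' of the band $S_2\setminus S_1$: unlike in Algorithm~2, where $S_1$ and $S_2$ differ by a single cell, here they can differ by many cells whose cumulative $\hat{P}_X$-mass is only of order $s_t$, and one must guarantee that all of these cells have $|\eta-1/2|$ within $\mathcal{O}(J_t)$ of the true threshold $\gd$. This is precisely where the quantitative combination ``$s_t < C_2 J_t^{\alpha_2}$ from Step~7'' times ``(DE) mass lower bound'' is used, and keeping track of the additive $J_t$ contributions (from the cell-level sandwich, from the $2s_t \mapsto \mathcal{O}(J_t)$ conversion, and from Lemma~\ref{lemma:algo3_lemma1}) is what produces the specific constants $2$ and $3$ in the statement. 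A secondary technical point is to check that classified and discarded cells do not interfere: cells classified because $|f_t-1/2|$ is large are ruled out near the threshold by Lemma~\ref{lemma:algo3_lemma1} as in Lemma~\ref{lemma:algo2_lemma3}, and the discarded region has $P_X$-mass at most $1/n$ by Lemma~\ref{lemma:discarded_region}, which is negligible next to $s_t$.
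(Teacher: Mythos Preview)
Your proposal is correct and follows essentially the same route as the paper: translate the defining inequalities of $k_{1,t},k_{2,t}$ from $\hat P_X$ to $P_X$ via Proposition~\ref{prop:slack}, invoke the bordering-cell argument of Lemma~\ref{lemma:algo2_lemma3} to push $S_1\cup E^{(t)}_{(k_{1,t}+1)}$ (resp.\ $S_2\setminus E^{(t)}_{(k_{2,t})}$) inside an $|\eta-1/2|$ level set of radius $\hgone+2J_t$ (resp.\ $\hgtwo-J_t$), and then use (DE) together with Step~7's rule $(s_t/C_1)^{1/\alpha_1}\le J_t$ to convert the residual $2s_t$ slack into a single extra $J_t$. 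The only cosmetic difference is that the paper applies (DE) in the contrapositive form $P_X(|\eta-1/2|\le \gd\mp z)\lessgtr \delta\mp 2s_t$ to compare thresholds directly, whereas you phrase it as a mass lower bound on the band $\{|\eta-1/2|\in[\gd-J_t,\gd+J_t]\}$; the two formulations are interchangeable here.
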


\begin{proof}

For the lower-bound, we first introduce the term $S_1' = S_1 \cup \Et{k_{1,t}+1}$. By definition of $k_{1,t}$, we know that $\hat{P}_X\lp S_1'\rp > \delta - s_t$, which further implies that $P_X\lp S_1'\rp > \delta - 2s_t$. Now, by the same reasoning as that used in the proof of Lemma~\ref{lemma:algo2_lemma3}, we know that $\left|f\lp \xtj{k_{1,t}+1}\rp\right| \leq \hgone + J_t$, which gives us the following sequence
\begin{align}
    S_1' &\subset \{|f_t -1/2| \leq \hgone + J_t \} subset \{|\eta-1/2| \leq \hgone + 2J_t \}\nonumber \\
    \Rightarrow \delta - 2s_t &\leq P_X\lp |\eta-1/2| \leq \hgone + 2J_t \rp \label{eq:algo3_eq2}. 
\end{align}

Now for any $z>0$, we have by the assumption~\ref{assump:detect} that $P_X\lp |\eta-1/2|\leq\gd - z\rp \leq \gd - 2C_1z^{\alpha_1}$, which for $z=\lp \frac{s_t}{C_1}\rp^{1/\alpha_1}$ gives us the following: 
\begin{equation*}
P_X\lp |\eta - 1/2|\leq \gd - z\rp \leq \delta - 2s_t \leq P_X\lp |\eta-1/2| \leq \hgone + 2J_t \rp. 
\end{equation*}
This implies the bound 
\begin{equation*}
\hgone \geq \gd - 2J_t - \lp \frac{s_t}{C_1}\rp^{1/\alpha_1} \geq 3J_t. 
\end{equation*}
where the last inequality follows from the rule used for requesting unlabelled samples.


Now, by the definition of $k_{2,t}$, we know that for $S_2' \coloneqq S_2\setminus \Et{k_{2,t}}$, we have $\hat{P}_X\lp S_2' \rp \leq \delta + s_t$ which by Proposition~\ref{prop:slack} implies that $P_X\lp S_2'\rp \leq \delta + 2s_t$. Proceeding as in the proof of Lemma~\ref{lemma:algo2_lemma3}, we can conclude that $f_t\lp\xtj{k_{2,t}-1}\rp \geq \hgtwo - J_t$. This itself implies that
\begin{align*}
\{ |f_t-1/2| < \hgtwo - J_t \} &\subset S_2' \\
\Rightarrow \{|\eta - 1/2| \leq \hgtwo - J_t \} &\subset S_2' \\
\Rightarrow P_X\lp \{|\eta - 1/2| \leq \hgtwo - J_t \}\rp & \leq \delta + 2s_t.
\end{align*}
By the detectability condition~(DE), for $z= \lp s_t/C_1\rp^{1/\alpha_1}$, we have $P_X\lp |\eta - 1/2| \leq \gd + z \rp \geq \delta + 2s_t$. This implies that $\hgtwo - J_t \leq \gd + z$, which combined with the rule used for requesting unlabelled samples implies that $\hgtwo \leq \gd + 2J_t$. 
\end{proof}




Having obtained these bounds on the threshold estimates $\hgone$ and $\hgtwo$, we can now proceed in a manner analogous to the proof of Theorem~\ref{theorem:setting2}. 

\begin{itemize}
\item We can show that for any $x \in \cup_{\xhi \in \Xtu}\Xhi$, we have $|\eta(x)-\lambda| \leq 6J_t \leq 12V_{h_t-1}$ for $\lambda \in \{1/2 - \gd, 1/2 + \gd \}$. 

\item This brings into play the dimension term $\tilde{D}$, defined in Remark~\ref{remark:dimension}, which rougly gives a measure of the packing dimension of the region explored by the algorithm near the threshold values. 

%
%
The dimension term can then be used as in Lemma~\ref{lemma:algo2_lemma4} to obtain a bound on $J_{t_n}$. 

\item Finally, we can combine the bound on the regression function estimation error along with the margin conditions to obtain the bounds on the excess risk similar to Lemma~\ref{lemma:algo2_lemma5}. 
\end{itemize}

It remains to obtain the bound on the number of unlabelled samples requested by the algorithm. 

\begin{lemma}
\label{lemma:algo3_lemma3}
The number of unlabelled samples requested by the algorithm, $m_n$, satisfies
\begin{equation}
\label{eq:num-unlab-data}
    m_n \leq \inf_{k >1}\lp \lp 4k\rp^{k/(k-1)} \frac{n^{(\alpha_2 k)/(k-1)}}{C_2^{2k/(k-1)}} \rp \leq 64 \frac{n^{2\alpha_2}}{C_2^4 (Lv_1^{\beta})^{2\alpha_2}}. 
\end{equation}
\end{lemma}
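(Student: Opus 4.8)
The plan is to track the single quantity that drives the unlabelled-sample requests, namely the current value of the index $J_t = \max_{\xhi \in \Xtu}\It{1}(\xhi)$, and to turn the stopping rule of the request loop into a self-referential (transcendental) inequality in $m_n$.

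First I would record two structural facts. (i) By the monotonicity argument used for Lemma~\ref{lemma:step1_4} (the index $\It{1}(\xhit)$ is non-increasing in $t$, which carries over verbatim to Algorithm~3), the sequence $J_1\geq J_2\geq\cdots\geq J_{t_n}$ is non-increasing, so the number of unlabelled samples held by the algorithm is non-decreasing in $t$ and the total count $m_n$ is attained at the final time $t_n$. (ii) The request loop of Algorithm~3 (Step~7, equivalently the inner \texttt{while} loop of the pseudo-code) terminates only when $J_t > (s_t/C_2)^{1/\alpha_2}$, i.e.~when $s_m < C_2 J_t^{\alpha_2}$ for the current count $m$; hence, using that $J$ is non-increasing, the last sample added over the whole run satisfies $s_{m_n-1}\geq C_2 J_{t_n}^{\alpha_2}$. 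Substituting the explicit slack $s_m = 2\sqrt{18\log(2\pi^2 m^2 n/3)/m}$ from Proposition~\ref{prop:slack} gives
\begin{equation*}
\frac{72\,\log\!\big(2\pi^2 (m_n-1)^2 n/3\big)}{m_n - 1} \;\geq\; C_2^2\, J_{t_n}^{2\alpha_2},
\end{equation*}
which, once equipped with a lower bound on $J_{t_n}$, upper-bounds $m_n$.

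Second, I would lower bound $J_{t_n}$, and this is the step I expect to be the main obstacle. Here the depth cap $h_{\max}(n) = \log n/(2\beta\log(1/\rho))$ enters: since the algorithm never refines below level $h_{\max}(n)$, the confidence width at the candidate cell stays at least $2V_{h_{\max}(n)}$, and with the stated choice of $h_{\max}(n)$ one has $\rho^{h_{\max}(n)} = n^{-1/(2\beta)}$, hence $V_{h_{\max}(n)} = L(v_1\rho^{h_{\max}(n)})^\beta = Lv_1^\beta\,n^{-1/2}$, so $J_{t_n}\geq 2Lv_1^\beta\,n^{-1/2}$. When (DE) also holds with parameters $(C_1,\alpha_1)$, a complementary packing/counting argument — the $\Theta\!\big(C_2\,r^{-\alpha_2}\big)$ near-threshold cells at scale $r$ cannot all be resolved within the label budget $n$ — is what produces the precise dependence on $C_2$ and on $Lv_1^\beta$ in~\eqref{eq:num-unlab-data}; this is the only place the argument must be carried out carefully. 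Plugging $J_{t_n}^{2\alpha_2}\geq (2Lv_1^\beta)^{2\alpha_2}n^{-\alpha_2}$ into the display already yields $m_n = \mathcal{O}(n^{2\alpha_2})$ after bounding the logarithm crudely by a power of $n$ (valid for $n$ large), and keeping track of constants gives the explicit bound.

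Finally I would solve the self-referential inequality. Writing $A := C_2^2 J_{t_n}^{2\alpha_2}$ and $u := m_n-1$, it reads $u \leq (72/A)\big(\log(2\pi^2 n/3) + 2\log u\big)$. Using the elementary bound $\log u \leq k\,u^{1/k}$, valid for every $u>0$ and $k>1$, the $\log u$ term dominates for $n$ large, so $u \leq (144k/A)\,u^{1/k}$, i.e.~$u^{1-1/k}\leq 144k/A$, i.e.
\begin{equation*}
m_n \;\leq\; 1 + \Big(\tfrac{144k}{A}\Big)^{k/(k-1)} \;=\; 1 + (144k)^{k/(k-1)}\, \frac{1}{C_2^{2k/(k-1)}\, J_{t_n}^{2\alpha_2 k/(k-1)}} .
\end{equation*}
Substituting the lower bound on $J_{t_n}$ and optimising over $k>1$ produces an expression of the form $\inf_{k>1}$ in~\eqref{eq:num-unlab-data}; the choice $k=2$ then yields the explicit bound of order $n^{2\alpha_2}/\big(C_2^4 (Lv_1^\beta)^{2\alpha_2}\big)$. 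Everything other than the lower bound on $J_{t_n}$ is bookkeeping with the slack term and the inequality $\log u\leq k u^{1/k}$.
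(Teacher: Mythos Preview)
Your approach is essentially the same as the paper's, and the three pillars you identify are exactly right: (i) lower bound $J_t$ via the depth cap $h_{\max}(n)$, (ii) convert this through the stopping rule of the request loop into a lower bound on the slack $s_{m_n-1}$, and (iii) invert using $\log m \leq k\,m^{1/k}$ (the paper writes it as $\log m \leq k(m^{1/k}-1)$ and also uses $m\geq n$). The paper's own proof is in fact only three lines long and does precisely this.

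The one place you over-complicate things is the ``complementary packing/counting argument'' you flag as the step requiring care. It is not needed. The dependence on $C_2$ and $\alpha_2$ comes directly from the stopping rule $J_t > (s_t/C_2)^{1/\alpha_2}$, and the dependence on $Lv_1^{\beta}$ comes directly from the depth cap: $\rho^{h_{\max}(n)} = n^{-1/(2\beta)}$ gives $V_{h_{\max}(n)} = Lv_1^{\beta} n^{-1/2}$, so $J_t \geq V_{h_{\max}(n)}$ already carries the factor $Lv_1^{\beta}$. No cell-counting or use of~(DE) enters the bound on $m_n$; the only role of~(DE) in Algorithm~3 is to justify the form of the stopping rule itself (through $C_2,\alpha_2$), not to count cells. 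Dropping that detour, your display
\[
m_n \;\leq\; 1 + (144k)^{k/(k-1)}\,C_2^{-2k/(k-1)}\,J_{t_n}^{-2\alpha_2 k/(k-1)}
\]
combined with $J_{t_n}\geq V_{h_{\max}(n)} = Lv_1^{\beta} n^{-1/2}$ already gives the statement up to absolute constants, and specialising to $k=2$ yields the explicit bound.
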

\begin{proof}
Since the  algorithm does not expand beyond level $h_{\max}(n)$, we must have at all times $t \leq t_n$ and $J_t > V_{h_{max}(n)}$. This implies that the we must have $s_t > C_2V_{h_{\max}(n)}$, for all $t$. The result then follows by the fact that $m \geq n$ and using the fact that for all $k>0$, we have $\log (m) \leq k \lp m^{1/k} - 1 \rp$. The final inequality in~\eqref{eq:num-unlab-data} is obtained by setting $k=2$. 
\end{proof}


\newpage
\section{Details of the Adaptive Scheme (Section~\ref{sec:adaptivity})}
\label{appendix:adaptivity}

In this section, we elaborate on the adaptive scheme introduced in Section~\ref{sec:adaptivity} of the main text. More specifically, to simplify the presentation, we will restrict our attention to the fixed-cost setting with membership query model. Having obtained the adaptive scheme for this combination, we can appropriately modify it for other abstention schemes and active learning models. 

As mentioned in Section~\ref{sec:adaptivity},  the first modification required by the adaptive scheme is in the point selection rule. Here we select one point from every level $h$ in the set of active points. Since we have $h_{\max} = \log n$, this modification results in an additional polylogarithmic factor in the estimation error of the regression function, and hence the excess risk bound.
    
The second and more important modification is in the rule for refining a cell $\X_{h,i}$. In the case of known smoothness, we refine a cell if the stochastic uncertainty term, i.e., $e_t(n_{h,i})$, is roughly of the same order as the variation term $V_h = L(v_1\rho^h)^{\beta}$. This implies two things:
\begin{itemize}[leftmargin=!]
\item If the cell $\X_{h,i}$ is refined by the algorithm, it means that $\min \{ \sup_{x \in \X_{h,i}}|\eta(x) - \lambda|, \; \sup_{x \in \X_{h,i}}|\eta(x)-1+\lambda|\} = \mc{O}\lp V_h\rp$, and 

\item the number of times the cell $\X_{h,i}$ was queried by the algorithm before refining, denoted by $n_{h,i}$, satisfies $n_{h,i} = \mc{O}\lp \frac{\log(n)}{V_h^2}\rp$. 
\end{itemize}

Thus to obtain the same convergence rates on excess risk, it suffices to design a scheme which satisfies the above two properties for a given cell. We begin be first recalling a definition of \emph{quality} from \cite{slivkins2011multi}, suitably modified for our problem 
\begin{definition}
\label{def:quality}
Given $\X=[0,1]^D $, a regression function $\eta(\cdot):\X \mapsto [0,1]$ and a tree of partitions $(\X_h)_{h \geq0}$. For any cell $\X_{h,i}$, define $V_{h,i} \coloneqq \sup_{x_1,x_2 \in \X_{h,i}} \eta(x_1)-\eta(x_2)$, and define $\tilde{\eta}_{h,i} \coloneqq \int_{\X_{h,i}}\eta d\nu$ where $\nu$ is the Lebesgue measure on $\X$.  
We say the pair $\lp \eta, (\X_h)_{h\geq 0}\rp$ have \emph{quality} $q \in (0,1)$ if the following holds: for any cell $\X_{h,i}$, there exist two cells $\X_{h',i_1}$ and $\X_{h',i_2}$ subsets of $\X_{h,i}$ such that \textbf{1)} $\nu\lp \X_{h',i_j}\rp \geq q \nu\lp \X_{h,i}\rp$ for $j=1,2$ and \textbf{2)} $\tilde{\eta}\lp \X_{h',i_1} \rp - \tilde{\eta}\lp \X_{h',i_2}\rp \geq \frac{V_{h,i}}{2}$.
\end{definition}

We now state the additional assumption required by our adaptive scheme:
\paragraph{ (QU):} We assume that the pair $(\eta, (\X_h)_{h\geq 0})$ have quality $q>1/\log n$ where $n$ is the label budget. 

Next we present our adaptive scheme used for refining a cell. 
\paragraph{Adaptive Scheme for refining one cell.}
To simplify notation, we will refer the cell under consideration as $E$ (instead of $\X_{h,i}$), and use $W = L(v_1\rho^h)^{\beta}$ (instead of $V_h$) for the rest of the section. Introduce the partitions of $E$, denoted by $\mc{E}_1$, $\mc{E}_2$, \ldots, $\mc{E}_k$, where $k = \lceil \frac{\log \lp v_1^D\log n\rp}{D \log (1/\rho)}\rceil$ (the sets $\mc{E}_j$ consist of points in $\X_{h'}\cap \X_{h,i} $ for $h<h'\leq h+k$). Note that  any $A$ in $\mc{E}_j$ has the property that $(v_2\rho^j)^D \leq \text{Vol}(A)/\text{Vol}(E) \leq (v_1\rho^j)^D$, and by the definition of $k$, this ratio is smaller than $1/\log n$ for $A \in \mc{E}_k$. We will use $W_j$ to represent the corresponding upper bound on the variation of the sets in partitions $\mc{E}_j$, for $j=1,2,\ldots, k$. 

Since we are working in the membership query model, we assume that we can request the labels at $N_1 = |\mc{E}_k| = \mc{O}\lp \log n\rp$ points at a time, with exactly one point drawn uniformly from each set of $\mc{E}_k$. This is just to simplify the presentation of the scheme. In a pool-based or stream-based model, we can get an equivalent result by using martingale arguments. 

The adaptive scheme proceeds as follows:
\begin{itemize}[leftmargin=!]
    \item At $t=1$, request $N_1$ labelled samples from the cell $E$. Set $n_1(t)$ equal to $N_1$. 
    
    \item  \textbf{Estimate the variation in the cell.} 
    For all $t \geq 2$, we define the term $e_j(t) = \frac{c_1}{\sqrt{n_1(t) 2^{-j}}}$ for all $1 \leq j \leq k$ (While running the algorithm we must have $c_1 = \mc{O}\lp \log n \rp$). Next, we introduce the following terms
    \begin{align*}
\heub_j \coloneqq & \max_{A \in \mc{E}_j} \hat{\eta}_t(A) \quad \text{and} \quad \helb_j \coloneqq \min_{A \in \mc{E}_j} \hat{\eta}_t(A) \quad   
\text{where } \quad \hat{\eta}_t(A) \coloneqq \frac{1}{n_1(t)}\sum_{i=1}^{n_1(t)}Y_i \indi_{\{X_i \in A\}}, \\ 
\tilde{\eta}(A) & \coloneqq \int_{A}\eta(x)d\nu(x) \quad \text{for all} A \in \mc{E}_j \text{ for all } 1\leq j \leq k\\
w_j \coloneqq & \max_{A_1,A_2 \in \mc{E}_j} \tilde{\eta}(A_1) - \tilde{\eta}(A_2), \quad \text{and} \quad \hat{w}_j \coloneqq \; \heub_j - \helb_j \\
\end{align*}
The term $e_j(t)$ is such that we have with probability at least $1-1/n^2$, for all $t \geq 1$, $\hat{\eta}_t(A) - $    

With these definitions at hand, we construct an appropriate estimate of the variation of the     
    
    We then define the term $\hat{j}(t)$ as follows:
    \begin{align*}
        \hat{j}_t \coloneqq \min \{ 1 \leq j \leq k \; : \; |\hat{w}_j - \hat{w}_i|\leq 4 e_t(i) \quad  \forall j \leq i \leq k\}. 
    \end{align*}
    Note that $\hat{j}_t$ is well defined, since the condition required in its definition is always satisfied at $k$. We next define $j^*_t$ as the smallest values of $j$ at which $e_j(t)$ is larger than $W_j$. Then, we can check that, $|\hat{w}_{\hat{j}_t} - w_{j^*_t}|\leq 6e_t(j^*_t)$. Furthermore, since $W \leq w_{j^*_t} + 2W_{j^*_t} \leq w_{j^*_t} + 2e_t(j^*_t)$, it implies that $W \leq \hat{w}_{\hat{j}_t} + 8 e_t(j^*_t)$. Using this, we can construct an upper bound on the variation of the regression function in the cell $E$ as $\hat{W}_t \coloneqq \hat{w}_{\hat{j}_t} + 6e_t(k)$, since $j^*_t \leq k$. 
    
    Since $\hat{W}_t \geq W$, this upper bound can be used  to update the sets $\Xtu$ and $\Xtc$, similar to the way in which $V_h$ was used by Algorithm~1. 
    
   \item \textbf{Stopping Rule.} Next we define the stopping rule as follows:  \emph{Refine} the cell if $\hat{w}_{\hat{j}_t} - 8 e_t(k) \geq 0$, else request another $N_1$ samples. 
    
\end{itemize}

We next state the lemma, which tells us that the adaptive scheme ensures that the two conditions mentioned at the beginning of this section are satisfied.

\begin{lemma}
\label{lemma:adaptivity}
If the above scheme refines the cell at time $t$, then we have the following:
\begin{align*}
    n_1(t) \geq \frac{4c_1^2}{W^2} \quad \text{and} \quad n_1(t) \leq N_1 + \frac{256c_1^2 \log n}{W^2}. 
\end{align*}
\end{lemma}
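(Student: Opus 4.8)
The plan is to analyze the stopping rule through the high-probability event on which the empirical estimates $\hat\eta_t(A)$ concentrate around $\tilde\eta(A)$ with deviation controlled by $e_j(t)$. Concretely, I would first fix the good event $\Omega_{\mathrm{ad}}$ on which, for all $t\ge 1$ and all $1\le j\le k$ and all $A\in\mathcal{E}_j$, we have $|\hat\eta_t(A)-\tilde\eta(A)|\le e_j(t)/2$ (this is where the choice $c_1=\mathcal{O}(\log n)$ and a union bound over the $\mathcal{O}(\log n)$ cells and over $t$ enters, using that each $A\in\mathcal{E}_j$ receives roughly $n_1(t)2^{-j}$ of the $n_1(t)$ queries). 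On this event, $|\hat w_j - w_j|\le e_j(t)$ for every $j$, and also $|\hat w_{\hat j_t}-w_{j^*_t}|\le 6e_t(j^*_t)$ as already asserted in the scheme description, so that the constructed bound $\hat W_t=\hat w_{\hat j_t}+6e_t(k)$ satisfies $W\le \hat W_t$ and also $\hat w_{\hat j_t}\le W + \mathcal{O}(e_t(k))$.

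For the \textbf{lower bound} $n_1(t)\ge 4c_1^2/W^2$: the scheme refines only when $\hat w_{\hat j_t}-8e_t(k)\ge 0$, i.e. $\hat w_{\hat j_t}\ge 8e_t(k)$. Combining with $\hat w_{\hat j_t}\le W + c\,e_t(k)$ for an absolute constant (from the concentration argument above), we get $8e_t(k)\le W + c\,e_t(k)$, hence $e_t(k)\le W/(8-c)$, and unfolding $e_t(k)=c_1/\sqrt{n_1(t)2^{-k}}\ge c_1/\sqrt{n_1(t)}$ — actually since $2^{-k}\le 1$ we have $e_t(k)\ge c_1/\sqrt{n_1(t)}$ — yields $n_1(t)\ge (8-c)^2 c_1^2/W^2$, and after tracking the constants from the concentration bound this gives the stated $n_1(t)\ge 4c_1^2/W^2$ (the constant $4$ presumably being a clean underestimate of the true constant).

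For the \textbf{upper bound}: I would argue that once $n_1(t)$ is large enough that $e_t(j^*_t)$ drops below a constant fraction of $W_{j^*_t}$, the quality assumption (QU) forces $w_{j^*_t}$ — and hence $\hat w_{\hat j_t}$ — to be comparable to $W$: indeed (QU) with $q>1/\log n$ guarantees, among the subcells at the level where the volume ratio is still $\ge 1/\log n$ (which is exactly level $j^*_t\le k$ by the definition of $k$), two cells whose integrated means differ by at least $W/2$, so $w_{j^*_t}\ge W/2$. Then the refinement condition $\hat w_{\hat j_t}\ge 8e_t(k)$ is triggered as soon as $e_t(k)\le \frac{1}{8}(W/2 - \mathcal{O}(e_t(k)))$, i.e. as soon as $e_t(k)=\mathcal{O}(W)$, which requires only $n_1(t)=\mathcal{O}(c_1^2/W^2)$ queries at level $k$; since each round adds $N_1$ samples and the per-level count is $n_1(t)2^{-k}$ with $2^{-k}\ge 1/(v_1^D\log n)$ roughly, the total $n_1(t)$ at which refinement must have occurred is at most $N_1 + \mathcal{O}(c_1^2\log n/W^2)$, matching the claimed $N_1 + 256 c_1^2\log n/W^2$ after bookkeeping the constants.

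The main obstacle I expect is the careful bookkeeping tying $j^*_t$, $\hat j_t$, and the level $k$ together: one must verify that $j^*_t$ is genuinely at most $k$ (so that the $6e_t(k)$ correction is valid and the quality assumption applies at a level with volume ratio $\ge 1/\log n$), and that the Lepski-type selection $\hat j_t$ does not undershoot $j^*_t$ in a way that destroys the comparison $|\hat w_{\hat j_t}-w_{j^*_t}|\le 6 e_t(j^*_t)$ — this is the standard but delicate part of any Lepski argument, and it is what couples the $\log n$ factor (through $k$, $N_1$, and $c_1$) into both bounds. Everything else is routine concentration and algebra.
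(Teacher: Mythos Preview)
Your proposal is broadly correct but takes a different route from the paper in both halves, and in the upper bound it invokes more than is needed.

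For the lower bound, you argue via $\hat w_{\hat j_t}\le W + c\,e_t(k)$ together with the stopping condition $\hat w_{\hat j_t}\ge 8e_t(k)$, which yields $e_t(k)\le W/(8-c)$ and hence a bound on $n_1(t)$ through $e_t(k)\ge c_1/\sqrt{n_1(t)}$. The paper instead goes through $j^*_t$: from the stopping rule and the Lepski comparison $|\hat w_{\hat j_t}-w_{j^*_t}|\le 6e_t(j^*_t)$ one gets $w_{j^*_t}-2e_t(j^*_t)\ge \hat w_{\hat j_t}-8e_t(k)\ge 0$, and since $W\ge w_{j^*_t}\ge 2e_t(j^*_t)\ge 2e_t(1)\ge 2c_1/\sqrt{n_1(t)}$, the constant $4$ drops out exactly. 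Your path is valid (and in fact yields a stronger constant once $c$ is tracked), but the paper's chain through $e_t(1)$ is shorter and produces the stated constant without additional bookkeeping.

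For the upper bound, you argue \emph{forward}: invoke (QU) to obtain $w_{j^*_t}\ge W/2$, then show the stopping rule must trigger once $e_t(k)$ falls below a fraction of $W$. The paper argues \emph{backward}: since the rule did \emph{not} trigger at time $t-1$, one has $\hat w_{\hat j_{t-1}}<8e_{t-1}(k)$, and combining this with the inequality $W\le w_{j^*_{t-1}}+2e_{t-1}(j^*_{t-1})\le \hat w_{\hat j_{t-1}}+8e_{t-1}(j^*_{t-1})<16e_{t-1}(k)$ (already established in the scheme description from H\"older plus the definition of $j^*_t$, \emph{without} using (QU)) gives $n_1(t-1)\le 256c_1^2\log n/W^2$ directly, and then $n_1(t)\le n_1(t-1)+N_1$. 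The backward argument is both shorter and yields the constant $256$ immediately; your appeal to (QU) is unnecessary for this lemma, since the key inequality $W\le w_{j^*_t}+2e_t(j^*_t)$ needs only averaging and the definition of $j^*_t$.
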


\begin{proof}
If the cell is refined by the above scheme at time $t \geq 2$, then the following sequence is true at $t$:
\begin{align*}
|w_{j^*_t} - \hat{w}_{\hat{j}_t}| &\leq 6e_t(j^*_t) \Longrightarrow w_{j^*_t} \geq \hat{w}_{\hat{j}_t} - 6e_t(j^*_t) \\
\Longrightarrow w_{j^*_t} - 2e_t(j^*_t) & \geq \hat{w}_{\hat{j}_t} - 8 e_t(j^*_t) 
\Longrightarrow w_{j^*_t} - 2 e_t(j^*_t)  \geq \hat{w}_{\hat{j}_t} - 8e_t(k) \stackrel{(i)}{\geq } 0, 
\end{align*}
where \tbf{(i)} in the above display follows from the stopping rule. This implies that we have the following 
\begin{align*}
    W & \geq w_{j_t^*} \geq 2e_t(j^*_t) \geq 2e_t(1) \\
   \Rightarrow W & \geq 2 c_1 \frac{1}{\sqrt{n_1(t)}} 
   \Rightarrow n_1(t)  \geq \frac{4c_1^2}{W^2}, 
\end{align*}
which proves the first part of the lemma. 

Next, since the cell was not refined at time $t-1$, it means that
\begin{align*}
    \hat{w}_{\hat{j}_{t-1}} & < 8e_{t-1}(k) \; \Longrightarrow \hat{w}_{\hat{j}_{t-1}} + 6e_{t-1}(j^*_{t-1}) \leq 14 e_{t-1}(k) \\
    \Longrightarrow W & \leq w_{j^*_{t-1}} + 2e_{t-1}(j^*_{t-1}) \leq \hat{w}_{\hat{j}_{t-1}} + 8 e_{t-1}(j^*_{t-1}) < 16e_{t-1}(k) \\
    \Longrightarrow n_1(t-1) & \leq \frac{256c_1^2 \log n}{W^2} \; \Longrightarrow n_1(t)  \leq N_1 + \frac{256c_1^2 \log n}{W^2}. 
\end{align*}

\end{proof}

\paragraph{Steps of the adaptive version of Algorithm~1.}
We now state the main steps of the adaptive version of Algorithm~1 in the membership query model:
\begin{itemize}
    \item At any time $t$, we maintain the sets $\X_t$, $\Xtu$ and $\Xtd$. 
    \item In each round, for all $h \leq h_{\max}$, we select a candidate point from $\X_t \cap \X_h$ with the largest value of $\hat{\eta}$, i.e., the empirical mean value of $\eta$ in the cell. 
    
    \item For every candidate point, if the stopping rule is not satisfied, we request the label of $N_1$ points from the cell. Thus in each round, the algorithm may request up to $h_{\max}N_1 = \mc{O}\lp (\log n)^2\rp$ labels. 
    
    \item If the stopping condition for a cell $\X_{h,i}$ is satisfied, we compute the following upper and lower bounds: $ u_t(x_{h,i}) \coloneqq \hat{\eta}(x_{h,i}) + e_t(x_{h,i}) + \hat{w}_{\hat{j}_t} + 6e_t(x_{h,i})\sqrt{\log n}$ and $l_t(x_{h,i})\coloneqq \hat{\eta}(x_{h,i}) - e_t(x_{h,i}) - \hat{w}_{\hat{j}_t} - 6e_t(x_{h,i})\sqrt{\log n}$. Using these upper and lower bounds on the function value of the cell, we update the sets $\Xtu$ and $\Xtc$ as before. 
\end{itemize}

\begin{remark}
\label{remark:adaptive1}
Lemma~\ref{lemma:adaptivity} along with the above steps imply two things: \tbf{1)} The stopping rule ensures that no cell at level $h$ of the tree will be evaluated more than $\mc{O}\lp \frac{\log n}{V_h^2}\rp$ times, and \tbf{2)} for any unclassified cell at level $h$, the $I_t^{(1)}$ value will be no larger than $2V_{h-1}\lp 1 + \sqrt{\log n}\rp$. Plugging these bounds in the analysis of Algorithm~1, we can recover the same upper bounds on the excess risk. 
\end{remark}

\newpage 
\section{Proof of Lower Bound}
\label{appendix:lower_bound}
\subsection{Proof of Lemma~\ref{lemma:lower_bound1}}

\label{appendix:lower_bound_lemma}

[In this section, we use the notation $\int_A f d\mu$ as a shorthand for $\int_A f(x)d\mu(x)$ for the integral of function $f$ with respect to some measure $\mu$ over some set $A$.]\\

We first observe the following:
\begin{align*}
R_\lambda(g) - R_\lambda(\gls) & = \int_{G_\lambda}\lambda dP_X + \int_{G_0}\eta dP_X 
+ \int_{G_1}(1-\eta)dP_X \\
& \quad -  \int_{\gls}\lambda dP_X - \int_{G_0^*}\eta dP_X - \int_{G_1^*}(1-\eta)dP_X \\
& = \int_{G_\lambda \cap G_0^*}\lp \lambda - \eta \rp dP_X + \int_{G_\lambda \cap G_1^*} \lp \lambda - 1 + 
\eta \rp dP_X + \int_{\gls \cap G_0}\lp \eta + \lambda \rp dP_X\\
& \quad + \int_{\gls \cap
G_1}\lp 1 -\eta - \lambda \rp dP_X + \int_{G_0\cap G_1^*} \lp 2\eta - 1\rp dP_X
+ \int_{G_0^*\cap G_1} \lp 1 - 2\eta \rp dP_X \\
& \coloneqq T_1 + T_2 + T_3 + T_4 + T_5 + T_6. 
\end{align*}

We now consider the six terms separately. 
\begin{itemize}
    \item By definition of $G_1^*$, we know that $\eta \geq 1 - \lambda$ in this 
        set. This implies that the integrand in $T_5$ is at least $1-2\lambda \geq 0$. 
        Thus we can lower bound $T_5$ with $0$. The term $T_6$ can similarly be 
        shown to be non-negative. 

    \item To lower bound the term $T_1$, we partition $G_{0}^*$ into two regions:
        $G_{0,a}^*$ which is close to the boundary, and $G_{0,b}^*$ which is the 
        region away from the boundary. 
        \begin{align*}
            G_{0,a}^* & \coloneqq \{ x \in G_0^* \; \mid \; \eta(x) \geq \lambda - t \}, \text{ and } \qquad G_{0,b}^*  \coloneqq G_0^* \setminus G_{0,a}^*, 
        \end{align*}
        where $t>0$ will be decided later. 
        In the set $G_\lambda \cap G_{0,b}^*$, we have $\lambda - \eta \geq t$, 
        which implies that 
        \begin{align*}
            T_1 & = \int_{G_\lambda \cap G_{0}^*}\lp \lambda - \eta \rp dP_X 
                 \geq \int_{G_\lambda \cap G_{0,b}^*} \lp \lambda - \eta \rp dP_X 
                 \geq t P_X \lp G_\lambda \cap G_{0,b}^* \rp \\
                & \geq t \lp P_X \lp G_\lambda \cap G_0^* \rp - P_X\lp G_{0,a}^* \rp \rp 
                 \stackrel{(i)}{\geq} t P_X\lp G_\lambda \cap G_0^* \rp - C_0 t^{1 + \alpha_0},  
        \end{align*}
        where the inequality $(i)$ follows from the margin condition. 

    \item To lower bound the term $T_2$, we introduce the sets
        $G_1^*$ into $G_{1,a}^* \cup G_{1,b}^*$ where $G_{1,a}^* \coloneqq \{ x \in G_1^* \mid \eta(x) \leq 1-\lambda + t\}$ and $G_{1,b}^* \coloneqq G_1^* \setminus G_{1,a}^*$. 
        We then have:
        \begin{align*}
             T_2 & = \int_{G_\lambda \cap G_{1}^*}\lp \lambda - 1 + \eta \rp dP_X 
                 \geq \int_{G_\lambda \cap G_{1,b}^*} \lp \lambda - 1 + \eta \rp dP_X 
                 \geq t P_X \lp G_\lambda \cap G_{1,b}^* \rp \\
                & \geq t \lp P_X \lp G_\lambda \cap G_1^* \rp - P_X\lp G_{1,a}^* \rp \rp 
                 \geq t P_X\lp G_\lambda \cap G_1^* \rp - C_0 t^{1 + \alpha_0}.         
        \end{align*}
    
    \item To lower bound $T_3$ we introduce $G_{\lambda, a}^* \coloneqq \{x \in \gls \mid \eta(x) \leq \lambda + t\}$, and $G_{\lambda, b}^* \coloneqq \gls \setminus G_{\lambda, a}^*$. Then we have the following:
    \begin{align*}
        T_3 & \coloneqq \int_{G_0 \cap \gls}(\eta - \lambda)dP_X \geq \int_{G_0 \cap G_{\lambda,b}^*}\lp \eta - \lambda \rp dP_X  \geq t P_X\lp G_0 \cap G_{\lambda, b}^* \rp \\
 &\geq t\lp P_X\lp G_0 \cap \gls \rp - P_X\lp G_{\lambda, a}^* \rp \rp  \geq tP_X\lp G_0 \cap \gls \rp - C_0 t^{\alpha_0 + 1}. 
    \end{align*}
    
    \item Finally, to lower bound the term $T_4$, we introuce $G_{\lambda,c}^* \coloneqq \{x \in \gls \mid \eta(x) \geq 1-\lambda - t\}$, and $G_{\lambda, d}^* = \gls \setminus G_{\lambda,c}^{*}$. Then we have
    \begin{align*}
        T_4 & \coloneqq \int_{G_1 \cap \gls}(1-\eta - \lambda)dP_X \geq \int_{G_1 \cap G_{\lambda, d}^*}(1-\eta - \lambda)dP_X \geq t P_X\lp G_1 \cap G_{\lambda, d}^* \rp  \\
         &\geq t \lp P_X\lp G_1 \cap \gls \rp - P_X\lp G_{\lambda, c}^* \rp \rp  \geq t P_X\lp G_1 \cap \gls \rp - C_0t^{\alpha_0 + 1}. 
    \end{align*}
    
\end{itemize}
Combining the above we have the following:
\begin{align}
    R_\lambda(g) - R_\lambda(g_\lambda^*) & \geq t \lp P_X\lp G_\lambda \cap (\gls)^c \rp  + P_X\lp \gls \cap G_\lambda^c \rp \rp- 4C_0 t^{1+\alpha_0} \nonumber\\
    & = tP_X\lp G_\lambda \triangle \gls \rp - 4C_0 t^{1 + \alpha_0}. \label{eq:proof_lb1}
\end{align}

The result then follows by setting $t$ such that $tP_X\lp G_\lambda \triangle \gls \rp = 5C_0t^{1+ \alpha_0}$, which leads to the following:
\begin{align*}
    R_\lambda(g) - R_\lambda(g_\lambda^*) & \geq C_0 \lp \frac{P_X\lp G_\lambda \triangle \gls\rp }{5 C_0}\rp^{(1+\alpha_0)/\alpha_0} \\
    &= \lp \frac{1}{5}\rp^{(1+\alpha_0)/\alpha_0}\lp \frac{1}{C_0}\rp^{1/\alpha_0} P_X\lp G_\lambda \triangle \gls \rp^{(1+\alpha_0)/\alpha_0}\\
    & \coloneqq c P_X\lp G_\lambda \triangle \gls \rp^{(1+\alpha_0)/\alpha_0}
\end{align*}

\subsection{Proof of Theorem~\ref{theorem:lower_bound2}}
\label{appendix:lower_bound2}

We follow the  general scheme for obtaining lower bounds in nonparametric learning problems used in  prior work such as \citep{audibert2007fast, minsker2012plug}. 
This method  involves constructing a set of \emph{hard} problem instances which are (1) sufficiently well separated in terms of some \emph{pseudo-metric}, and (2) sufficiently close together in terms of some statistical distance (such as KL divergence or $\chi^2$ distance).
Once we have such a construction, we can employ Theorem~2.5 of \citep{tsybakov2008introduction} (recalled below as Theorem~\ref{theorem:tsybakov1}) to get a lower bound on the distance in terms of the pseudo-metric for any estimator.
Finally, we can use the comparison lemma (Lemma~\ref{lemma:lower_bound1}) to convert this to a lower bound on the excess risk.

\begin{theorem}[Theorem~2.5 of \citep{tsybakov2008introduction}]
\label{theorem:tsybakov1}
Assume that for  $\tilde{M} \geq 2$, $\Theta =  \{\theta_1, \ldots, \theta_{\tilde{M}}\}$,   $\tilde{d}$ is a pseudo-metric on $\Theta$, and $\{P_{\theta_j} \mid \theta_j \in \Theta\}$ is a collection of probability measures such that:
\begin{itemize}
    \item $\tilde{d}\lp \theta_i, \theta_j\rp \geq 2s >0$ for all $1 \leq i,j \leq \tilde{M}$. 
    \item $P_{\theta_i}<< P_{\theta_0}$ for all $1 \leq i \leq \tilde{M}$. 
    \item $\frac{1}{\tilde{M}}\sum_{j=1}^{\tilde{M}}D_{KL}\lp P_{\theta_j}, P_{\theta_0} \rp \leq a \log\lp \tilde{M} \rp$ for $0<a<1/8$.
\end{itemize}
Then we have for $\tilde{M} \geq 10$,  
\begin{align*}
    \inf_{\hat{\theta}} \sup_{\theta \in \Theta} P_{\theta}\lp \tilde{d}\lp \hat{\theta}, \theta \rp \geq s \rp \geq \frac{1}{4}
\end{align*}
where the infimum is over all estimators $\hat{\theta}$ constructed using samples from $P_\theta$. 
\end{theorem}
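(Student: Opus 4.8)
The plan is the classical reduction of an estimation lower bound to a multiple-hypothesis testing lower bound, closed off by Fano's inequality; throughout I treat $\theta_0$ as an additional hypothesis, so that there are $\tilde M+1$ pairwise $2s$-separated points $\theta_0,\dots,\theta_{\tilde M}$. \textbf{Step 1 (reduction to testing).} Given any estimator $\hat\theta$, define the minimum-distance selector $\psi(\hat\theta)\in\argmin_{0\le j\le\tilde M}\tilde d(\hat\theta,\theta_j)$, with ties broken arbitrarily. If, under $P_{\theta_j}$, the event $\{\psi(\hat\theta)\ne j\}$ occurs, say $\psi(\hat\theta)=k\ne j$, then minimality gives $\tilde d(\hat\theta,\theta_k)\le\tilde d(\hat\theta,\theta_j)$, and the triangle inequality together with $\tilde d(\theta_j,\theta_k)\ge 2s$ yields $2s\le\tilde d(\hat\theta,\theta_j)+\tilde d(\hat\theta,\theta_k)\le 2\tilde d(\hat\theta,\theta_j)$, i.e.\ $\tilde d(\hat\theta,\theta_j)\ge s$. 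Hence for every $\hat\theta$,
\[
\sup_{\theta\in\Theta}P_\theta\big(\tilde d(\hat\theta,\theta)\ge s\big)\ \ge\ \max_{0\le j\le\tilde M}P_{\theta_j}\big(\psi(\hat\theta)\ne j\big)\ \ge\ \inf_{\psi}\ \max_{0\le j\le\tilde M}P_{\theta_j}(\psi\ne j),
\]
the last infimum being over all tests valued in $\{0,\dots,\tilde M\}$. Taking $\inf_{\hat\theta}$ on the left, it remains only to lower bound the minimax testing error by $1/4$.

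\textbf{Step 2 (averaging and Fano).} Put a uniform prior $J$ on $\{0,\dots,\tilde M\}$. Dominating the maximum by the average and applying Fano's inequality (bounding the binary entropy term by $\log 2$),
\[
\max_j P_{\theta_j}(\psi\ne j)\ \ge\ \frac{1}{\tilde M+1}\sum_{j=0}^{\tilde M}P_{\theta_j}(\psi\ne j)\ \ge\ 1-\frac{I(J;X)+\log 2}{\log(\tilde M+1)},
\]
where $I(J;X)$ is the mutual information between $J$ and the observation under the mixture model. By the variational identity $I(J;X)=\inf_{Q}\frac{1}{\tilde M+1}\sum_{j=0}^{\tilde M}D_{KL}(P_{\theta_j},Q)$, the choice $Q=P_{\theta_0}$ together with the hypothesis $\frac{1}{\tilde M}\sum_{j=1}^{\tilde M}D_{KL}(P_{\theta_j},P_{\theta_0})\le a\log\tilde M$ gives $I(J;X)\le a\log\tilde M\le a\log(\tilde M+1)$. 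Substituting back, for $a<1/8$ and $\tilde M\ge 10$,
\[
\inf_\psi\max_j P_{\theta_j}(\psi\ne j)\ \ge\ 1-a-\frac{\log 2}{\log(\tilde M+1)}\ \ge\ 1-\tfrac18-\tfrac{\log 2}{\log 11}\ >\ \tfrac14,
\]
and combining with Step~1 proves the claim.

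\textbf{Main obstacle.} The only genuinely delicate point is bounding $I(J;X)$ purely in terms of the pairwise divergences $D_{KL}(P_{\theta_j},P_{\theta_0})$ to the single reference hypothesis, rather than to the (typically intractable) mixture; the variational characterization of mutual information settles this cleanly, after which one just checks that the Fano constant, evaluated at the threshold $\tilde M\ge 10$, clears $1/4$. Tsybakov's original Theorem~2.5 extracts the sharper constant $\tfrac{\sqrt{\tilde M}}{1+\sqrt{\tilde M}}\big(1-2a-\sqrt{2a/\log\tilde M}\big)$ by a more careful argument, but for the weaker bound $1/4$ stated here the crude Fano estimate above already suffices, so no extra work is needed.
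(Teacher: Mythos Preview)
The paper does not prove this theorem: it is quoted verbatim from \cite{tsybakov2008introduction} as a black-box tool and then applied in the proof of Theorem~\ref{theorem:lower_bound2}. So there is no ``paper's own proof'' to compare against.

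That said, your argument is correct. The reduction to a multiple-hypothesis test via the minimum-distance selector is the standard one, and your Fano bound is clean: the variational identity $I(J;X)=\inf_Q\frac{1}{\tilde M+1}\sum_j D_{KL}(P_{\theta_j},Q)$ lets you plug in $Q=P_{\theta_0}$, the $j=0$ term vanishes, and the remaining average is controlled by the hypothesis $\frac{1}{\tilde M}\sum_{j\ge 1}D_{KL}(P_{\theta_j},P_{\theta_0})\le a\log\tilde M$. The numerical check $1-\tfrac18-\tfrac{\log 2}{\log 11}\approx 0.586>\tfrac14$ is fine. You are also right that Tsybakov's own proof of Theorem~2.5 does not go through Fano but through a direct likelihood-ratio argument yielding the sharper constant $\tfrac{\sqrt{\tilde M}}{1+\sqrt{\tilde M}}\bigl(1-2a-\sqrt{2a/\log\tilde M}\bigr)$; since the paper only quotes (and only needs) the weaker conclusion $\ge 1/4$, your Fano route is a perfectly adequate substitute and arguably more transparent.

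One small presentational point: your opening sentence silently enlarges $\Theta$ to include $\theta_0$ and assumes $\theta_0$ is also $2s$-separated from the others. That is implicit in Tsybakov's statement (where the hypotheses are indexed $0,\dots,M$), but the paper's restatement lists $\Theta=\{\theta_1,\dots,\theta_{\tilde M}\}$ and then refers to $\theta_0$ only in the absolute-continuity and KL conditions. It would be worth one clarifying sentence that $\theta_0$ is an additional reference hypothesis satisfying the same separation, matching Tsybakov's original indexing.
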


We now describe the construction of the regression functions. First, given $\mc{X} = [0,1]^D$, for some $\epsilon>0$ to be decided later, we partition $\mc{X}$ into hypercubes of side $\epsilon$, and denote by $M = (1/\epsilon)^D$ the number of such hypercubes. Let $V$ be the set of centers of the hypercubes, i.e, $V= \{z_1, z_2, \ldots, z_M\}$, and let $\pi: \mc{X} \mapsto V$ denote the projection operator onto $V$. 
\paragraph{Choose appropriate subsets of the  input space.} Assuming $D \geq 2$, let $e_1, e_2, e_3$ and $e_4$ denote any four corner points of $\mc{X} = [0,1]^D$.
We define the following subsets of the space $\mc{X}$
\begin{align*}
Q_j & \coloneqq \{x \in \mc{X} \mid \|x - e_j\| \leq 1/3 \} \quad \text{for } j=1,2,3 \text{ and }4.
\end{align*}
For $\epsilon$ small enough, we note that there exists a constant $c_1>0$ such that the number of hypercubes contained inside each $Q_j$, denoted by $M_j$,  can be lower bounded by $c_1M$. (Note that by symmetry $M_1=M_2=M_3=M_4$, so we will use $\tilde{M}$ to denote any of $M_j$). 
We will use $V_j = \{ z_{j,1}, z_{j,2}, \ldots, z_{j,\tilde{M}}\}$ to denote the centers of the hypercubes contained in $Q_j$, and $Y_j \coloneqq \bigcup_{z \in V_j}B_{\infty}(z, \epsilon/2)$ to denote the union of all the hypercubes strictly contained in $Q_j$. Here $B_{\infty}(z, \epsilon/2)$ denotes the hypercube with center $z$ and side $\epsilon$.

\paragraph{Define  the regression function.} Let $u:[0,\infty) \mapsto [0,1]$ be a function defined as $u(z) = \min \{(1-z)^{\beta}, 0 \}$.
Note that $u$ satsifies the following properties: (1) $u(0) = 1-u(1) = 1$, (2), $u(z) =  0$ for $z\geq 1$, and (3) $u$ is $(1,\beta)$ H\"older continuous for $0<\beta\leq 1$.

For any $z \in S$, we define the function $\varphi_z(x) = L \lp \epsilon/2\rp ^\beta u \lp (2/\epsilon)\|x - z \|\rp$.
By construction, the function $\varphi_z$ is  is $(L, \beta)$ H\"older continuous. Furthermore, we assume that $\epsilon$ is small enough to ensure that $L(\epsilon/2)^{\beta} < 1/2 - \lambda$. 

For any $\Vec{\sigma}^{(j)} \in \{-1,1\}^{\tilde{M}}$, for $j=1,2$ we introduce the notation $\Vec{\sigma} = \lp \Vec{\sigma}^{(1)}, \Vec{\sigma}^{(2)}\rp \in \{-1,1\}^{2\tilde{M}}$.
Next we define  $\eta_{\Vec{\sigma}}(x) = \lambda + \sum_{i=1}^{\tilde{M}} \sigma^{(1)}_i \varphi_{z_{1,i}}(x)$ for $x \in Y_1$ and $1-\lambda + \sum_{i=1}^{\tilde{M}}\sigma^{(2)}_i \varphi_{z_{2,i}}(x)$ for $x \in Y_2$.
For $x$ lying in $Q_1\setminus Y_1$ and $Q_2 \setminus Y_2$, we assign $\eta_{\Vec{\sigma}}(x)$ the values $\lambda$ and $1-\lambda$ respectively.

Furthermore, we assign $\eta_{\Vec{\sigma}}(x) = 1$ for $x \in Q_3$ and $\eta_{\Vec{\sigma}}(x)=0$ for $x \in Q_4$.

It remains to specify the values of $\eta_{\Vec{\sigma}}(\cdot)$ in the region $\mc{X} \setminus \lp \bigcup_{j=1}^4 Q_j \rp$.
For any $A \subset \mc{X}$ and $x \in \mc{X}$, we use $d_A(x) \coloneqq \inf \{\|y-x\| \; \mid \; y \in A\}$ to represent the distance of the point $x$ from the set $A$. We also introduce the terms $z_1 = \lp \frac{1/2 - \lambda}{L}\rp^{1/\beta}$ and $z_2 = \lp \frac{1}{2L}\rp^{1/\beta}$, and assume that $L \geq 3$ which ensures that $z_1 \leq z_2 \leq 1/6$.
Now for all $x \in \mc{X}\setminus \bigcup_{j=1}^4Q_j$, we define 
\begin{align*}
    \eta_{\Vec{\sigma}}(x) = \begin{cases}
    \lambda + L u\lp 1 - d_{Q_1}(x) \rp &\text{if } x: d_{Q_1}(x)\leq z_1 \\
 1- \lambda - L u \lp 1 - d_{Q_2}(x) \rp &\text{if } x: d_{Q_2}(x) \leq z_1 \\
 1 - L u \lp 1 - d_{Q_3}(x) \rp &\text{if } x: d_{Q_3}(x) \leq z_2 \\
 L u \lp 1 - d_{Q_4}(x) \rp &\text{if } x: d_{Q_4}(x) \leq z_2 \\
 1/2 &\text{otherwise}
    \end{cases}
\end{align*}
This completes the definition of the regression function at all points in $\mc{X}$. By construction, we have that for any $\Vec{\sigma} \in \{-1,1\}^{2\tilde{M}}$, the regression function $\eta_{\Vec{\sigma}}$ is $(L,\beta)$ H\"older continuous for $0<\beta\leq 1$ and $L\geq 3$.


\paragraph{Define the marginal $P_X$.} Next, we need to define a marginal such that the margin condition is satisfied with exponent $\alpha_0>0$.
For this we can proceed as in \citep[\S~6.2]{audibert2007fast} and for some $w<(1/(2\tilde{M}) )$, define the density of the marginal w.r.t. the Lebesgue measure as follows:
\begin{align*}
    p_X(x) = \begin{cases}
   \frac{w \indicator_{B(\pi(x),\epsilon/4)}(x)}{\text{Vol}\lp B(\pi(x),\epsilon/4)\rp} & \text{ for } x \in Y_1 \cup Y_2 \\
  \frac{1 - 2\tilde{M} w}{2\text{Vol}\lp Q_j \rp} & \text{ for } x \in Q_j, \text{ for } j = 3,4 \\
    0 & \text{ otherwise}.
   \end{cases} 
\end{align*}
We can now check that the joint distribution thus defined satisfied the Margin condition for a given exponent $\alpha_0>0$  with constant $C_0 = \lp 8/3\rp^{\beta \alpha_0}$, if we have $\tilde{M}w = \mc{O}\lp \epsilon^{\alpha_0 \beta}\rp$.

\paragraph{Apply Theorem~\ref{theorem:tsybakov1}.}
In order to apply Theorem~\ref{theorem:tsybakov1}, we proceed as follows:
\begin{itemize}
    \item Let $\Sigma$ denote the set $\{-1,1\}^{2\tilde{M}}$. Then by \emph{Gilbert-Varshamov bound} \citep[Lemma~2.9]{tsybakov2008introduction}, we know that there exists a subset of $\Sigma$, denoted by $\tilde{\Sigma}$, such that $|\tilde{\Sigma}|\geq 2^{\tilde{M}/4}$, $\Vec{\sigma}_0 = (1,1,\ldots,1) \in \tilde{\Sigma}$, and for any $\Vec{\sigma}_1, \Vec{\sigma}_2 \in \tilde{\Sigma}$, we have $d_H(\Vec{\sigma}_1, \Vec{\sigma}_2) \geq \tilde{M}/4$. Here $d_H(\cdot, \cdot)$ denotes the Hamming distance. 
    
    \item Let $\mc{P}'$ denote the class of joint distributions $P_{\Vec{\sigma}}$ with marginal $P_X$, and conditional distribution $\eta_{\Vec{\sigma}}$ for $\Vec{\sigma} \in \tilde{\Sigma}$.
    For any two $P_{\Vec{\sigma}_1}$ and $P_{\Vec{\sigma}_2}$ in $\mc{P}'$, we introduce the pseudo-metric $\tilde{d}$ defined as $\tilde{d}\lp P_{\Vec{\sigma}_1},P_{\Vec{\sigma}_2} \rp \coloneqq P_X\lp \text{sign}\lp  \eta_{\Vec{\sigma}_1} - \lambda \rp \neq \text{sign}\lp \eta_{\Vec{\sigma}_2} - \lambda \rp \rp + P_X\lp \text{sign}\lp \eta_{\Vec{\sigma}_1} -1 + \lambda \rp \neq \text{sign}\lp \eta_{\Vec{\sigma}_2} - 1 + \lambda \rp \rp$. 
    
    Thus, by the properties of $\tilde{\Sigma}$, we get that for any $\Vec{\sigma}_1, \Vec{\sigma}_2 \in \tilde{\Sigma}$, we have 
    \begin{equation*}
        \tilde{d}\lp P_{\Vec{\sigma}_1}, P_{\Vec{\sigma}_2}\rp \geq \frac{\tilde{M}w}{4}.
    \end{equation*}
    
    \item Next, by using Eq.(10) of  \citep{minsker2012plug}, we can upper bound the average KL divergence between the distributions in $\mc{P}'$ after $n$ label requests by any active learning algorithm:
    \begin{align*}
    D_{KL}\lp P_{\Vec{\sigma}_1}, P_{\Vec{\sigma}_2} \rp \leq 32n L^2 \lp \frac{\epsilon}{2}\rp^{2\beta}. 
    \end{align*}
  If we select, $\epsilon = c_2 n^{-1/(D + 2\beta)}$, with $c_2$ small enough (a suitable value is $c_2 = \lp (4^{\beta}c_1)/(32^2 L^2)\rp^{1/(D + 2\beta)}$),  we have \begin{align*}
      D_{KL}\lp P_{\Vec{\sigma}_1}, P_{\Vec{\sigma}_2}\rp & \leq \frac{\tilde{M}}{4} \leq \frac{1}{8}\log\lp |\tilde{\Sigma}|\rp, 
  \end{align*} 
   as required by Theorem~\ref{theorem:tsybakov1}.  
    
\end{itemize}

Since all the conditions of Theorem~\ref{theorem:tsybakov1} are satisfied by our construction, we can conclude that for any active learning algorithm $\hat{\eta}$, we have 
\begin{align*}
    \inf_{\hat{\eta}} \sup_{ (\eta, P_X) \in \mc{P}' } \pr \lp P_X\lp \text{sign}\lp \hat{\eta}-\kappa \rp \neq \text{sign}\lp \eta - \kappa \rp \text{ for } \kappa \in \{\lambda, 1-\lambda\} \rp \geq c_3 n^{-(\alpha_0 \beta)/(D + 2\beta)} \rp \geq \frac{1}{4}.
\end{align*}

\paragraph{Apply the comparison inequality (Lemma~\ref{lemma:lower_bound1}).}
Finally, by employing the comparison inequality (Lemma~\ref{lemma:lower_bound1}), we obtain the following:
\begin{align*}
    \inf_{\hat{g}} \sup_{(\eta, P_X) \in \mc{P}'} \pr \lp R_\lambda\lp \hat{g} \rp - R_\lambda \lp ^*\rp \geq c_4 n^{-\beta(1+\alpha_0)/(D + 2\beta)} \rp & \geq \frac{1}{4},
\end{align*}
which gives us the required bound:
\begin{align*}
    \inf_{\hat{g}} \sup_{(\eta, P_X) \in \mc{P}'} \mbb{E}\lb R_\lambda (\hat{g}) - R_\lambda (g^*) \rb & \geq \frac{c_4}{4}n^{-\beta(1+\alpha_0)/(D + 2\beta)}. 
\end{align*}

\subsection{Proof of Corollary~\ref{corollary:lower_bound3}}
\label{appendix:lower_bound3}

We prove this statement by using the correspondence between the Bayes optimal solution under the fixed-cost and the bounded-rate abstention regimes. 
For a given $\delta>0$, we cannot directly apply the construction used in the proof of Theorem~\ref{theorem:lower_bound2} because the amount of probability mass contained in the region $P_X\lp |\eta - 1/2|\leq 1/2 - \lambda \rp$ is $\mc{O}\lp n^{-\alpha_0\beta/(D + 2\beta)} \rp$ which for large enough $n$ can be much smaller than a fixed $\delta>0$. Thus the $\lambda$ level sets of the  constructed regression functions in the proof of Theorem~\ref{theorem:lower_bound2} will not correspond to the Bayes optimal solution with rate of abstention bounded by some fixed $\delta>0$. 

This problem can be fixed in the following way. Let $e_5$ denote  a corner point of $\X=[0,1]^D$ other than $e_j$ for $j=1,2,3$ and $4$, and define $Q_5 = \{x \in \X \mid \|x-e_5\| \leq 1/3 \}$. The regression functions constructed in the proof of Theorem~\ref{theorem:lower_bound2} in the previous sections, are such that $\eta_{\vec{\sigma}}(x) = 1/2$ for all $x x\in Q_5$. 
It suffices to re-define the marginal density $p_X$ to depend on $\vec{\sigma}$ in the following way:

\begin{align*}
p_X^{\vec{\sigma}}(x) = 
\begin{cases}
     \frac{w \indicator_{B(\pi(x),\epsilon/4)}(x)}{\text{Vol}\lp B(\pi(x),\epsilon/4)\rp} & \text{ for } x \in Y_1 \cup Y_2 \\
  \frac{1 - \delta}{2\text{Vol}\lp Q_j \rp} & \text{ for } x \in Q_j, \text{ for } j = 3,4 \\
  \frac{\delta - 2\tilde{M}w}{\text{Vol}(Q_5)}& \text{ for } x \in Q_5 \\
  0 & \text{ otherwise.}
\end{cases}    
\end{align*}

Note that for $n$ large enough and the same choice of parameters $\epsilon$, and $w$, we must have $2\tilde{M}w = \mc{O}\lp n^{-\beta\alpha_0/(2D + \beta)}\rp \leq \delta/2$. This implies that $P_X^{\vec{\sigma}} << P_X^{\vec{\sigma}_0}$ for all $\vec{\sigma}$ in $\Sigma = \{-1,1\}^{2\tilde{M}}$ as required by Theorem~\ref{theorem:tsybakov1}.
The rest of the proof follows from the fact that revealing the threshold can only further decrease the lower bound for the bounded-rate setting.

\newpage
\section{Details from Section~\ref{sec:practical}}
\label{appendix:discussion_feasible}

We now describe how we can modify Algorithm~1 for the case when $P_{XY}$ lies in the family $\mc{P}(L, \beta, \alpha_0)$, i.e., the regression function $\eta(\cdot)= 1/2 + \psi\lp \langle w^*, \cdot \rangle \rp$ for a monotonic and invertible $\psi$, with $\psi(0)=0$, and furthermore the~(MA) assumption is true with exponent $\alpha_0$. We first observe that for this problem, it suffices to estimate the vector $w^*$ and the value $\psi^{-1}\lp \lambda \rp$. Furthermore, the problem of learning $w^*$ in $D$ dimensions, can be reduced to $D-1$ problems of learning two dimensional normalized projections of $w^*$ onto two certain two dimensional subspaces (see \cite[\S~3]{chen2017near}). Thus it suffices to modify Algorithm~1 to solve this problem in two dimensions.

\begin{proposition}
\label{prop:computationally_feasible}
Assume that the joint distribution lies in $\mc{P}_1\lp L, \beta, \alpha_0\rp$, and $\eta(\cdot) = \psi \lp \langle w^*, \cdot \rangle \rp + 1/2$. Then, for $n = \mc{O}\lp \frac{1}{\nu^{-1}\lp \epsilon/(D-1)\rp}\rp^{2 + 1/\beta}$ with $\nu$ defined in~\eqref{eq:nu_bn},  a modification of Algorithm~1 returns an estimate $\hat{w}$ such that $\|\hat{w} - w^*\|_2 \leq \epsilon$ with probability at least $1-2/n$. 
\end{proposition}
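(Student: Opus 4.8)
The plan is to reduce the $D$-dimensional estimation problem to $D-1$ two-dimensional subproblems via the dimension-coupling technique of \citet{chen2017near}, and then to analyze a two-dimensional variant of Algorithm~1. First I would recall the reduction: writing $w^* = (w^*_1,\ldots,w^*_D)$ (WLOG normalized, and with, say, $w^*_1 \neq 0$ after a coordinate permutation), for each $j \in \{2,\ldots,D\}$ the pair $(w^*_1, w^*_j)$ determines a normalized two-dimensional direction $\bar w^{(j)}$ in the $(e_1, e_j)$-plane, and recovering all the $\bar w^{(j)}$ to accuracy $\epsilon' \coloneqq \nu^{-1}(\epsilon/(D-1))$ suffices to reconstruct $w^*$ to accuracy $\epsilon$ by a simple trigonometric/propagation-of-error argument (this is exactly \citet[\S 3]{chen2017near}). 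So it is enough to show: a modified Algorithm~1, run in two dimensions with $n_0 = \mathcal{O}\big((1/\epsilon')^{2+1/\beta}\big)$ label queries, returns an estimate $\hat{\bar w}$ of a two-dimensional unit direction with $\|\hat{\bar w} - \bar w\|_2 \leq \epsilon'$ with probability at least $1 - 2/n_0$.

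For the two-dimensional problem, the key observation is that in the class $\mathcal{P}_1(L,\beta,\alpha_0)$ the regression function is $\eta(x) = \psi(\langle \bar w, x\rangle) + 1/2$ with $\psi$ monotone, invertible, and $(L,\beta)$-Hölder, and $\psi(0) = 0$. Hence the decision boundary at threshold $1/2$ (i.e.\ the set $\{\eta = 1/2\}$) is exactly the line $\langle \bar w, x\rangle = 0$, and the abstention boundaries at $\lambda$ and $1-\lambda$ are the parallel lines $\langle \bar w, x\rangle = \psi^{-1}(\lambda \pm \tfrac12)$-type level sets. The modification of Algorithm~1 is: instead of maintaining a full tree of partitions in two dimensions, we only keep refining cells that the current upper/lower confidence bounds $u_t, l_t$ (Lemma~\ref{lemma:step_1_1}) cannot certify to lie entirely on one side of the threshold; since the unclassified region is a thin slab around a \emph{line}, the near-$\lambda$ dimension of this restricted instance is $\tilde D = 1$, not $2$. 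I would invoke the machinery of Lemma~\ref{lemma:step1_2}: with $\tilde D = 1$, after $n_0$ label queries the estimation error of $\eta$ on the unclassified cells is $b_{n_0} = \tilde{\mathcal{O}}\big(n_0^{-\beta/(2\beta + 1)}\big)$, so taking $a \downarrow 1$ gives $b_{n_0} \asymp n_0^{-\beta/(2\beta+1)}$ up to logs. Finally, invert the Hölder bound on $\psi$: if $|\hat\eta - \eta| \leq b_{n_0}$ uniformly on a band of width $\asymp b_{n_0}^{1/\beta}$ around the true boundary line (using that $\psi$ is $(L,\beta)$-Hölder and continuously differentiable, so $\psi^{-1}$ is Lipschitz near $0$ with a constant controlled by $\psi'(0) \neq 0$, giving $|\psi^{-1}(s)| \lesssim |s|$ for small $s$), then the estimated boundary line lies within Hausdorff distance $\mathcal{O}(b_{n_0})$ of the true one, which translates (on the unit square, comparing two lines through bounded regions) into an angle/coefficient error $\|\hat{\bar w} - \bar w\|_2 = \mathcal{O}(b_{n_0})$. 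Unwinding, $\|\hat{\bar w}-\bar w\|_2 \leq \epsilon'$ is guaranteed once $n_0 = \mathcal{O}\big((1/\epsilon')^{(2\beta+1)/\beta}\big) = \mathcal{O}\big((1/\epsilon')^{2 + 1/\beta}\big)$, which is exactly the claimed rate after substituting $\epsilon' = \nu^{-1}(\epsilon/(D-1))$ (the function $\nu$ in \eqref{eq:nu_bn} being precisely the map $\epsilon' \mapsto$ resulting per-subproblem accuracy, whose inverse appears in the bound).

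The computational-feasibility claim — that this uses $\mathrm{poly}(D)$ queries — then follows because we solve $D-1$ subproblems, each two-dimensional with the stated $n_0$, so the total is $(D-1)\,n_0$, polynomial in $D$; and each two-dimensional run only maintains $\mathcal{O}(\mathrm{poly}\log n_0)$ active cells at a time (the unclassified slab around a line at depth $h$ contains $\mathcal{O}(1)$ cells per level), so the per-step cost is polynomial.

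The main obstacle I expect is the last step: converting a uniform $\ell_\infty$ error $b_n$ on $\hat\eta$ into an $\ell_2$ error on the recovered direction $\bar w$. This requires (i) that the confidence band certified by the algorithm around the boundary is genuinely narrow — which needs the refinement rule to have driven cells adjacent to the line down to scale $\asymp b_n^{1/\beta}$, i.e.\ one must track not just the \emph{value} error but the \emph{spatial} resolution near the line, and for this the assumption that $\psi$ is \emph{continuously differentiable} (not merely Hölder) at $0$ is what I would lean on, so that $\psi^{-1}$ is locally Lipschitz and the level-set geometry is non-degenerate; and (ii) a clean geometric lemma that two lines intersecting $[0,1]^2$ whose graphs are within sup-distance $\delta$ have direction vectors within $\mathcal{O}(\delta)$ — straightforward but it needs the boundary line to actually cross a non-trivial portion of the square, which should be arranged by the same normalization used in \citet{chen2017near}. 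Everything else (the dimension-coupling reduction, the $\tilde D = 1$ bound from Lemma~\ref{lemma:step1_2}, the confidence intervals from Lemma~\ref{lemma:step_1_1}, and the $2/n$ failure probability from a union bound over $\Omega_1$ and the two-dimensional analog) is essentially bookkeeping on top of the already-established analysis of Algorithm~1.
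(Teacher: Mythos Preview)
Your reduction via dimension coupling to $D-1$ two-dimensional subproblems is correct and matches the paper's overall strategy. However, your analysis of the two-dimensional subproblem diverges from the paper's in a way that introduces a genuine gap.

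You run Algorithm~1 directly on the two-dimensional square and claim that, because the decision boundary is a line, the near-$\lambda$ dimension satisfies $\tilde{D}=1$. This is not correct for $\beta<1$. Recall that $\tilde{D}$ is defined through the $\zeta_1(r)\asymp r^{\beta}$-fattening: the set $\mathcal{X}_\lambda(\zeta_1(r))=\{x:|\eta(x)-\lambda|\le Cr^{\beta}\}$ is, under your own assumption that $\psi^{-1}$ is locally Lipschitz, a slab of width $\asymp r^{\beta}$ and length $\mathcal{O}(1)$. Since $r^{\beta}\gg r$ for small $r$, its $r$-packing number is $\asymp r^{\beta}/r^{2}=r^{-(2-\beta)}$, so $\tilde{D}=2-\beta$, not $1$. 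Plugging this into Lemma~\ref{lemma:step1_2} gives $b_{n_0}=\tilde{\mathcal{O}}\big(n_0^{-\beta/(\beta+2)}\big)$ and hence a sample complexity of $(1/\epsilon')^{1+2/\beta}$, strictly worse than the claimed $(1/\epsilon')^{2+1/\beta}$ whenever $\beta<1$.

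The paper sidesteps this issue by \emph{not} working in two dimensions at all. It restricts queries to the unit circle $S^{1}$, parametrizes by angle via $\phi:[0,2\pi)\to S^{1}$, and runs Algorithm~1 on the genuinely one-dimensional input space $[0,2\pi)$ with the induced regression function $\theta\mapsto\eta(\phi(\theta))$. On a one-dimensional input space one trivially has $\tilde{D}\le 1$, which yields $b_n=\mathcal{O}\big(n^{-\beta/(2\beta+1)}\big)$ and the stated exponent $2+1/\beta$. The function $\nu$ in~\eqref{eq:nu_bn} then encodes the conversion from the $\eta$-accuracy $b_n$ to the angular (hence directional) accuracy, via the local modulus of continuity of $\tilde{\psi}^{-1}$ near the thresholds; in your write-up you effectively collapse $\nu$ to the identity, which is only justified under the extra differentiability you impose, whereas the paper's statement keeps $\nu$ general.

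So the fix is not to sharpen your geometric lemma about lines in the square, but to change the input space of the subproblem from $[0,1]^2$ to a one-dimensional curve (the circle), after which the rest of your bookkeeping goes through.
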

\begin{proof}
Assume that $\|w^*\|=1$, and denote by $S^1$ the unit ball in $\mbb{R}^2$. Let $\phi:[0,2\pi) \mapsto S^1$ denote the injective mapping which takes angles to points in $S^1$. We can check that $\phi$ is $1-$Lipschitz, which implies that the mapping $\tilde{\psi} \coloneqq \psi\circ \phi$ is also $(L, \beta)$ H\"older and invertible. 

Now, we can apply Algorithm~1 with $\mc{X}=[0,2\pi)$ to construct $\lambda$ and $1-\lambda$ level set estimates of the mapping $\tilde{\psi}$. 
We know that with $n$ labelled samples, Algorithm~1 can obtain an estimate of $\eta$ with pointwise accuracy of $b_n = \mc{O}\lp n^{-\beta/(2\beta + 1)}\rp$. Now, by the definition of the map $\phi$, and the monotonicity of $\psi$, we know that there exist two values $\theta_{\lambda,1}$ and $\theta_{\lambda, 2}$ such that $\tilde{\psi}(\theta_{\lambda, i}) = \lambda$ for $i=1,2$. 
Next with the notation $E_1 = [\lambda-b_n, \lambda + b_n]$ and $E_2 = [1-\lambda - b_n, 1-\lambda + b_n]$, we introduce the term $\nu$ as 
\begin{equation}
    \label{eq:nu_bn}
  \nu(b_n) \coloneqq \max \left \{\sup_{z_1, z_2 \in E_1}|\tilde{\psi}^{-1}(z_1) - \tilde{\psi}^{-1}(z_2)|, \; \sup_{z_1, z_2 \in E_2}|\tilde{\psi}^{-1}(z_1) - \tilde{\psi}^{-1}(z_2)|  \right \}.  
\end{equation}
Note that the uniform continuity of $\tilde{\psi}$ implies that $\nu(b_n) \to 0$ as $n \to \infty$. 
Since by using estimates $\hat{\theta}_{\lambda, i}$ for $i=1,2$, we can construct estimate of $\theta^* = \phi^{-1}(w^*)$, we note that the error in estimating $w^*$ can be bounded by $\|w^* - \hat{w}\| \leq 1.|\hat{\theta}-\theta^*|\leq 2\nu (b_n)$. This implies that after $D-1$ applications of the two dimensional Algorithm~1 (with total number of labelled samples equal to $(D-1)n$), the estimation error $\|w^* - \hat{w}\| \leq \mc{O}\lp (D-1) \nu(b_n)\rp$ with probability at least $1-2(D-1)/n$. This implies that a sufficient number of labels required for estimating $w^*$ with accuracy $\epsilon$ is $\mc{O}\lp \frac{1}{\nu^{-1}(\epsilon/(D-1))}\rp^{2 + 1/\beta}$. 

\end{proof}

Under the additional assumption of continuously differentiable $\psi$, the required number of labels is $n= \mathcal{O}\lp \lp \frac{D-1}{\epsilon}\rp^{2 + 1/\beta} \rp$. 
We note that in obtaining the convergence rate for Proposition~\ref{prop:computationally_feasible}, we did not employ the monotonicity property of the regression function. Further improvement can be obtained by appropriately modifying the algorithm to perform a noisy binary search, similar to \citep{karp2007noisy}.

\newpage 
\section{Details from Section~\ref{sec:discussion}}
\label{appendix:discussion}

\subsection{Improved rates in active setting.}
\label{appendix:discussion_improved_rates}
Suppose that the marginal $P_X$ has a density $p_X$ w.r.t. the Lebesgue measure, and that the density is bounded below by a constant $c_0>0$ almost surely.
This implies that for any set $A \subset \X$, we have $\pr(X \in A) = P_X(A) \geq c_0 \text{Vol}(A)$. 

Here we show that under this assumption, we have $\tilde{D} \leq \max\{0, D - \alpha_0 \beta \}$.

Define $\lambda_j = 1/2 +(-1)^j \gd$ for $j=1,2$, and the set $\X_{\lambda_j}(\zeta_3(r)) \coloneqq \{ x \in \X \mid |\eta(x) - \lambda_j| \leq 12 L(v_1/(v_2\rho))^{\beta} r^{\beta} \}$. Then by the assumption~(MA), we have the following

\begin{align*}
    P_X\lp \X_{\lambda_j}(\zeta_1(r)) \rp \leq C_0 L^{\alpha_0} \lp \frac{v_1 r}{v_2\rho}\rp^{\beta \alpha_0} \leq \tilde{C}_1 r^{\beta \alpha_0} 
\end{align*}
for some constant $\tilde{C}_1>0$ depending on $L, v_1, v_2, \rho, C_0, \alpha_0, \beta$. 
Furthermore, by the additional assumption on $P_X$, for any $x \in \X$ and $r>0$, we have 
\begin{align*}
    P_X\lp B(x, r) \rp & \geq c_0 \text{Vol}\lp B(x,r) \rp = \tilde{C}_2 r^{D}
\end{align*}
for some constant $\tilde{C}_2 >0$ depending on $c_0$ and $D$. 
Thus for $r>0$, the $r$-packing number of the set $\mc{Z}_r \coloneqq \X_{\lambda_1}\lp \zeta_3(r) \rp \cup \X_{\lambda_2}\lp \zeta_3(r) \rp$ can be upper bounded as follows:
\begin{align*}
\tilde{C}_1 r^{\beta \alpha_0} &\geq    P_X\lp \mc{Z}_r \rp \geq M\lp \mc{Z}_r, r \rp \tilde{C}_2 r^{D} \\
\Rightarrow M\lp \mc{Z}_r, r \rp & \leq \frac{\tilde{C}_1}{\tilde{C}_2} r^{-(D-\beta \alpha_0)}.
\end{align*}
Now, by the definition of \emph{near-$\lambda$ dimension} we observe that $\tilde{D} \leq \max \{0, D-\beta \alpha_0\}$.


\subsection{Need for Detectability (DE) assumption.}
\label{appendix:discussion_DE}
The (DE) assumption ensures that in the regions near the threshold values, the marginal $P_X$ does not put arbitrary small mass. 
Without the (DE) assumption, there will exist joint distributions $P_{XY}$, which will place very small $P_X$ mass in a large region of the input space. 
Since  Algorithm~3 uses the empirical measure $\hat{P}_X$ in order to construct the \emph{unclassified} active set $\Xtu$, even with accurate empirical measures $\hat{P}_X$, for some problem instances the size of the unclassified region would be very large.  
Due to this there is a dependence on the ambient dimension $D$ in the convergence rates obtained for Algorithm~3 without the (DE) assumption. 

Consider the following one dimensional example with $\mc{X}= [0,B]$ for some $B>0$.(Figure~\ref{fig:figure1}). Suppose we have constructed the empirical measure $\hat{P}_X$ with a finite number of samples such that $\sup_{x \in \mc{X}}|F_X(x) - \hat{F}_X(x)|\leq s$ for some $s>0$. 
Suppose $P_X$ has a density $p_X$ such that $p_X(x) = a_1$ for $x \in [0,b_1]$, $p_X(x) = \epsilon$ for $x \in [b_1,b_3]$ and $p_X(x) = a2$ for $x \in [b_3,B]$.
Furthermore, let $b_2 \in (b_1,b_3)$ be the point such that $F_X(b_2) = \delta$. 
Since $\epsilon>0$, $b_1, b_3$ are arbitrary, we can select it in such a way to ensure that $b_3-b_1>B/2$ and $(b_3 - b_1)\epsilon<s$.

At any time $t$, Algorithm~3 constructs (upper and lower) estimates of $\gamma$ using the current estimate of the regression function $\eta$. 
As can be seen from Figure~\ref{fig:figure1}, even if $\eta$ were completely known to the algorithm, the estimated thresholds $\hgone \in [l_1, l_2]$ and $\hgtwo \in [g_1,g_2]$. 
Thus in the worst case the unclassified region will contain the interval $[b_1,b_3]$ of length  at least $B/2$. 

By using the polar-coordinate representation, we can extend this example to the general case of $D$ dimensions, in which we can show that the uncertainty region must contain a ball of sufficiently large radius. 
This implies that the packing dimension of the set $\cup_{\xhi \in \Xtu}\X_{h,i}$ will be equal to $D$. 

\begin{figure}[htb]
\label{fig:figure1}
\includegraphics[width=\textwidth]{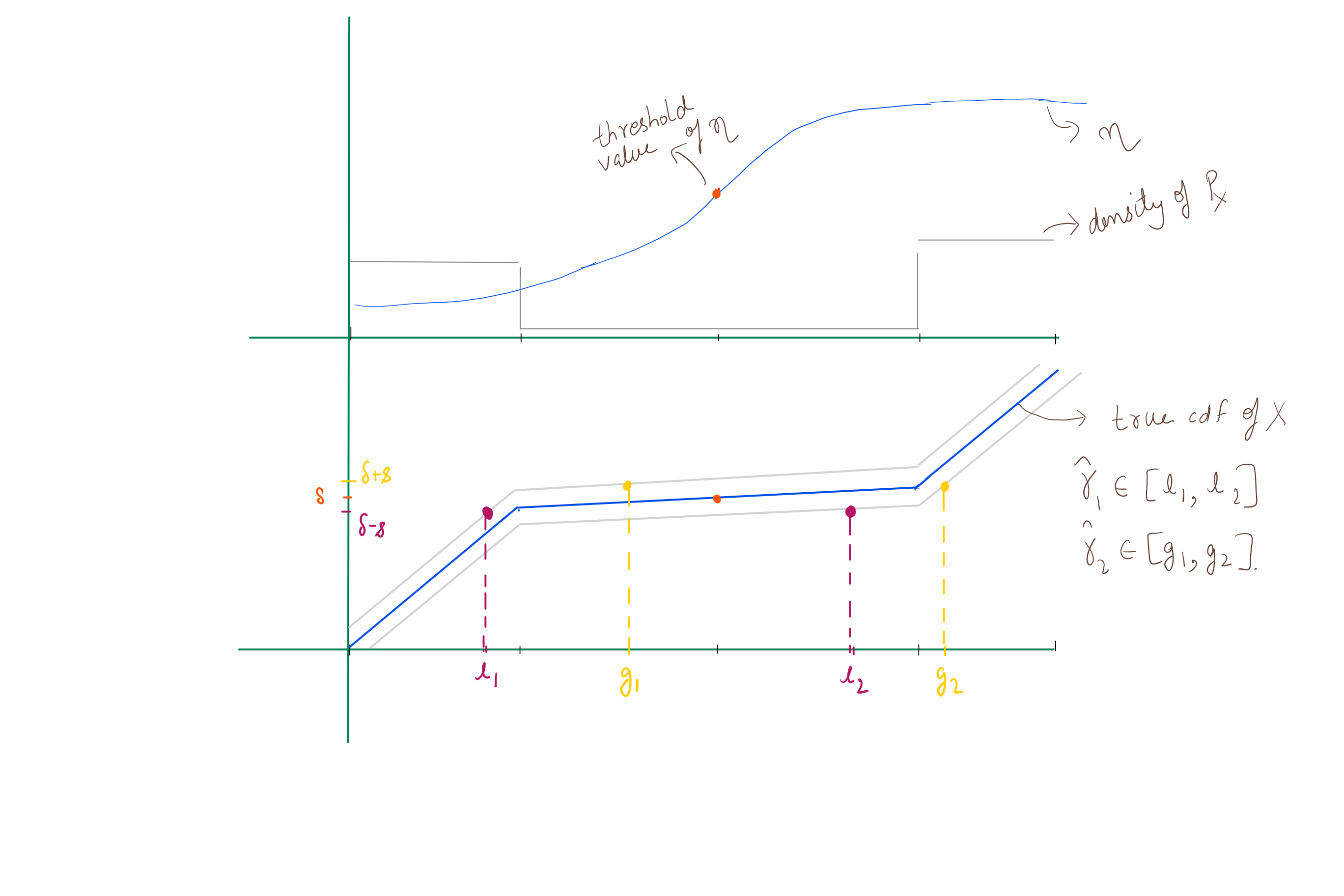}
\caption{The figure shows that if $P_X$ can place arbitrarily small mass near the threshold to be estimated, then even with $\hat{P}_X$ which is uniformly close to $P_X$, the distance between estimated upper and lower bounds on the threshold $\gd$ can be very large.}
\end{figure}

\end{document}